\documentclass{article}

\usepackage{microtype}
\usepackage{graphicx}
\usepackage{subcaption}
\usepackage{booktabs} 
\usepackage{hyperref}

\usepackage[preprint]{icml2026}

\usepackage{amsmath}
\usepackage{amssymb}
\usepackage{mathtools}
\usepackage{amsthm}
\usepackage[capitalize,noabbrev]{cleveref}
\usepackage{xspace}

\usepackage{tikz}
\usepackage{amsmath} 

\usepackage{multirow}
\usepackage{makecell}

\theoremstyle{plain}
\newtheorem{theorem}{Theorem}[section]
\newtheorem{proposition}[theorem]{Proposition}

\theoremstyle{definition}

\theoremstyle{remark}

\usepackage{dblfloatfix}

\usepackage{enumitem}

\usepackage{cuted}
\usepackage{caption}
\usepackage{xspace}

\usepackage{graphicx}
\usepackage{subcaption}

\newcommand{\Y}{\mathcal{Y}}
\newcommand{\A}{\mathcal{A}}
\newcommand{\R}{\mathbb{R}}

\newcommand{\ignore}[1]{}
\newcommand{\EOD}{\mathrm{EOD}}
\newcommand{\EO}{\mathrm{EO}}

\newcommand{\cobra}{\textsc{COBRA}\xspace}
\newcommand{\cmnist}{\textsc{C-MNIST}\xspace}
\newcommand{\cfmnist}{\textsc{C-FMNIST}\xspace}
\newcommand{\cifars}{\textsc{CIFAR10-S}\xspace}
\newcommand{\utkface}{\textsc{UTKFace}\xspace}
\newcommand{\bffhq}{\textsc{BFFHQ}\xspace}
\newcommand{\fulldata}{\textsc{Full}\xspace}
\newcommand{\skeww}{\textsc{Skew}\xspace}
\newcommand{\seper}{\textsc{Gap}\xspace}
\newcommand{\IPC}{\textsc{IPC}\xspace}

\newcommand{\DC}{\textsc{DC}\xspace}
\newcommand{\DM}{\textsc{DM}\xspace}
\newcommand{\CAFE}{\textsc{CAFE}\xspace}
\newcommand{\IDC}{\textsc{IDC}\xspace}

\newcommand{\fg}{\textsc{FG}} 
\newcommand{\bg}{\textsc{BG}} 

\begin{document}

\twocolumn[
  \icmltitle{Fair Dataset Distillation via Cross-Group Barycenter Alignment}
  

  \icmlsetsymbol{equal}{*}

  \begin{icmlauthorlist}
    \icmlauthor{Mohammad Hossein Moslemi}{yyy,vector}
    \icmlauthor{Nima Hosseini Dashtbayaz}{yyy}
    \icmlauthor{Zhimin Mei}{yyy}
    \icmlauthor{Bissan Ghaddar}{ivey,IE}
    \icmlauthor{Boyu Wang}{yyy,vector}
  \end{icmlauthorlist}




  \icmlaffiliation{yyy}{Western University, London, Ontario}
  \icmlaffiliation{vector}{Vector Institute, Toronto, Ontario}
  \icmlaffiliation{IE}{IE University, Segovia, Spain}

  \icmlaffiliation{ivey}{Ivey Business School, London, Ontario}

  \icmlcorrespondingauthor{Boyu Wang}{bwang@csd.uwo.ca}
  \icmlkeywords{Machine Learning, ICML}

  \vskip 0.3in
]

\printAffiliationsAndNotice{}  

\begin{abstract}
Dataset Distillation aims to compress a large dataset into a small synthetic one while maintaining predictive performance. 
We show that as different demographic groups exhibit distinct predictive patterns, the distillation process struggles to simultaneously preserve informative signals for all subgroups, regardless of whether group sizes are mildly or severely imbalanced.
Consequently, models trained on distilled data can experience substantial performance drops for certain subgroups, leading to fairness gaps. 
Crucially, these gaps do not disappear by merely correcting group imbalance, since they stem from fundamental mismatches in subgroup predictive patterns rather than from sample-size disparities alone.  
We therefore formally analyze the interaction between these two sources of bias and cast the solution as identifying a group-imbalance-agnostic barycenter of the predictive information that induces 
similar representations across all subgroups.  
By distilling toward this shared aggregate representation, we show that group fairness concerns can be reduced.  
Our approach is compatible with existing distillation methods, and empirical results show that it substantially reduces bias introduced by dataset distillation. Code is available at \href{https://github.com/mhmoslemi/COBRA}{https://github.com/mhmoslemi/COBRA}
\end{abstract}

\section{Introduction}

Dataset Distillation (DD) aims to compress a large training set into a much smaller synthetic dataset, while ensuring that models trained on this distilled set reach performance levels comparable to those trained on the original data~\cite{wang2018dataset}. 
However, as distilled datasets are increasingly deployed in high-stakes applications (e.g., healthcare and finance), matching aggregate accuracy alone is insufficient; the distilled data must also preserve predictive relationships across demographic groups to avoid degraded performance for minority populations.
Because distillation aggregates many real samples into a compact synthetic set, it can blur the underlying demographic subgroup structure and undermine the application of training-time fairness constraints that rely on group labels. This creates a need to explicitly design the distilled dataset to be fair by construction.


To explain how these disparities arise, dataset distillation is usually performed by aligning a representation of the real dataset with that of a synthetic one, and then optimizing the synthetic samples to minimize the distance between the two representations~\cite{zhao2020dataset, zhao2023dataset}.
Yet demographic groups can follow different representation patterns and may be unevenly represented in the original data.
Current distillation approaches match the synthetic representation to a single aggregate representation computed over all samples. When groups are imbalanced and their representations differ, this aggregation is dominated by the majority group, causing the distilled data to primarily capture majority group patterns and under-represent minorities. Consequently, models trained on the distilled set can perform worse on some demographic groups than on others.
Figure~\ref{fig:avg} illustrates this effect when groups vary in both representation and size.
In fairness-unaware distillation (Vanilla DD), the overall representation ($\times$-mark) drifts toward the majority groups.
This issue can be partially addressed by performing uniform sampling over demographic groups (Uniform DD). However, there may still be cases where certain groups are located very close to each other in the representation space. In such situations, their combined representation may match them especially well, drawing the distilled data toward these groups and consequently reducing performance for the other groups, as shown in Figure~\ref{fig:uniavg}

\begin{figure}[h]
    \centering
    \begin{subfigure}{0.3\columnwidth}
        \centering
        \includegraphics[width=\linewidth]{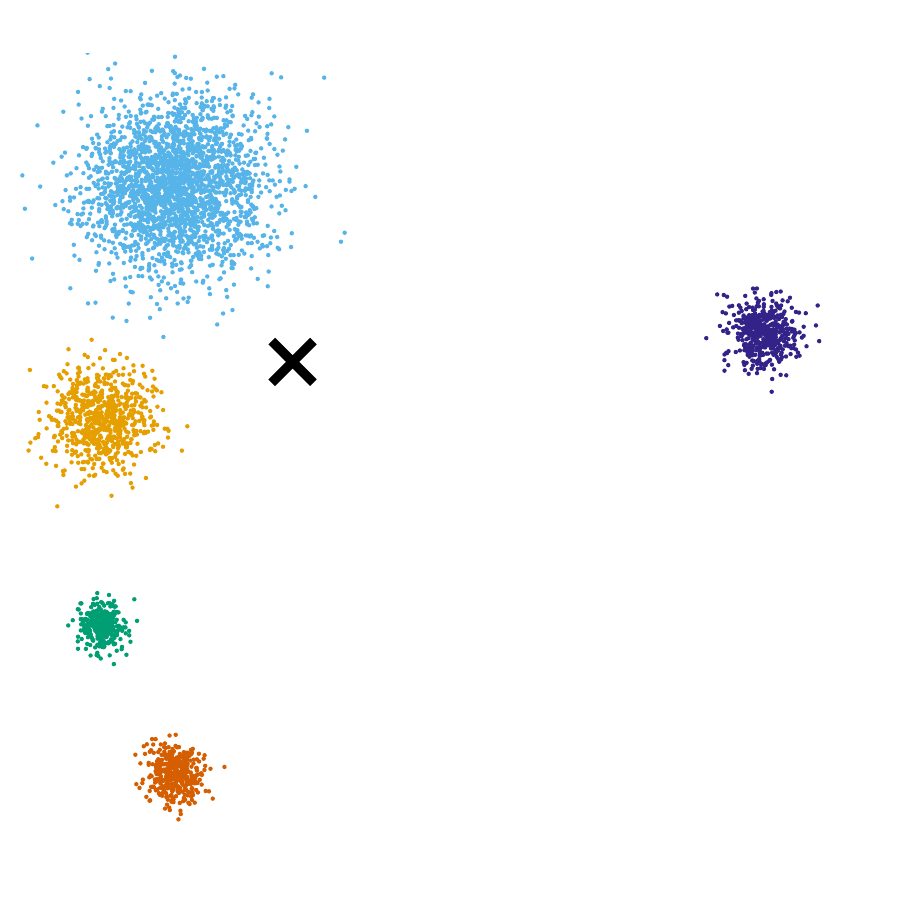}
        \caption{Vanilla DD}
        \label{fig:avg}
    \end{subfigure}\hfill
    \begin{subfigure}{0.3\columnwidth}
        \centering
        \includegraphics[width=\linewidth]{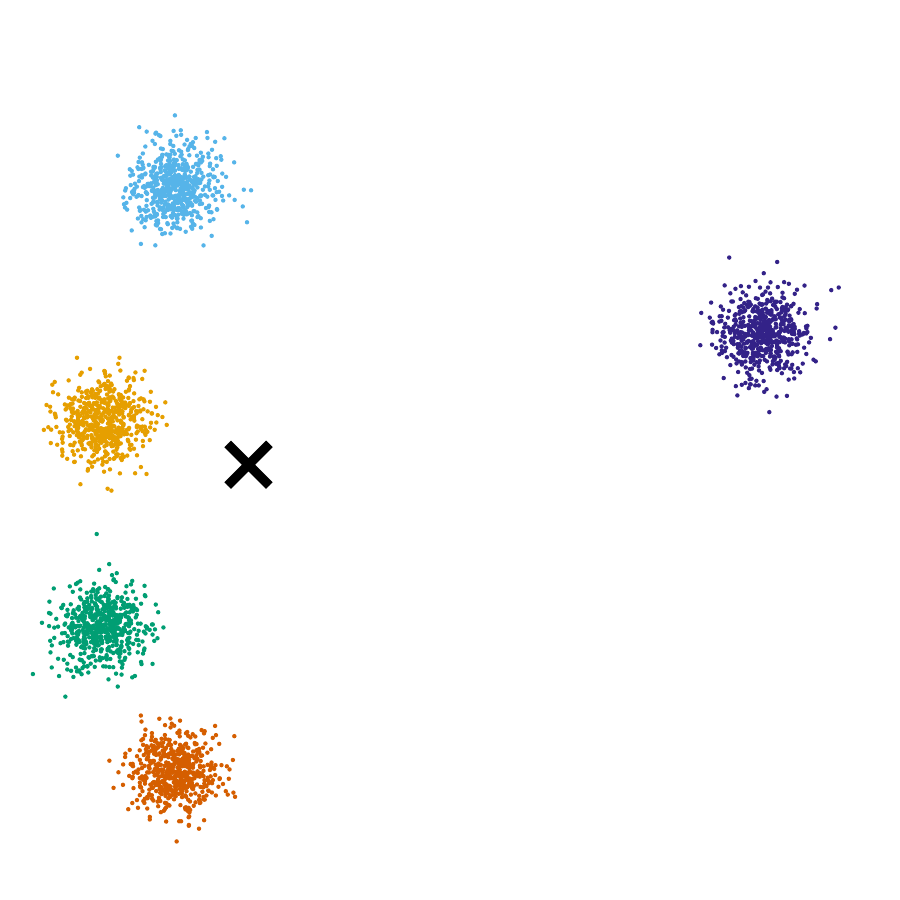}
        \caption{Uniform DD}
        \label{fig:uniavg}
    \end{subfigure}\hfill
    \begin{subfigure}{0.3\columnwidth}
        \centering
        \includegraphics[width=\linewidth]{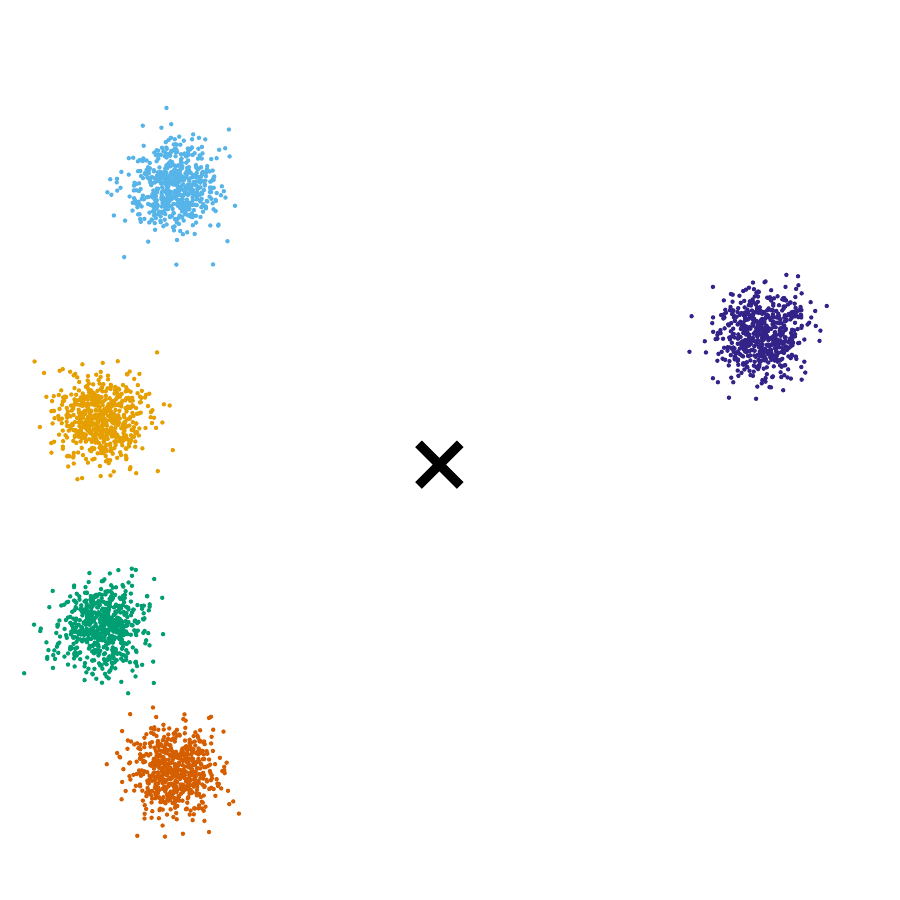}
        \caption{Barycenter DD}
        \label{fig:baryavg}
    \end{subfigure}

\caption{Representation target used by dataset distillation for different objectives. Colored clouds indicate subgroup representations, and ``$\times$'' denotes the overall aggregate target.}

\end{figure}


In this work, we establish theoretically and validate empirically that under dataset distillation, any bias present in the original data becomes amplified, and this amplification cannot be attributed solely to group imbalance or differences in subgroup representations; rather, it arises from the interaction between these two factors. 
We additionally propose an upper bound for the interaction between these two.
Figure~\ref{fig:combined_figures} illustrates how each effect contributes. In particular, Figure~\ref{fig:skew} isolates the role of varying group imbalance while keeping subgroup representations fixed, whereas Figure~\ref{fig:severityyy} analyzes the impact of increasing representational separation between subgroups while maintaining a constant imbalance ratio. Both figures indicate that each factor, on its own, has a distinct influence on the outcome.

To tackle this issue, we introduce \textbf{COBRA} (\textbf{C}ross-gr\textbf{O}up \textbf{B}a\textbf{R}ycenter \textbf{A}lignment), which frames fair distillation as a two-step process: first, we compute a barycenter of the representations of different groups in a manner that is agnostic to group imbalance, and then we distill synthetic data toward this barycenter. 
A barycenter is a type of average that minimizes the total distance, under a selected discrepancy measure, to a given set of points or distributions (Figure ~\ref{fig:baryavg}). As a result, this aggregate remains similarly distant from each subgroup, independent of the subgroup's size. Consequently, the distilled dataset captures a comparable level of information and representation quality for all subgroups, thereby mitigating bias. 
We further demonstrate that COBRA reduces our proposed upper bound on bias amplification by controlling the interaction between group imbalance and representational separation.
Consistent with this theory, COBRA achieves state-of-the-art fairness performance relative to vanilla DD and existing baselines. In particular, we
make the following contributions:
\begin{enumerate}[label=\arabic*., start=1]
\item We show that bias amplification in dataset distillation stems from the \emph{interaction} between group imbalance and representational separation across subgroups, and we derive an upper bound formalizing this effect.
\item We introduce a principled strategy to tighten this upper bound, yielding \textbf{COBRA}, a framework compatible with any existing dataset distillation method.
\item Comprehensive experiments confirm COBRA's effectiveness, showing consistent fairness gains across diverse datasets and distillation techniques.
\end{enumerate}
\paragraph{Conflict of Interest Disclosure.}
The authors declare no financial or other substantive conflicts of interest that could reasonably be perceived to influence the results reported in this work. 








\begin{figure}[t]
    \centering
    \begin{subfigure}{0.48\columnwidth}
        \centering
        \includegraphics[width=\linewidth]{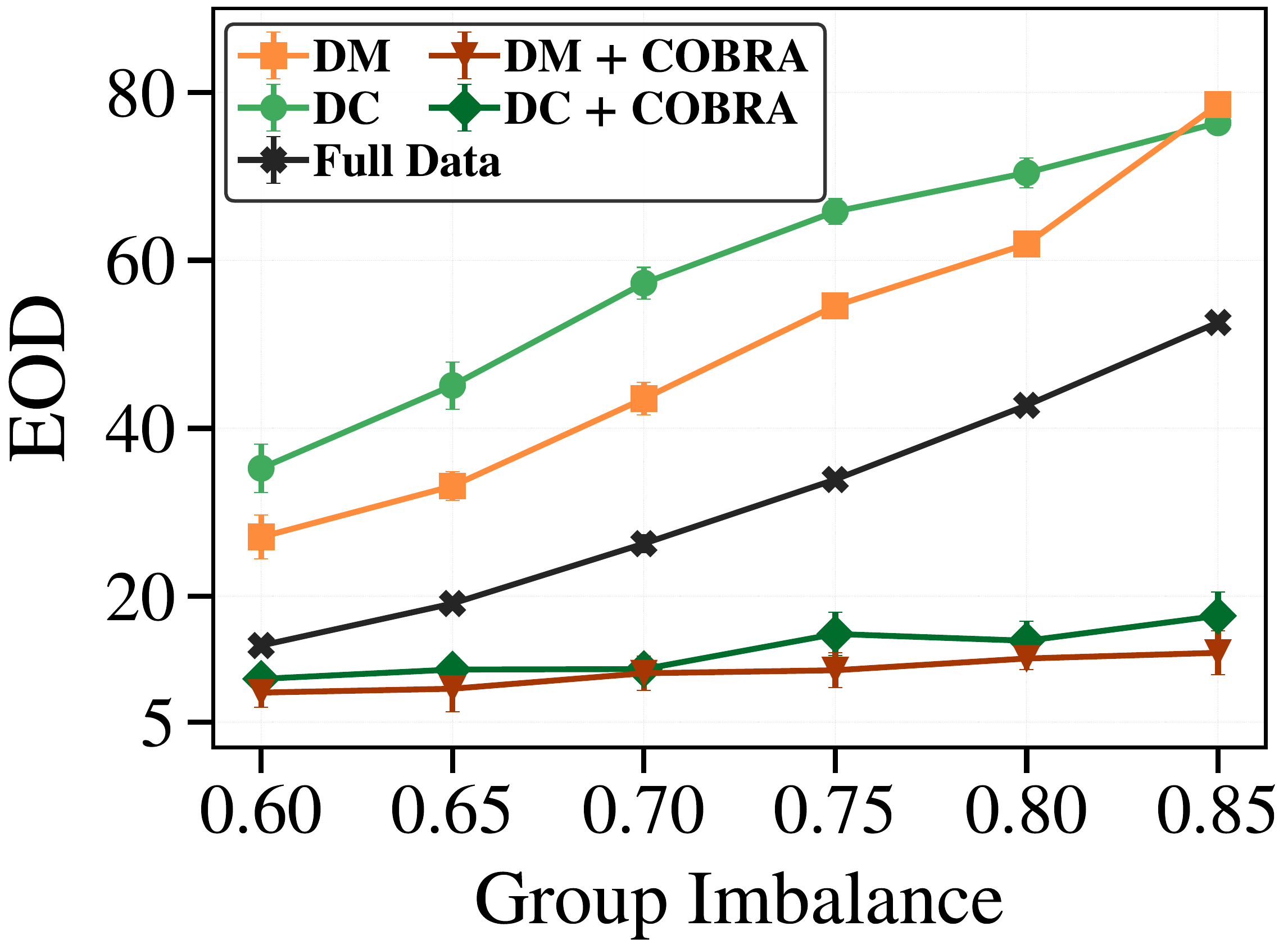}
        \caption{Varying imbalance}
        \label{fig:skew}
    \end{subfigure}\hfill
    \begin{subfigure}{0.48\columnwidth}
        \centering
        \includegraphics[width=\linewidth]{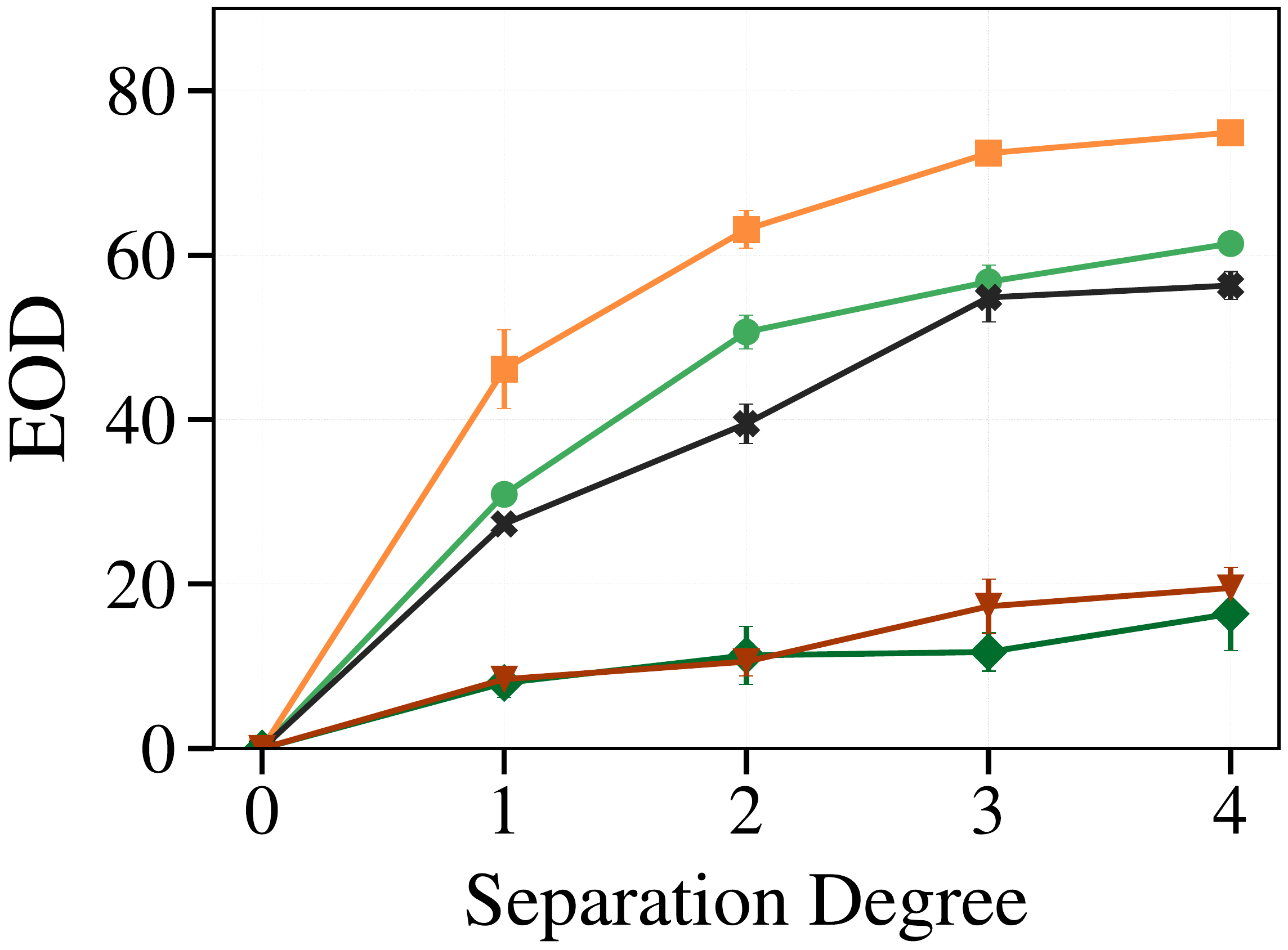}
        \caption{Varying separation}
        \label{fig:severityyy}
    \end{subfigure}
    

\caption{\textbf{Bias amplification through distillation.} Equalized odds difference ($\EOD\downarrow$) on \cifars at $\IPC=50$ when varying the group-imbalance (\ref{fig:skew}) with fixed representational separation, and subgroup representational separation (\ref{fig:severityyy}) with fixed imbalance. Black curves correspond to full-data training; colored curves to training on distilled sets. A separation level of 0 indicates identical subgroup representations (see Appendix~\ref{app:conflict_pattern} for details).}
\vspace{-1em}    
    \label{fig:combined_figures}
\end{figure}

\section{Related Work}

Coresets are small weighted subsets of the training data that approximate learning on the full dataset, with classical constructions for clustering~\cite{jubran2020sets, cohen2022towards, har2004coresets}, regression~\cite{meyer2022fast, maalouf2019fast}, and mixture models~\cite{lucic2018training}. However, though extending these ideas to deep learning remains challenging, even with gradient matching approaches that use weighted subsets~\cite{mirzasoleiman2020coresets, killamsetty2021grad, tukan2023provable}. Dataset distillation is related to coreset construction, but instead of selecting real samples, it learns \emph{synthetic} samples that mimic the full dataset's training dynamics. It was first posed as a bi-level optimization in~\cite{wang2018dataset}, but direct solutions are computationally costly. This has motivated efficient methods such as kernel based (KIP)~\cite{nguyen2020dataset}, gradient matching (DC, DSA, IDC)~\cite{zhao2020dataset,zhao2021dataset,kim2022dataset}, feature alignment (CAFE)~\cite{wang2022cafe}, distribution matching (DM)~\cite{zhao2023dataset}, and trajectory matching~\cite{cazenavette2022dataset,liu2024dataset,yang2024neural}, along with combinations of those~\cite{zhao2023improved,son2024fyi,10.24963/ijcai.2024/186}. 
In addition to these methods, approaches inspired by domain adaptation have been introduced to tackle distillation on larger scale datasets (D3S)~\cite{loo2024large}.


Group fairness seeks comparable outcomes across demographic groups and is crucial in high-stakes domains such as healthcare and finance ~\cite{dwork2012awareness, hardt2016equality}. 
Disparities can originate from demographic imbalance that skews learning toward majority groups~\cite{10.1145/3616865} and from conflicting group-dependent predictive mechanisms~\cite{shui2022learning}. 
Bias mitigation approaches include  pre-processing, in-processing, and post-processing~\cite{louizos2015variational, jung2022learning, jung2021fair, wang2020towards}. 
As one pre-processing approach, barycenters and optimal transport have been used to modify the initial data in order to reduce bias~\cite{gordaliza2019obtaining, charpentier2023mitigating, pirhadi2024otclean}.


Fairness in dataset distillation has so far received limited attention. Most related work focuses on class imbalance and long-tailed label distributions~\cite{cui2024mitigating, lu2024exploring, zhao2025distilling, cui2024ameliorate}.
Additionally,~\cite{vahidian2025group} address out-of-distribution challenges in dataset distillation, but their method is restricted to class labels and does not handle sensitive attributes or group fairness. To the best of our knowledge, \cite{zhou2025fairdd} is the only existing approach that explicitly aims at subgroup fairness in dataset distillation.
They frame the problem as one of group imbalance, compute a separate loss for each subgroup, and then average these losses. While this can mitigate some effects of group imbalance, bias is not determined solely by relative group sizes, and the proposed methods should not be viewed as addressing only this issue. In addition, their method works directly in loss space by averaging per-group losses that have already been computed, rather than first specifying a unified objective across groups and optimizing a single distillation loss. As a result, parameter updates can still differ between groups, limiting the overall effectiveness of the approach. 
For a more comprehensive discussion of related work, including fairness-aware coresets and data generation, we refer to Appendix~\ref{sec:RelWAPP}.

\section{Preliminaries}

\subsection{Dataset Distillation}
Let $\mathcal{D}$ be a distribution over $\mathcal{X}\times\mathcal{Y}$, where $\mathcal{X}\subset\mathbb{R}^d$ and $\mathcal{Y}=\{1,\dots,C\}$. Let $f_\theta:\mathcal{X}\to\mathcal{Y}$ be a predictor with parameters $\theta$, and $\ell(\cdot,\cdot)$ a task loss. For any dataset $U$, define the empirical risk $\mathcal{L}_U(\theta)= \frac{1}{|U|}\sum_{(x,y)\in U}\ell\!\left(f_\theta(x),y\right)$, and let $\theta_U\in\arg\min_{\theta}\mathcal{L}_U(\theta)$ denote an ERM solution,
and write $f_{\theta_U}$ for the resulting predictor.
Given a dataset $T=\{(x_i,y_i)\}_{i=1}^{|T|}$ drawn from $\mathcal{D}$, dataset distillation (DD) seeks a much smaller synthetic set $S$ with $|S|\ll|T|$ such that $f_{\theta_S}$ performs comparably to $f_{\theta_T}$ on $\mathcal{D}$, i.e.,
\[
\mathbb{E}_{(x,y)\sim \mathcal{D}}\!\left[\ell\!\left(f_{\theta_T}(x),y\right)\right]
\simeq
\mathbb{E}_{(x,y)\sim \mathcal{D}}\!\left[\ell\!\left(f_{\theta_S}(x),y\right)\right].
\]

A standard formulation casts this as a bi-level optimization problem \cite{wang2018dataset}:
\begin{equation}
\label{eq:DD-bilevel}
S^{*}
= \arg\min_{S}  \mathcal{L}_{T}(\theta_{S})
\quad\text{s.t.} \quad
\theta_{S}
= \arg\min_{\theta}  \mathcal{L}_{S}(\theta).
\end{equation}
Directly solving~\eqref{eq:DD-bilevel} is computationally expensive.
A common practical alternative is to optimize \emph{surrogate objectives}
by matching \emph{training signals} (or statistics) induced by $T$ and $S$ under the same learning dynamics
\cite{zhao2020dataset}.
The key intuition is that, for a locally smooth predictor, if training on $S$ leads to parameters or
representations close to those obtained by training on $T$, then the resulting predictors should behave
similarly on real data (and generalize similarly) \cite{zhao2020dataset}.
Concretely, let
$\phi(x;\theta)$ denote a representation of input $x$ from a network with parameters $\theta$.
Depending on the method, $\phi$ may correspond to gradients (DC) \cite{zhao2020dataset}, embeddings
(DM) \cite{zhao2023dataset}, intermediate features (CAFE) \cite{wang2022cafe}, or training trajectories (MTT) ~\cite{cazenavette2022dataset}.

Let \(U_y := \{ (x,y') \in U:\; y' = y\}\) be the samples in \(U\) with label \(y\), and define the class-conditional target statistic
\[
\Phi_{U_y} := \frac{1}{|U_y|}\sum_{x\in U_y} \phi(x;\theta_U).
\]
Surrogate objectives optimize $S$ by matching these statistics of $T$ and $S$ across classes. Specifically, let $D(\cdot,\cdot)$ be a distance function (e.g., MSE, cosine similarity, MMD, or the $\ell_1$ norm). Then $S^*$ minimizes the following loss:
\begin{equation}
\label{eq:dmf}
\mathcal{L}(T,S):= \sum_{y\in\mathcal{Y}} D\!\left(\Phi_{T_y}, \Phi_{S_y}\right).
\end{equation}

\subsection{Group Fairness}
\label{sec:fairBG}

Our goal is to design a model-agnostic algorithm that improves \emph{group fairness}.
We study {group fairness} 
using the notion of \emph{equalized odds} ($\EO$), which requires conditional independence
$A \perp\!\!\!\perp \hat{Y}\mid Y$.
Here $\mathcal{A}=\{a_1,a_2,\dots,a_K\}$ denotes the set of demographic groups, and $A$ is the protected attribute.
A sample $(x,y)$ belongs to group $a_i$ if $A=a_i$.
Let $\hat{Y}=f(X)$ represent the model's prediction and define
$p^{\EO}_y(a):=\Pr(\hat{Y}=y \mid Y=y,\; A=a)$.
Following~\cite{jung2021fair}, we quantify violations of equalized odds using the maximum $\EO$ gap:
\begin{equation}
\EOD = \sup_{y\in\Y,a_i,a_j\in\A}\Big|p^{\EO}_y(a_i)-p^{\EO}_y(a_j)\Big|.
\end{equation}
In dataset distillation, we cannot impose constraints on $\EOD$ directly because the downstream model is not available. Instead, we perform a label- and group-aware refinement when generating the distilled data. The goal is to better align the class-conditional information across different groups, so that a broad variety of downstream models trained on this fixed dataset yield smaller $\EO$ disparities under the above metrics.

\section{Fair Dataset Distillation}
\label{sec:method}
\subsection{Bias Mechanisms in Dataset Distillation}
\label{sec:biasrooot}
We characterize how bias arises in distilled data by explicitly explaining how it enters the distillation objective in~\eqref{eq:dmf}.
Given a dataset $T$, a label $y$, and a subgroup attribute $a\in\A$, we define $T_{a\mid y} := \{(x,y) \in T_y :\, A(x)=a\}$ and the corresponding class-conditional subgroup proportion $\pi_{a\mid y} := \frac{|T_{a\mid y}|}{|T_y|}$. For each subgroup, we define the class-conditional statistic
\[
\Phi_{T_{a\mid y}}
:= \frac{1}{|T_{a\mid y}|}\sum_{x\in T_{a\mid y}} \phi(x;\theta_T).
\]
Substituting these quantities into~\eqref{eq:dmf}, for each $y \in \Y$:
\begin{equation*}
\label{eq:L2}
\mathcal{L}_y(T,S):= D\left(\Phi_{T_y}, \Phi_{S_y}\right)
 =  D\big(\sum_{a\in\A} \pi_{a\mid y}\,\Phi_{T_{a\mid y}}, \,\Phi_{S_y}\big).
\end{equation*}
Let $D$ be the MSE, $D(u,v)=\tfrac{1}{2}||u-v||_2^2$. Then, for each $y\in\Y$, the SGD update of $\Phi_{S_y}$ with step size $\eta_t$ is
\begin{align*}
\nabla_{\Phi_{S_y}} \mathcal{L}_y(T,S)
&= \Phi_{S_y} - \sum_{a \in \mathcal{A}} \pi_{a \mid y}\, \Phi_{T_{a \mid y}}, \\
\Rightarrow\quad \Phi_{S_y}^{(t+1)}
&= \Phi_{S_y}^{(t)} - \eta_t \nabla_{\Phi_{S_y}} \mathcal{L}_y(T,S) \nonumber\\
&= (1-\eta_t)\Phi_{S_y}^{(t)} + \eta_t \sum_{a \in \mathcal{A}} \pi_{a \mid y}\, \Phi_{T_{a \mid y}} .
\end{align*}
This implies that
$\Phi_{S_y}$ is iteratively pulled toward the class-conditional mixture
$\sum_{a\in\A}\pi_{a\mid y}\Phi_{T_{a\mid y}}$, and at convergence
\begin{equation}
\label{eq:phiSy_star}
\Phi_{S_y}^\star=\sum_{a\in\A}\pi_{a\mid y}\,\Phi_{T_{a\mid y}},
\end{equation}
When the collection $\{\Phi_{T_{a\mid y}}\}_{a\in\A}$ is misaligned and the subgroup-conditional statistics are separated in representation space, the distilled statistic $\Phi_{S_y}$ cannot match all of them at once. We therefore define the subgroup-specific residual as
\begin{equation}
    \label{eq:resssss}
    \Delta_{a\mid y}^\star := \Phi_{T_{a\mid y}} - \Phi_{S_y}^\star.
\end{equation}
Since $\EOD$ measures prediction discrepancies across subgroups \emph{conditional on the true label} $y$, and the distillation objective drives $\Phi_{S_y}$ toward the mixture in~\eqref{eq:phiSy_star}, any predictor subsequently trained on $S$ is implicitly encouraged to fit class-conditional structure centered at $\Phi_{S_y}^*$. 
When the residuals $\Delta_{a\mid y}^\star$ differ across $a$, the conditional score distributions for distinct subgroups are shifted in different directions, which induces subgroup-specific conditional error patterns and thereby worsens $\EOD$. Intuitively, a larger $||\Delta_{a\mid y}^\star||_2$ signals that subgroup $a$ is inadequately represented for label $y$, increasing its conditional error under a model learned from $S$.
More formally, by expanding $\Delta_{a\mid y}^\star$, we obtain
\[
\Delta_{a\mid y}^\star
=\sum_{a'\neq a}\pi_{a'\mid y}\big(\Phi_{T_{a\mid y}}-\Phi_{T_{a'\mid y}}\big),
\]
and consequently,
\begin{equation}
\label{eq:deltasize}
\|\Delta_{a\mid y}^\star\|_2 \;\le\; \sum_{a'\neq a}\pi_{a'\mid y}\,\|\Phi_{T_{a\mid y}}-\Phi_{T_{a'\mid y}}\|_2.    
\end{equation}
This indicates that the residual becomes large due to the interaction of two effects: (i) severe subgroup imbalance and (ii) a large separation between subgroup statistics in the representation space.
In either scenario, $\Delta_{a\mid y}^\star$ can be large for certain label–subgroup combinations, which degrades conditional accuracy and thus harms $\EOD$.

\subsection{Subgroup Barycentric Distillation}
\label{sec:COBRA}

We previously made the failure mode of distillation for fairness explicit in Equation~\eqref{eq:deltasize}.
Motivated by this bound, we introduce a simple adjustment to the class-conditional target that seeks to reduce the impact of both group imbalance and representation separation.
We replace the target $\Phi_{T_y}$ in~\eqref{eq:dmf} with the \emph{barycenter} of the subgroup-specific representations, where we formulate the barycenter as
\begin{equation}
    m_y^\star = \arg\min_{m}\; \sum_{a\in\mathcal{A}} w_a\, d\left(\Phi_{T_{a\mid y}},\, m \right),\,\, \forall y\in\mathcal{Y}.
\end{equation}
Here, $d$ determines the notion of barycenter (e.g., cosine distance or MSE), and $\{w_a\}_{a\in \A}$ represents the importance weights.
We select uniform weights $w_a$ to eliminate dependence on $\pi_{a\mid y}$, thereby preventing majority subgroups from dominating the target.
Furthermore, because $m_y^\star$ minimizes the total deviation from all subgroup statistics, it serves as the most central compromise when the $\{\Phi_{T_{a\mid y}}\}$ are dispersed, which contracts differences in $\|\Delta_{a\mid y}\|_2$ and thereby reduces subgroup-specific error gaps.
We now formalize \emph{COBRA}, our proposed fair distillation objective.

\paragraph{COBRA: Cross-group Barycenter Alignment.}
We instantiate subgroup barycentric distillation via the objective
\begin{equation}
\label{eq:COBRA-bilevel}
\mathcal{L}_{\textsc{COBRA}}(T,S) :=
\sum_{y\in\mathcal{Y}} D\left(m_y^\star,\;\Phi_{S_y}\right)
\end{equation}
\[
\text{s.t.}\quad
m_y^\star \in \arg\min_{m}\; \sum_{a\in\mathcal{A}} d\left(\Phi_{T_{a\mid y}},\,m \right),
\quad \forall y\in\mathcal{Y}.
\]
For each class $y$, we first compute a cross-group {aggregated} target representation $m_y^\star$ as the barycenter of the subgroup-specific statistics $\{\Phi_{T_{a\mid y}}\}_{a\in\mathcal{A}}$. We then align the distilled class-conditional representation $\Phi_{S_y}$ with this barycentric target.
Under the same assumptions as in Section~\ref{sec:biasrooot}, the COBRA objective attains its minimum at
$\Phi_{S_y}^\star = m_y^\star$, meaning that the distilled data matches the barycentric target representation.
We now present a theorem showing that the objective $\mathcal{L}_{\textsc{COBRA}}(T,S)$ decreases the residuals in ~\eqref{eq:resssss}.

\begin{theorem}
\label{thm:max-residual}
Let $d(u,v)=\|u-v\|_Q^2$ for some positive definite matrix $Q\succ 0$, and fix an arbitrary $y\in\Y$.
Denote by $m_y^\star$ the COBRA barycenter defined in~\eqref{eq:COBRA-bilevel} and consider
$m_y^{\mathrm{van}}:=\sum_{a\in\A}\pi_{a\mid y}\Phi_{T_{a\mid y}}$ as the
vanilla target from~\eqref{eq:phiSy_star}. Define the class-conditional
residuals
\[
\Delta_{a\mid y}^{\textsc{C}}:=\Phi_{T_{a\mid y}}-m_y^\star,
\qquad
\Delta_{a\mid y}^{\textsc{V}}:=\Phi_{T_{a\mid y}}-m_y^{\mathrm{van}}.
\]
Assume there exists $a^\dagger\in\arg\max_{a\in\A}\|\Delta_{a\mid y}^{\textsc{C}}\|_Q$
such that, letting $s_y:=m_y^{\mathrm{van}}-m_y^\star$,
\begin{equation}
\label{eq:antipodal_cond}
\langle \Delta_{a^\dagger\mid y}^{\textsc{C}},\, s_y\rangle_Q \;\le\; 0,
\qquad
\text{where }\langle u,v\rangle_Q:=u^\top Q v .
\end{equation}
Then COBRA does not increase the worst-case residual:
\begin{equation}
\label{eq:max_compare}
\max_{a\in\A}\|\Delta_{a\mid y}^{\textsc{C}}\|_Q
\;\le\;
\max_{a\in\A}\|\Delta_{a\mid y}^{\textsc{V}}\|_Q .
\end{equation}
\end{theorem}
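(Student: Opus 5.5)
The plan is to compare the two residuals at the single index $a^\dagger$ and then pass to maxima. First I identify the COBRA barycenter explicitly. With $d(u,v)=\|u-v\|_Q^2$ and uniform weights, the objective $m\mapsto\sum_{a\in\A}\|\Phi_{T_{a\mid y}}-m\|_Q^2$ is strictly convex because $Q\succ 0$. Its first-order condition is $2Q\sum_a(m-\Phi_{T_{a\mid y}})=0$, which gives $m_y^\star=\frac{1}{|\A|}\sum_{a}\Phi_{T_{a\mid y}}$. This explicit form is not essential to the argument. It does show the barycenter is well defined and unique, so that $\Delta^{\textsc{C}}_{a\mid y}$ and $s_y$ are unambiguous.

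The key algebraic step is to rewrite the vanilla residual in terms of the COBRA residual:
\[
\Delta_{a\mid y}^{\textsc{V}}=\Phi_{T_{a\mid y}}-m_y^{\mathrm{van}}=\big(\Phi_{T_{a\mid y}}-m_y^\star\big)-\big(m_y^{\mathrm{van}}-m_y^\star\big)=\Delta_{a\mid y}^{\textsc{C}}-s_y .
\]
Expanding the $Q$-norm then gives
\[
\|\Delta_{a\mid y}^{\textsc{V}}\|_Q^2=\|\Delta_{a\mid y}^{\textsc{C}}\|_Q^2-2\langle\Delta_{a\mid y}^{\textsc{C}},s_y\rangle_Q+\|s_y\|_Q^2 .
\]
At $a=a^\dagger$, condition~\eqref{eq:antipodal_cond} makes the cross term nonnegative, and $\|s_y\|_Q^2\ge 0$. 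Hence $\|\Delta_{a^\dagger\mid y}^{\textsc{V}}\|_Q\ge\|\Delta_{a^\dagger\mid y}^{\textsc{C}}\|_Q$.

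To conclude, I chain the inequalities:
\[
\max_a\|\Delta^{\textsc{C}}_{a\mid y}\|_Q=\|\Delta^{\textsc{C}}_{a^\dagger\mid y}\|_Q\le\|\Delta^{\textsc{V}}_{a^\dagger\mid y}\|_Q\le\max_a\|\Delta^{\textsc{V}}_{a\mid y}\|_Q .
\]
The equality holds because $a^\dagger$ is a maximizer of the COBRA residual norm, and the last inequality is trivial. This yields~\eqref{eq:max_compare}.

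There is essentially no hard step. The only point needing care is to use the assumption in the correct direction. The sign of $s_y=m_y^{\mathrm{van}}-m_y^\star$ determines whether the cross term carries $-2\langle\cdot,\cdot\rangle_Q$ or $+2\langle\cdot,\cdot\rangle_Q$, and this must be checked against the inequality in~\eqref{eq:antipodal_cond}. I would also use that $\langle\cdot,\cdot\rangle_Q$ is a genuine inner product, which requires $Q\succ0$ and in the bilinear-form sense symmetry. If $Q$ is not assumed symmetric, I would replace it by its symmetric part $\tfrac12(Q+Q^\top)$. This leaves both $\|\cdot\|_Q$ and $\langle u,u\rangle_Q$ unchanged, but the cross term in the expansion becomes $\langle u,v\rangle_Q+\langle v,u\rangle_Q$, so I would state symmetry of $Q$ as implicit.
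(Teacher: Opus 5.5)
Your proposal is correct and follows the paper's proof essentially step for step. Both arguments use the shift identity $\Delta^{\textsc{V}}_{a\mid y}=\Delta^{\textsc{C}}_{a\mid y}-s_y$, expand the squared $Q$-norm at $a^\dagger$ with the sign condition to obtain the pointwise comparison, and then chain through the two maxima. Your remark that symmetry of $Q$ is implicit matches the paper's appendix, which explicitly takes $Q$ to be symmetric positive definite.
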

See Appendix~\ref{sec:proof_thm_max_residual} for the detailed proof.
%




Condition~\eqref{eq:antipodal_cond} requires that the subgroup achieving
the COBRA worst-case residual must lie on a non-positive $Q$-angle with the
``imbalance shift'' $s_y=m_y^{\mathrm{van}}-m_y^\star$. In other words,
under vanilla, the target is pulled toward certain subgroups and
pushed away from the
worst-case one. In this setting, COBRA is guaranteed
to reduce the worst-case label-conditional
subgroup mismatch in the
representation space relative to vanilla. Because
$\EOD$ captures
label-conditional disparities across subgroups, this contraction directly
mitigates the geometric mechanism identified in
Sec.~\ref{sec:biasrooot} as causing $\EOD$ degradation.

\section{Results}
\label{sec:results}

\subsection{Fairness Aware Dataset Distillation}
\label{sec:results_protocol}
\textbf{Datasets}
We evaluate \cobra on biased classification benchmarks that cover both controlled settings and real-world biases. 
Our controlled suite includes Colored-MNIST (\cmnist) and Colored-Fashion-MNIST (\cfmnist), each evaluated under foreground and background spurious correlations, as well as \cifars with a combined spurious cue. 
For \cmnist and \cfmnist, we create a spurious color attribute by tinting either the foreground object or the background using one of ten fixed colors. 
For \cifars, we define a binary spurious attribute by converting images to grayscale or color, and we correlate this attribute with the class label using a fixed group imbalance ratio. The test split is balanced across groups. 
We also evaluate on real-world datasets with annotated protected attributes, namely \utkface and \bffhq. 
We additionally report a \fulldata baseline trained on the complete training set without distillation.
For brevity, we refer to group imbalance as \skeww. 
Further information about the datasets is provided in Appendix~\ref{app:datasets}.

\textbf{Setup}
Across all benchmarks, we distill a compact labeled set $S$ under a fixed images-per-class (\IPC) budget. A new evaluation network is then trained from scratch on $S$ and evaluated on the standard test split.
We focus on methods that form the backbone of many subsequent approaches and represent distinct architectural paradigms for dataset distillation. In particular, we compare against representative baselines spanning gradient matching (\DC, \IDC)~\cite{zhao2020dataset,kim2022dataset}, distribution matching (\DM)~\cite{zhao2023dataset}, and feature alignment (\CAFE)~\cite{wang2022cafe}, using their official implementations.
We quantify fairness using the metric introduced in Section~\ref{sec:fairBG}, denoted as $\EOD(\downarrow) \in [0, 100]$. Correspondingly, we report accuracy as $\mathrm{Acc}(\uparrow) \in [0, 100]$. We compare our results with those reported in prior work~\cite{zhou2025fairdd}, as well as with Vanilla DDs and the \fulldata scenario.
All reported numbers are the averages over 10 independent runs; we only report the means here, while the corresponding standard deviations are provided in Appendix~\ref{add:addexpstdall}.

\textbf{Choice of Barycenter Discrepancy}
Optimizing the COBRA objective in~\eqref{eq:COBRA-bilevel} is computationally demanding, as it requires repeatedly solving for the barycenter \(m_y^\star\) throughout the training of \(S\) (and after each surrogate re-initialization). To make this step more tractable, we specialize \(d\) to be a squared \(Q\)-norm with \(Q \succ 0\), i.e., \(d(u,v) = \|u - v\|_Q^2\), so that the inner barycenter admits a closed-form expression and yields numerically stable solutions.
Under this choice, the inner optimization problem has a unique minimizer given by the uniform mean over subgroups,
$m_y^\star = \frac{1}{|\A|} \sum_{a \in \A} \Phi_{T_{a \mid y}},$
which corresponds to the subgroup-wise average representation. A detailed justification of this result is provided in Appendix~\ref{sec:proof_prop_mean_barycenter}. Furthermore, we explore alternative definitions of \(d\) and their empirical impact in an ablation study presented in Section~\ref{sec:abalationDisc}.




\textbf{Discussion.}
Table~\ref{tab:bigall} shows that incorporating \cobra into Vanilla DD objectives reliably yields distilled datasets that depend less on protected attributes at test time and often attain higher predictive performance than both Vanilla DD and its FairDD-augmented variant under the same backbone objective. Across controlled spurious-correlation benchmarks, \cobra consistently reduces large gaps in $\EOD$ while preserving or enhancing overall $\mathrm{Acc}$. For example, on \cmnist (\bg) with \DM at \IPC$=50$, \cobra lowers $\EOD$ from $100.0$ to $7.46$ and raises accuracy from $48.8$ to $96.8$. A similar pattern appears on \cifars with \DM at \IPC$=100$, where $\EOD$ drops from $82.87$ to $9.37$ and accuracy increases from $45.4$ to $62.4$. These fairness gains also persist on real-world data. On \bffhq with \DM at \IPC$=100$, \cobra reduces $\EOD$ from $63.47$ to $7.87$ while improving accuracy from $65.8$ to $74.2$, indicating that the distilled dataset is both more group-fair and more predictive on the balanced test split.
An important implication of these findings is that Vanilla DD can \emph{amplify} existing biases in original datasets.
As \IPC decreases, this phenomenon intensifies because there is a reduced capacity to represent minority groups and less opportunity to encode within-class diversity. As a result, the distilled dataset can over-represent majority-group evidence and encode stronger spurious correlations than the full dataset, thereby enlarging inter-group performance gaps and yielding large $\EOD$ values. Standard deviations for every entry in Table~\ref{tab:bigall}, computed under the same 10-run protocol, are reported in Tables~\ref{tab:appendix_acc} and~\ref{tab:appendix_maxeod} of Appendix~\ref{add:addexpstdall}.

\begin{table*}[t!]
\centering
\normalsize
\caption{Fairness-aware dataset distillation on biased benchmarks. Each cell reports $\EOD\downarrow (\mathrm{Acc}\uparrow)$, averaged over 10 runs. \textbf{Bold} highlights the fairness variant achieving the lowest $\EOD$ (best fairness) under the same distillation objective for a given dataset and \IPC.}
\resizebox{\textwidth}{!}{%
  \renewcommand{\arraystretch}{1.07}
  \setlength{\tabcolsep}{2.2pt}
  \label{tab:bigall}
\begin{tabular}{c|c|ccc|ccc|ccc|ccc|c}
\toprule
\multicolumn{1}{c}{\multirow{2}{*}{\textbf{Dataset}\vspace{-4ex}}} &
\multicolumn{1}{c}{\multirow{2}{*}{\textbf{\IPC}\vspace{-4ex}}} &
\multicolumn{3}{c}{\textbf{\DC}} &
\multicolumn{3}{c}{\textbf{\DM}} &
\multicolumn{3}{c}{\textbf{\CAFE}} &
\multicolumn{3}{c}{\textbf{\IDC}} &
\multicolumn{1}{c}{\multirow{2}{*}{\textbf{\fulldata}\vspace{-4ex}}}
\\
\cmidrule(lr){3-5}\cmidrule(lr){6-8}\cmidrule(lr){9-11}\cmidrule(lr){12-14}
\multicolumn{1}{c}{} &
\multicolumn{1}{c}{} &
\multicolumn{1}{c}{Vanilla} & \multicolumn{1}{c}{FairDD} & \multicolumn{1}{c}{\cobra} &
\multicolumn{1}{c}{Vanilla} & \multicolumn{1}{c}{FairDD} & \multicolumn{1}{c}{\cobra} &
\multicolumn{1}{c}{Vanilla} & \multicolumn{1}{c}{FairDD} & \multicolumn{1}{c}{\cobra} &
\multicolumn{1}{c}{Vanilla} & \multicolumn{1}{c}{FairDD} & \multicolumn{1}{c}{\cobra} &
\multicolumn{1}{c}{} \\
\midrule

\multirow{3}{*}{\makecell[c]{\cifars}}
    & 10
    & 47.12{\small\,(36.7)} & 26.30{\small\,(40.7)} & \textbf{17.83}{\small\,(40.9)}
    & 56.25{\small\,(37.2)} & 25.58{\small\,(43.9)} & \textbf{20.18}{\small\,(44.5)}
    & 62.27{\small\,(35.8)} & 39.97{\small\,(36.0)} & \textbf{30.48}{\small\,(39.5)}
    & 54.60{\small\,(34.6)} & 28.14{\small\,(40.1)} & \textbf{22.42}{\small\,(40.8)}
    & \multirow{3}{*}{48.96{\small\,(69.71)}} \\
    & 50
    & 71.85{\small\,(39.5)} & 35.65{\small\,(46.2)} & \textbf{26.18}{\small\,(46.6)}
    & 73.22{\small\,(45.3)} & 25.40{\small\,(57.7)} & \textbf{16.70}{\small\,(57.9)}
    & 61.93{\small\,(43.5)} & 36.10{\small\,(44.5)} & \textbf{30.28}{\small\,(54.1)}
    & 63.54{\small\,(39.4)} & 37.68{\small\,(51.2)} & \textbf{25.12}{\small\,(50.1)}
    &  \\
    & 100
    & 69.37{\small\,(41.8)} & 41.43{\small\,(49.2)} & \textbf{21.15}{\small\,(50.5)}
    & 82.87{\small\,(45.4)} & 25.17{\small\,(61.2)} & \textbf{9.37}{\small\,(62.4)}
    & 73.33{\small\,(43.2)} & 27.73{\small\,(44.1)} & \textbf{18.70}{\small\,(58.0)}
    & 60.64{\small\,(40.1)} & 30.80{\small\,(51.1)} & \textbf{22.44}{\small\,(50.8)}
    &  \\
\midrule

\multirow{3}{*}{\makecell[c]{\cfmnist \\ (\fg)}}
    & 10
    & 99.00{\small\,(62.0)} & 53.83{\small\,(75.2)} & \textbf{37.00}{\small\,(77.0)}
    & 100.0{\small\,(33.9)} & 29.50{\small\,(76.6)} & \textbf{25.83}{\small\,(77.1)}
    & 100.0{\small\,(27.9)} & 45.50{\small\,(73.7)} & \textbf{25.50}{\small\,(74.6)}
    & 100.0{\small\,(41.8)} & 47.20{\small\,(75.8)} & \textbf{36.00}{\small\,(75.9)}
    & \multirow{3}{*}{78.40{\small\,(82.96)}} \\
    & 50
    & 100.0{\small\,(65.7)} & 47.50{\small\,(75.3)} & \textbf{23.00}{\small\,(74.5)}
    & 100.0{\small\,(49.9)} & 25.67{\small\,(81.4)} & \textbf{21.00}{\small\,(82.9)}
    & 100.0{\small\,(48.1)} & \textbf{21.50}{\small\,(80.3)} & {25.40}{\small\,(81.1)}
    & 100.0{\small\,(57.0)} & 35.00{\small\,(77.1)} & \textbf{29.00}{\small\,(79.5)}
    &   \\
    & 100
    & 100.0{\small\,(61.2)} & 74.83{\small\,(68.6)} & \textbf{36.83}{\small\,(77.2)}
    & 100.0{\small\,(58.1)} & 26.33{\small\,(82.8)} & \textbf{24.17}{\small\,(84.5)}
    & 100.0{\small\,(54.7)} & 21.83{\small\,(82.0)} & \textbf{19.50}{\small\,(83.4)}
    & 100.0{\small\,(58.0)} & 40.60{\small\,(78.2)} & \textbf{31.80}{\small\,(77.4)}
    &  \\
\midrule

\multirow{3}{*}{\makecell[c]{\cfmnist \\ (\bg)}}
    & 10
    & 100.0{\small\,(49.3)} & 61.00{\small\,(69.0)} & \textbf{34.50}{\small\,(70.4)}
    & 100.0{\small\,(22.1)} & 44.67{\small\,(70.9)} & \textbf{33.80}{\small\,(70.6)}
    & 100.0{\small\,(23.1)} & 90.17{\small\,(59.4)} & \textbf{46.70}{\small\,(63.5)}
    & 100.0{\small\,(33.5)} & 47.20{\small\,(67.8)} & \textbf{30.40}{\small\,(67.5)}
    & \multirow{3}{*}{90.60{\small\,(77.84)}} \\
    & 50
    & 100.0{\small\,(61.6)} & 50.00{\small\,(76.1)} & \textbf{27.17}{\small\,(73.7)}
    & 100.0{\small\,(36.7)} & 25.17{\small\,(78.5)} & \textbf{21.40}{\small\,(79.8)}
    & 100.0{\small\,(37.0)} & {90.00}{\small\,(66.4)} & \textbf{53.00}{\small\,(70.6)}
    & 100.0{\small\,(41.1)} & 36.60{\small\,(72.6)} & \textbf{34.00}{\small\,(71.7)}
    &  \\
    & 100
    & 99.00{\small\,(63.6)} & 50.67{\small\,(70.7)} & \textbf{36.67}{\small\,(75.0)}
    & 100.0{\small\,(46.2)} & 28.00{\small\,(79.8)} & \textbf{22.40}{\small\,(81.4)}
    & 100.0{\small\,(42.8)} & 95.17{\small\,(67.8)} & \textbf{65.40}{\small\,(74.2)}
    & 100.0{\small\,(41.4)} & 46.00{\small\,(71.8)} & \textbf{38.80}{\small\,(70.1)}
    &  \\
\midrule

\multirow{3}{*}{\makecell[c]{\cmnist \\ (\fg)}}
    & 10
    & 99.49{\small\,(72.0)} & 22.88{\small\,(91.4)} & \textbf{21.01}{\small\,(91.7)}
    & 100.0{\small\,(26.2)} & 15.02{\small\,(94.0)} & \textbf{14.74}{\small\,(94.1)}
    & 100.0{\small\,(22.1)} & \textbf{11.73}{\small\,(92.8)} & 18.10{\small\,(92.1)}
    & 100.0{\small\,(28.2)} & \textbf{13.26}{\small\,(92.8)} & 17.45{\small\,(92.1)}
    & \multirow{3}{*}{10.06{\small\,(97.69)}} \\
    & 50
    & 64.86{\small\,(88.5)} & 26.44{\small\,(92.4)} & \textbf{10.39}{\small\,(95.3)}
    & 100.0{\small\,(59.6)} & 10.57{\small\,(96.1)} & \textbf{8.10}{\small\,(96.9)}
    & 100.0{\small\,(47.1)} & {11.38}{\small\,(95.9)} & \textbf{10.00}{\small\,(96.1)}
    & 100.0{\small\,(42.5)} & 17.55{\small\,(94.2)} & \textbf{17.35}{\small\,(93.5)}
    &  \\
    & 100
    & 83.09{\small\,(85.6)} & 22.44{\small\,(92.8)} & \textbf{17.39}{\small\,(95.1)}
    & 100.0{\small\,(71.1)} & 8.12{\small\,(97.0)} & \textbf{7.62}{\small\,(97.7)}
    & 100.0{\small\,(66.5)} & 9.75{\small\,(96.7)} & \textbf{8.95}{\small\,(97.1)}
    & 100.0{\small\,(34.3)} & \textbf{14.67}{\small\,(94.0)} & 18.40{\small\,(92.6)}
    &  \\
\midrule

\multirow{3}{*}{\makecell[c]{ \cmnist \\ (\bg)}}
    & 10
    & 99.67{\small\,(67.6)} & 20.79{\small\,(91.4)} & \textbf{17.51}{\small\,(91.3)}
    & 100.0{\small\,(20.3)} & 15.12{\small\,(94.7)} & \textbf{12.99}{\small\,(94.5)}
    & 100.0{\small\,(18.5)} & 91.14{\small\,(79.7)} & \textbf{17.29}{\small\,(89.6)}
    & 100.0{\small\,(19.3)} & 18.13{\small\,(91.6)} & \textbf{13.13}{\small\,(91.0)}
    & \multirow{3}{*}{10.30{\small\,(97.70)}} \\
    & 50
    & 58.91{\small\,(89.0)} & \textbf{19.77}{\small\,(92.7)} & 27.26{\small\,(93.5)}
    & 100.0{\small\,(48.8)} & 8.45{\small\,(96.4)} & \textbf{7.46}{\small\,(96.8)}
    & 100.0{\small\,(47.6)} & 90.39{\small\,(81.2)} & \textbf{17.18}{\small\,(93.4)}
    & 100.0{\small\,(36.8)} & \textbf{14.90}{\small\,(93.9)} & 17.10{\small\,(92.3)}
    &  \\
    & 100
    & 94.55{\small\,(87.3)} & 56.09{\small\,(92.4)} & \textbf{30.09}{\small\,(92.6)}
    & 100.0{\small\,(70.3)} & 8.29{\small\,(97.2)} & \textbf{7.58}{\small\,(97.5)}
    & 100.0{\small\,(66.1)} & 97.45{\small\,(80.0)} & \textbf{13.29}{\small\,(94.9)}
    & 100.0{\small\,(40.8)} & \textbf{14.82}{\small\,(93.2)} & 41.79{\small\,(80.5)}
    &  \\
\midrule

\multirow{3}{*}{\makecell[c]{\utkface}}
    & 10
    & 50.83{\small\,(60.4)} & 45.83{\small\,(61.0)} & \textbf{36.67}{\small\,(62.8)}
    & 54.17{\small\,(66.8)} & 40.17{\small\,(67.5)} & \textbf{38.33}{\small\,(70.7)}
    & 66.83{\small\,(64.6)} & {41.17}{\small\,(65.0)} & \textbf{40.70}{\small\,(66.1)}
    & 61.40{\small\,(55.9)} & 49.20{\small\,(60.5)} & \textbf{45.20}{\small\,(55.6)}
    & \multirow{3}{*}{43.60{\small\,(83.30)}} \\
    & 50
    & 51.83{\small\,(71.0)} & 35.67{\small\,(70.9)} & \textbf{29.00}{\small\,(72.3)}
    & 53.50{\small\,(72.3)} & 38.83{\small\,(74.2)} & \textbf{35.00}{\small\,(74.8)}
    & 51.67{\small\,(69.8)} & {43.33}{\small\,(71.7)} & \textbf{40.30}{\small\,(71.5)}
    & 52.20{\small\,(58.3)} & 46.20{\small\,(56.0)} & \textbf{38.80}{\small\,(57.9)}
    &  \\
    & 100
    & 47.83{\small\,(70.8)} & \textbf{26.50}{\small\,(70.0)} & 26.67{\small\,(72.3)}
    & 48.83{\small\,(72.9)} & \textbf{31.00}{\small\,(76.0)} & 32.33{\small\,(75.9)}
    & 46.67{\small\,(72.6)} & {41.00}{\small\,(73.8)} & \textbf{36.83}{\small\,(74.8)}
    & 18.40{\small\,(38.4)} & 13.60{\small\,(37.3)} & \textbf{12.14}{\small\,(37.4)}
    &  \\
\midrule

\multirow{3}{*}{\makecell[c]{\bffhq}}
    & 10
    & 48.80{\small\,(61.3)} & 43.80{\small\,(63.8)} & \textbf{32.87}{\small\,(65.2)}
    & 44.80{\small\,(65.1)} & 22.13{\small\,(68.4)} & \textbf{15.73}{\small\,(69.5)}
    & 37.20{\small\,(63.5)} & 22.33{\small\,(63.8)} & \textbf{19.56}{\small\,(65.3)}
    & 45.20{\small\,(62.1)} & 33.60{\small\,(64.1)} & \textbf{22.96}{\small\,(66.3)}
    & \multirow{3}{*}{64.00{\small\,(72.66)}} \\
    & 50
    & 52.40{\small\,(64.5)} & 53.53{\small\,(69.8)} & \textbf{40.73}{\small\,(71.6)}
    & 60.27{\small\,(62.9)} & 24.13{\small\,(72.6)} & \textbf{15.13}{\small\,(72.5)}
    & 62.87{\small\,(65.5)} & 38.40{\small\,(70.1)} & \textbf{16.64}{\small\,(70.7)}
    & 35.76{\small\,(59.6)} & 33.52{\small\,(63.6)} & \textbf{23.68}{\small\,(65.9)}
    &  \\
    & 100
    & 59.60{\small\,(65.5)} & 56.93{\small\,(69.0)} & \textbf{44.07}{\small\,(71.1)}
    & 63.47{\small\,(65.8)} & 22.53{\small\,(74.4)} & \textbf{7.87}{\small\,(74.2)}
    & 65.27{\small\,(65.5)} & 34.80{\small\,(72.0)} & \textbf{15.36}{\small\,(74.7)}
    & 7.12{\small\,(51.1)} & 5.60{\small\,(50.2)} & \textbf{5.28}{\small\,(50.9)}
    &  \\
  \bottomrule
\end{tabular}%
}
\end{table*}

\begin{table*}[h!]
\centering
\footnotesize
\setlength{\tabcolsep}{2.2pt}
\renewcommand{\arraystretch}{1.05}
\caption{Fairness and accuracy as \skeww varies (top) on \cifars with fixed group \seper and \IPC=50, and as group \seper varies (bottom) at fixed \skeww=0.8. Each cell reports $\EOD$ and, in parentheses, $\mathrm{Acc}$ for \DC/\DM/\IDC distilled datasets (Vanilla, FairDD, \cobra), together with the \fulldata baseline.}
\label{tab:skew_bias}
\resizebox{\textwidth}{!}{%
  \renewcommand{\arraystretch}{1.05}
  \setlength{\tabcolsep}{2.2pt}
\begin{tabular}{c c ccc ccc ccc c}
\toprule
\multirow{2}{*}{\textbf{Method}\vspace{-4ex}} & \multirow{2}{*}{\textbf{Value}\vspace{-4ex}} 
& \multicolumn{3}{c}{\textbf{\DC}}
& \multicolumn{3}{c}{\textbf{\DM}}
& \multicolumn{3}{c}{\textbf{\IDC}}
& \multirow{2}{*}{\textbf{\fulldata}\vspace{-4ex}} \\
\cmidrule(lr){3-5}\cmidrule(lr){6-8}\cmidrule(lr){9-11}
&
& {Vanilla} & {FairDD} & { \cobra}
& {Vanilla} & { FairDD} & { \cobra}
& {Vanilla} & { FairDD} & { \cobra} \\
\midrule
\multirow{6}{*}{\rotatebox[origin=c]{90}{\textbf{\skeww}}}
& 0.60 & 32.24{\scriptsize\,(44.6)} & 13.64{\scriptsize\,(45.7)} & \textbf{10.16}{\scriptsize\,(46.4)}
       & 27.05{\scriptsize\,(49.2)} &  9.13{\scriptsize\,(51.6)} & \textbf{8.53}{\scriptsize\,(54.0)}
       & 34.50{\scriptsize\,(43.0)} & 10.22{\scriptsize\,(46.7)} & \textbf{8.08}{\scriptsize\,(46.4)}  & 14.13{\scriptsize\,(75.2)} \\
& 0.65 & 45.10{\scriptsize\,(43.7)} & 14.48{\scriptsize\,(45.8)} & \textbf{11.28}{\scriptsize\,(46.2)}
       & 33.13{\scriptsize\,(48.0)} & 11.65{\scriptsize\,(51.4)} & \textbf{8.99}{\scriptsize\,(52.8)}
       & 44.87{\scriptsize\,(42.1)} & 12.48{\scriptsize\,(47.1)} & \textbf{8.40}{\scriptsize\,(46.2)}  & 19.13{\scriptsize\,(74.7)}\\
& 0.70 & 57.29{\scriptsize\,(41.4)} & 15.00{\scriptsize\,(45.4)} & \textbf{11.33}{\scriptsize\,(45.5)}
       & 43.53{\scriptsize\,(46.3)} & 12.90{\scriptsize\,(51.0)} & \textbf{10.83}{\scriptsize\,(52.9)}
       & 55.53{\scriptsize\,(39.8)} & 12.23{\scriptsize\,(46.1)} & \textbf{7.43}{\scriptsize\,(46.1)}  & 26.27{\scriptsize\,(74.0)}\\
& 0.75 & 65.81{\scriptsize\,(39.2)} & 19.55{\scriptsize\,(44.8)} & \textbf{15.51}{\scriptsize\,(45.4)}
       & 54.59{\scriptsize\,(44.3)} & 17.00{\scriptsize\,(50.6)} & \textbf{11.19}{\scriptsize\,(52.5)}
       & 65.23{\scriptsize\,(37.3)} & 13.73{\scriptsize\,(46.8)} & \textbf{7.82}{\scriptsize\,(46.4)}  & 33.90{\scriptsize\,(72.6)}\\
& 0.80 & 70.43{\scriptsize\,(37.6)} & 26.05{\scriptsize\,(44.5)} & \textbf{14.70}{\scriptsize\,(45.2)}
       & 61.94{\scriptsize\,(42.3)} & 22.94{\scriptsize\,(50.1)} & \textbf{12.59}{\scriptsize\,(52.6)}
       & 70.63{\scriptsize\,(36.1)} & 16.63{\scriptsize\,(47.2)} & \textbf{7.65}{\scriptsize\,(46.2)}  & 42.73{\scriptsize\,(70.7)}\\
& 0.85 & 76.38{\scriptsize\,(35.9)} & 32.96{\scriptsize\,(43.4)} & \textbf{17.68}{\scriptsize\,(44.6)}
       & 78.54{\scriptsize\,(39.2)} & 24.84{\scriptsize\,(49.6)} & \textbf{13.27}{\scriptsize\,(51.6)}
       & 73.13{\scriptsize\,(34.6)} & 14.65{\scriptsize\,(46.2)} & \textbf{7.65}{\scriptsize\,(46.0)}  & 52.60{\scriptsize\,(68.4)} \\
\midrule
\addlinespace[0.5ex]
\multirow{5}{*}{\rotatebox[origin=c]{90}{\textbf{\seper}}}
& 0 & 0.0{\scriptsize\,(42.4)} & 0.0{\scriptsize\,(42.1)} & 0.0{\scriptsize\,(42.3)}
    & 0.0{\scriptsize\,(49.7)} & 0.0{\scriptsize\,(49.9)} & 0.0{\scriptsize\,(49.8)}
    & 0.0{\scriptsize\,(45.2)} & 0.0{\scriptsize\,(45.2)} & 0.0{\scriptsize\,(45.1)} & 0.0{\scriptsize\,(86.8)} \\
& 1 & 30.93{\scriptsize\,(37.1)} & 10.15{\scriptsize\,(41.8)} & \textbf{7.98}{\scriptsize\,(40.2)}
    & 46.13{\scriptsize\,(40.6)} & 16.12{\scriptsize\,(44.7)} & \textbf{8.42}{\scriptsize\,(44.5)}
    & 41.87{\scriptsize\,(37.2)} & 7.98{\scriptsize\,(42.4)} & \textbf{4.63}{\scriptsize\,(41.1)} & 27.30{\scriptsize\,(82.6)} \\
& 2 & 50.65{\scriptsize\,(34.5)} & 27.12{\scriptsize\,(40.0)} & \textbf{11.32}{\scriptsize\,(40.2)}
    & 62.23{\scriptsize\,(35.7)} & 12.53{\scriptsize\,(45.8)} & \textbf{9.10}{\scriptsize\,(44.9)}
    & 53.77{\scriptsize\,(36.3)} & 10.10{\scriptsize\,(41.9)} & \textbf{9.90}{\scriptsize\,(42.0)} & 39.50{\scriptsize\,(79.6)} \\
& 3 & 56.75{\scriptsize\,(30.8)} & 27.75{\scriptsize\,(36.7)} & \textbf{11.73}{\scriptsize\,(35.2)}
    & 72.40{\scriptsize\,(30.1)} & 19.73{\scriptsize\,(40.7)} & \textbf{17.28}{\scriptsize\,(41.2)}
    & 56.53{\scriptsize\,(34.2)} & 17.80{\scriptsize\,(39.6)} & \textbf{12.98}{\scriptsize\,(39.4)} & 54.85{\scriptsize\,(65.9)} \\
& 4 & 61.40{\scriptsize\,(27.4)} & 38.22{\scriptsize\,(33.3)} & \textbf{16.38}{\scriptsize\,(31.2)}
    & 74.90{\scriptsize\,(29.2)} & {19.70}{\scriptsize\,(37.7)} & \textbf{19.52}{\scriptsize\,(37.3)}
    & 57.43{\scriptsize\,(31.1)} & 24.80{\scriptsize\,(35.9)} & \textbf{18.95}{\scriptsize\,(36.2)} & 59.00{\scriptsize\,(62.9)} \\
\bottomrule
\end{tabular}%
}
\end{table*}

\subsection{Group Imbalance \& Group Separation Effect}
\label{sec:imblanceexp}
To disentangle the effects of \emph{group imbalance} and \emph{group separation} in representation space (denoted as \seper), we run two controlled experiment sets. First, we hold \seper constant and vary $\skeww$ from $0.60$ to $0.85$. Second, we fix $\skeww$ and progressively increase \seper from identical representations ($\seper=0$) up to $\seper=4$ (implementation details are in Appendix~\ref{app:conflict_pattern}). Table~\ref{tab:skew_bias} shows that \textbf{both higher imbalance ($\skeww$) and higher $\seper$ systematically worsen $\EOD$} across all distillation baselines. As $\skeww$ increases (with $\seper$ fixed), Vanilla DD exhibits a near-monotonic rise in $\EOD$ for all backbones, while \textbf{fairness-aware methods mitigate this trend}. FairDD reduces $\EOD$ and \cobra achieves the lowest $\EOD$ throughout. At mild imbalance ($\skeww=0.60$), the \cobra and FairDD margins fall within overlapping standard deviations since as $\skeww \to 0.5$ the uniform barycenter and the vanilla aggregate mixture coincide. The sweep should thus be interpreted in a directional manner, with the mechanism having its primary impact at higher levels of imbalance (for instance, at $\skeww=0.85$ on DM, where COBRA attains $13.27$ compared to $24.84$ for FairDD).

With $\skeww$ fixed at $0.8$, increasing separation has an immediate impact. \textbf{When $\seper=0$, $\EOD=0$ for all methods by construction}, but even mild separation ($\seper=1$) sharply increases bias for Vanilla DD and larger $\seper$ further intensifies it. FairDD and especially \cobra again provide strong mitigation, with \textbf{\cobra consistently delivering the best fairness outcomes}. Accuracy declines for all methods as $\seper$ grows, aligning with the intuition that separation both \textbf{makes the task harder} and \textbf{amplifies group-dependent errors}. Overall, \textbf{imbalance induces a gradual $\EOD$ increase}, whereas \textbf{separation triggers a more abrupt fairness degradation} once groups become distinguishable. Across both sweeps, \textbf{\cobra is the most robust}, achieving the lowest $\EOD$ while maintaining competitive accuracy. Standard deviations for all entries in Table~\ref{tab:skew_bias}, computed under the same 10-run protocol used for Table~\ref{tab:bigall}, are reported in Tables~\ref{tab:skew_bias_eod_full1} and~\ref{tab:skew_bias_eod_full2} of Appendix~\ref{app:stdskew}.

\begin{table*}[!b]
\centering
\footnotesize
\setlength{\tabcolsep}{3.0pt}
\renewcommand{\arraystretch}{1.08}
\caption{Cross-architecture generalization using \DM at \IPC$=50$. Each cell is reported as \emph{$\EOD$ ($\mathrm{Acc}$)}. The best-performing results for each dataset and architecture are shown in \textbf{bold}.}
\label{tab:crossarch-eod-acc-dm}

\resizebox{\textwidth}{!}{%
\begin{tabular}{l *{10}{c}}
\toprule

& \multicolumn{2}{c}{\textbf{\cifars}} & \multicolumn{2}{c}{\textbf{\cfmnist (\fg)}} &
  \multicolumn{2}{c}{\textbf{\cmnist (\bg)}} & \multicolumn{2}{c}{\textbf{\utkface}} & \multicolumn{2}{c}{\textbf{\bffhq}} \\
\cmidrule(lr){2-3}\cmidrule(lr){4-5}\cmidrule(lr){6-7}\cmidrule(lr){8-9}\cmidrule(lr){10-11}
\textbf{Model} & Vanilla &  \cobra & Vanilla &  \cobra & Vanilla &  \cobra & Vanilla &  \cobra & Vanilla &  \cobra \\
\midrule
ConvNet  & 73.88 (45.5) & \textbf{15.82} (57.6) & 100.0 (50.6) & \textbf{19.60} (82.9) & 100.0 (48.8) & \textbf{7.54} (96.9) & 53.50 (71.83) & \textbf{33.50} (74.4) & 59.40 (63.27) & \textbf{15.60 (72.6)} \\
AlexNet  & 73.34 (35.6) & \textbf{15.22} (46.2) & 100.0 (37.9) & \textbf{25.40} (80.8) & 60.0 (13.5) & \textbf{9.74} (96.5) & 49.25 (66.31) & \textbf{36.50} (72.9) & 54.0 (64.62) & \textbf{11.30} (74.2) \\
VGG11    & 61.88 (43.3) & \textbf{10.24} (52.4) & 100.0 (58.7) & \textbf{17.80} (83.0) & 100.0 (66.0) & \textbf{7.75} (97.0) & 55.75 (67.69) & \textbf{37.50 (70.0)} & 51.80 (65.15) & \textbf{8.70} (69.8) \\
ResNet18 & 74.26 (38.4) & \textbf{15.72} (50.4) & 100.0 (42.3) & \textbf{21.80} (81.6) & 100.0 (26.2) & \textbf{7.34} (97.4) & 56.25 (66.60) & \textbf{42.50} (67.7) & 55.40 (63.12) & \textbf{10.20} (66.9) \\
\midrule
{Mean} & 70.84 (40.7) & 14.25 (51.7) & 100.0 (47.4) & 21.15 (82.1) & 90.00 (38.6) & 8.09 (97.0) & 53.69 (68.1) & 37.50 (71.3) & 55.15 (64.0) & 11.45 (70.9) \\
{STD}  & 5.18 (3.9)  & 2.33 (4.1)  & 0.0 (8.0)   & 2.83 (0.9)   & 17.32 (20.2) & 0.96 (0.32)  & 2.76 (2.2)  & 3.24 (2.6)  & 2.77 (0.9)  & 2.57 (2.8) \\
\bottomrule
\end{tabular}%
}
\end{table*}

\subsection{Visualization Analysis}
We investigate how different distillation objectives influence the learned representation with respect to both
class labels and the sensitive attribute (group). Using \cmnist (\fg) with 10 classes and 10 demographic groups, we generate a t-SNE projection of
features obtained from a classifier trained on the distilled dataset at \IPC=50. 
Figure~\ref{fig:combined_six_plots} presents the learned representations colored by class and by group, using a consistent color scheme across methods within each row.
With \DC, the class-colored embedding appears fragmented and partially entangled (Fig.~\ref{fig:skew_class}),
while the group-colored embedding exhibits clearly separated, group-dependent regions (Fig.~\ref{fig:skew_group}),
showing that the learned features still encode strong information about the sensitive groups. FairDD yields better class separation
than \DC (Fig.~\ref{fig:intro_class}); however, the group-colored representation remains visibly structured
(Fig.~\ref{fig:intro_group}), with many clusters dominated by a single group color, indicating that sensitive
information is attenuated but not fully eliminated.
By contrast, \cobra produces compact, well-separated class clusters (Fig.~\ref{fig:third_class}) and
exhibits markedly stronger mixing of group colors within these clusters (Fig.~\ref{fig:third_group}).
This behavior aligns with \cobra's objective of distilling toward a subgroup-agnostic barycentric target
within each class, which suppresses subgroup-specific components while maintaining class-discriminative structure.

\subsection{Generalization Across Architectures}

A key requirement for dataset distillation is \emph{cross-architecture generalization}. A distilled dataset should remain effective when training models whose architectures differ from the one used during distillation. Strong transfer across architectures suggests that the distilled images encode task-relevant, model-agnostic information rather than leveraging architecture-specific inductive biases or optimization artifacts.
An analogous condition applies to \emph{fairness}. Any fairness gains introduced by distillation should also generalize across architectures; otherwise, they risk reflecting architecture-specific artifacts instead of providing stable bias mitigation. 
To evaluate this, we distill a single labeled set $S$ with a fixed \IPC budget using \DM, employing ConvNet as the distillation model, and then train a variety of networks from scratch on $S$. Our evaluation suite spans architectures with different capacities and designs, including AlexNet~\cite{krizhevsky2012imagenet}, VGG11~\cite{simonyan2014very}, and ResNet18~\cite{he2016deep}.

\begin{figure}[h]
    \centering
    \begin{subfigure}{0.329\columnwidth}
        \centering
        \includegraphics[width=\linewidth]{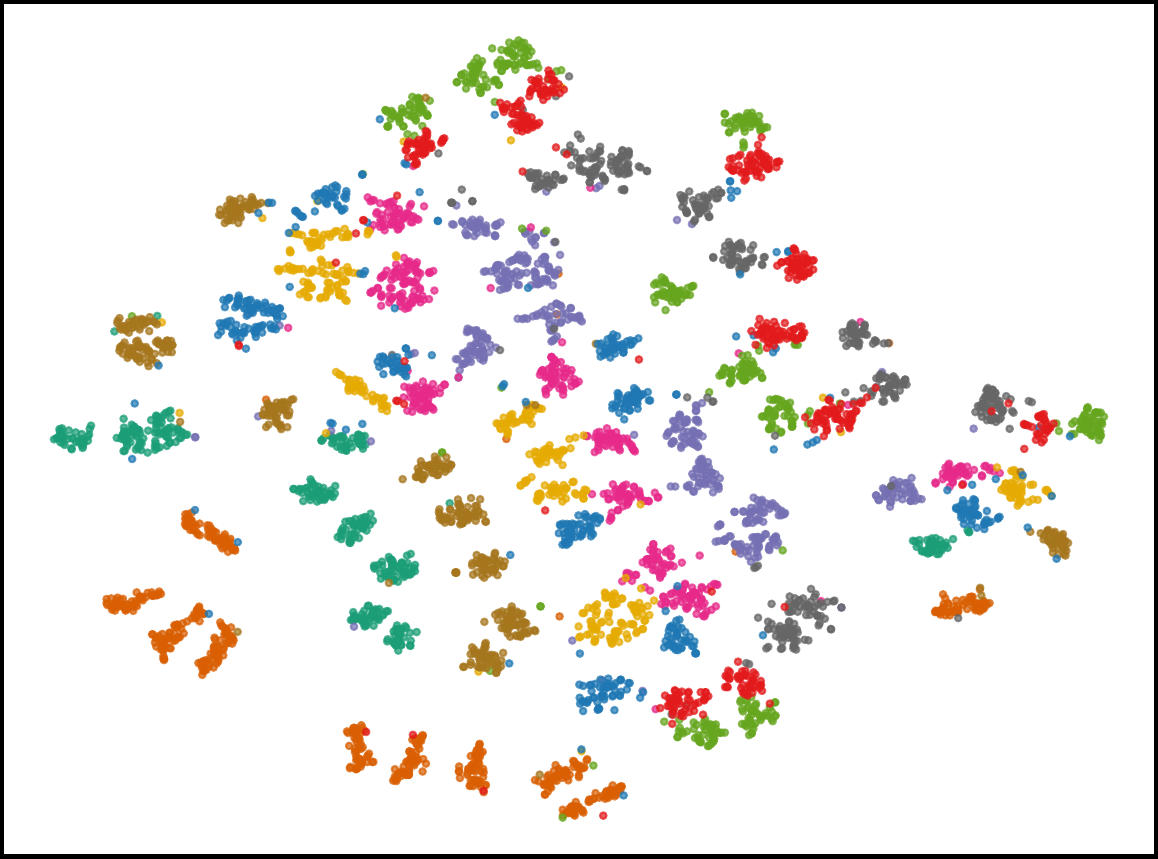}
        \caption{DC}
        \label{fig:skew_class}
    \end{subfigure}\hfill
    \begin{subfigure}{0.329\columnwidth}
        \centering
        \includegraphics[width=\linewidth]{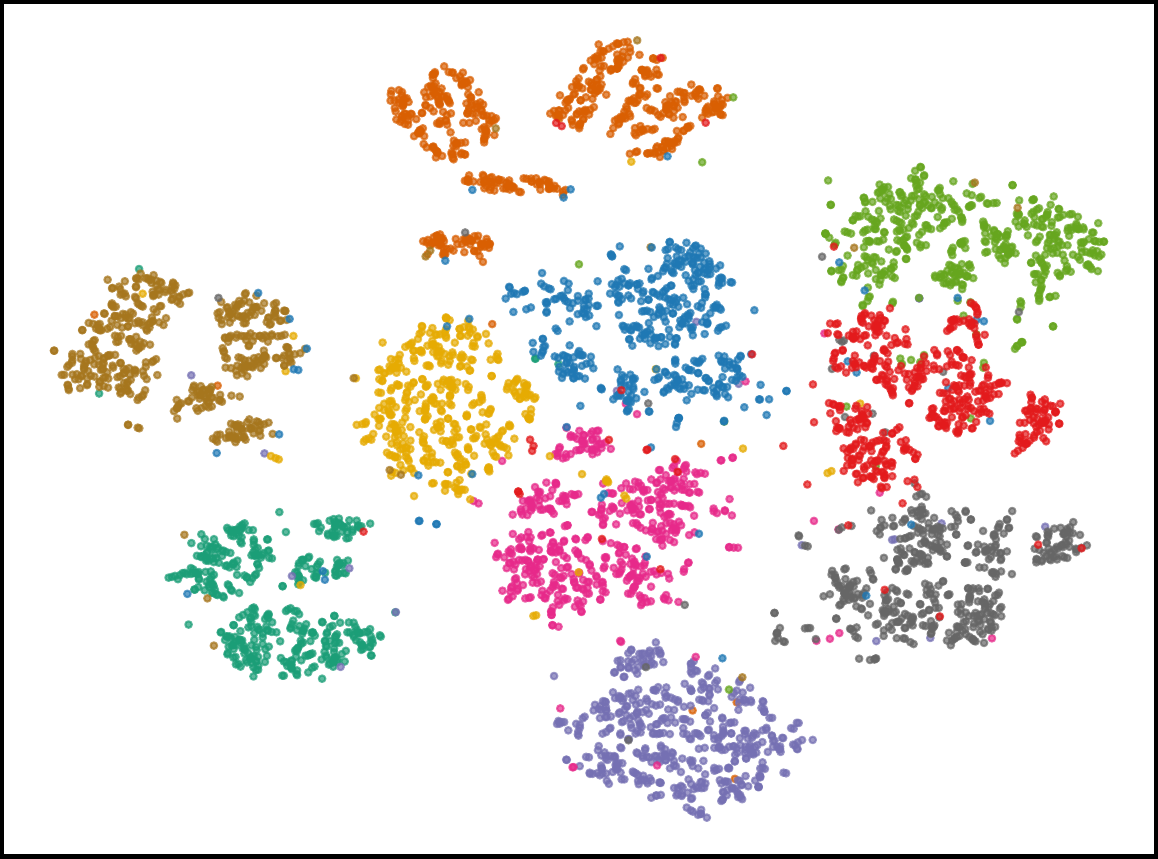}
        \caption{DC + FairDD}
        \label{fig:intro_class}
    \end{subfigure}\hfill
    \begin{subfigure}{0.329\columnwidth}
        \centering
        \includegraphics[width=\linewidth]{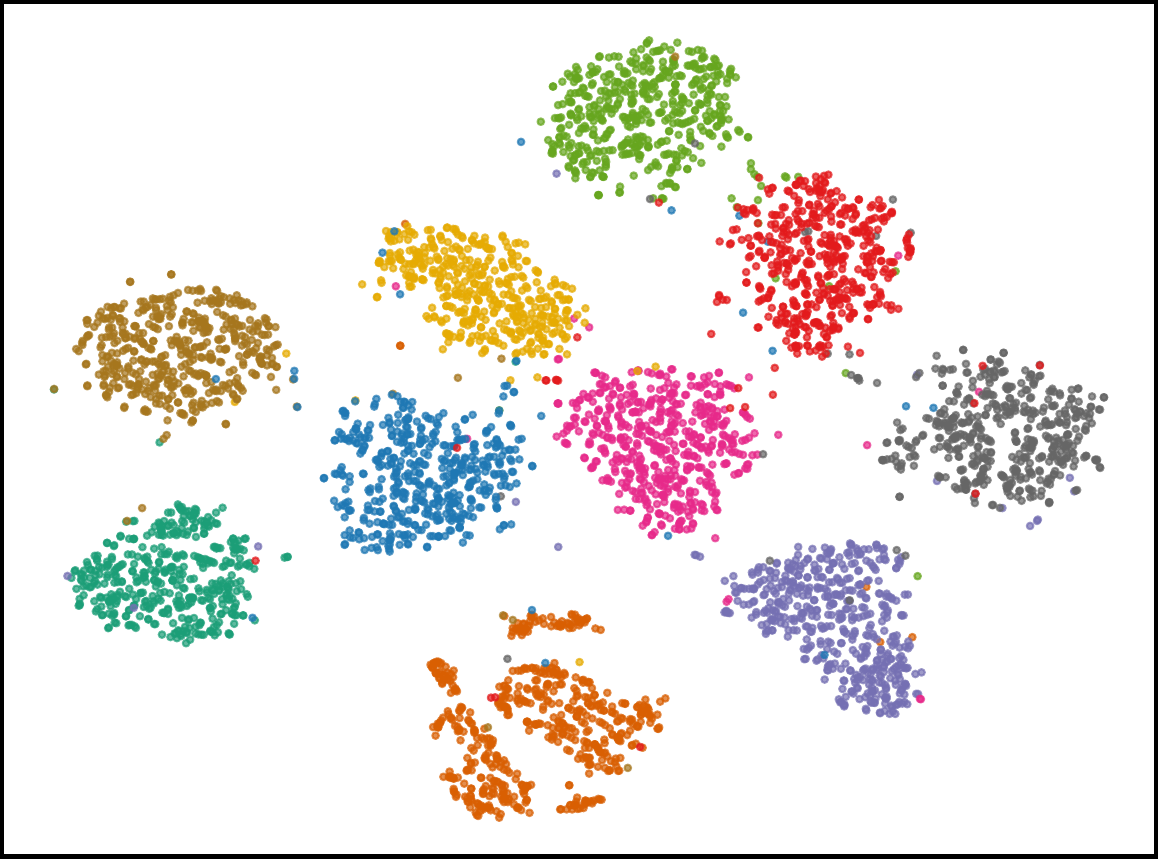}
        \caption{DC + \cobra}
        \label{fig:third_class}
    \end{subfigure}

    \vspace{0.5em} 
    \begin{subfigure}{0.329\columnwidth}
        \centering
        \includegraphics[width=\linewidth]{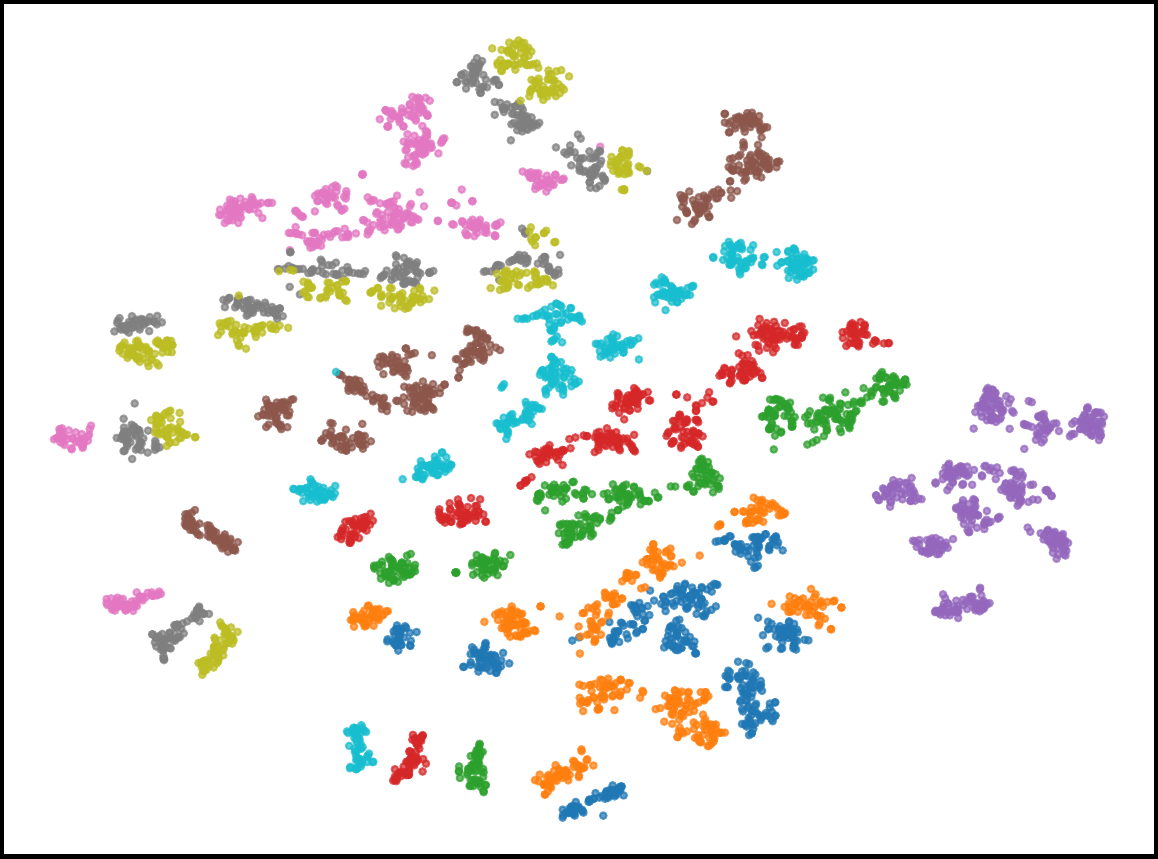}
        \caption{\DC}
        \label{fig:skew_group}
    \end{subfigure}\hfill
    \begin{subfigure}{0.329\columnwidth}
        \centering
        \includegraphics[width=\linewidth]{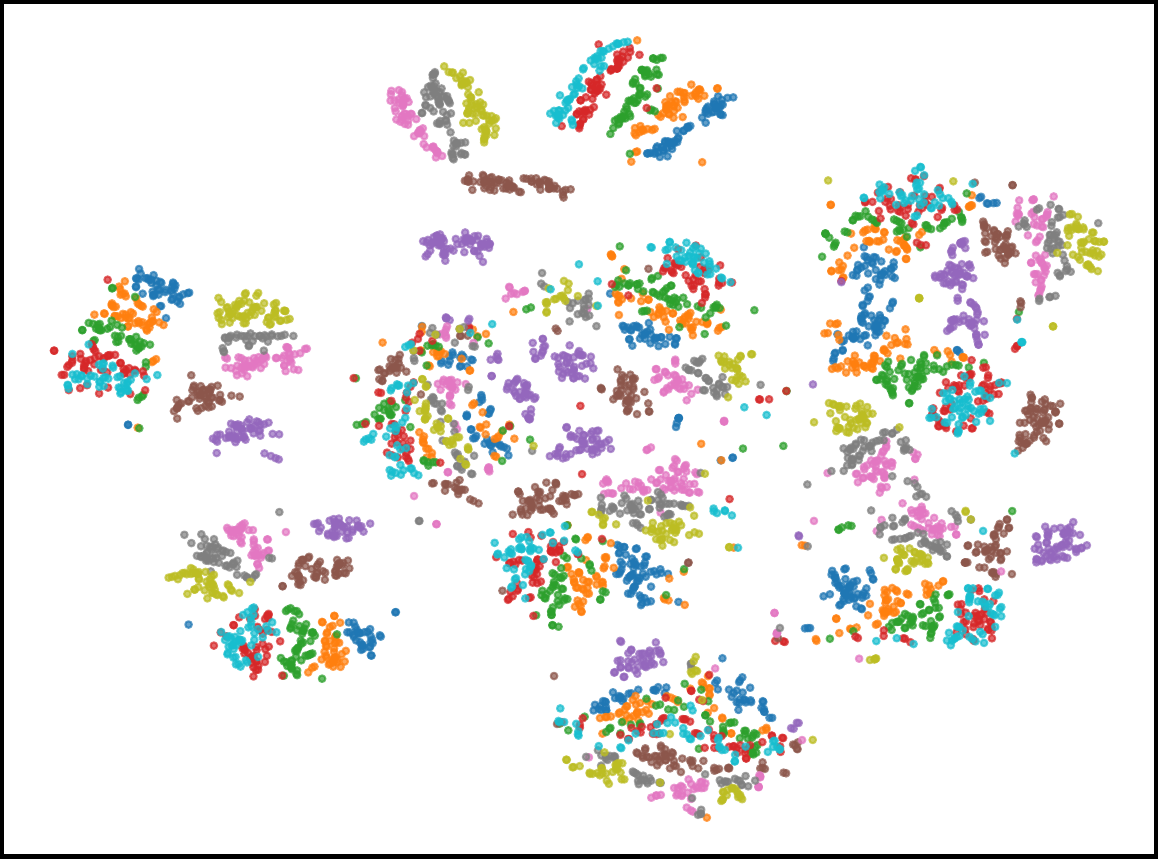}
        \caption{\DC + FairDD}
        \label{fig:intro_group}
    \end{subfigure}\hfill
    \begin{subfigure}{0.329\columnwidth}
        \centering
        \includegraphics[width=\linewidth]{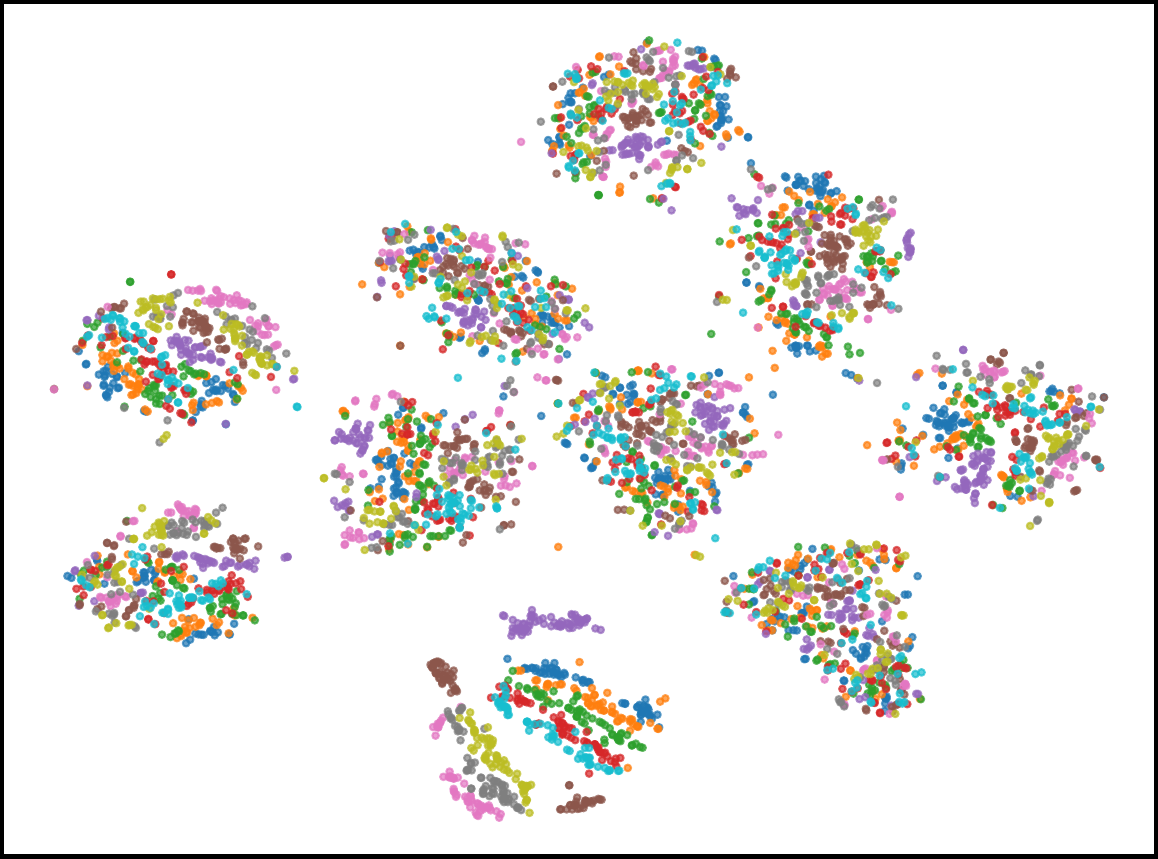}
        \caption{\DC + \cobra}
        \label{fig:third_group}
    \end{subfigure}
\caption{{t-SNE visualization of last-layer representations on \cmnist (\fg), \IPC=50.}
The top row colors points by class, while the bottom row colors points by sensitive attribute.
Color mappings are uniform across methods within each row.}
    \label{fig:combined_six_plots}
\end{figure}

As shown in Table~\ref{tab:crossarch-eod-acc-dm}, the \cobra distilled set \emph{consistently} outperforms the vanilla DM baseline across all evaluation architectures. For every dataset, \cobra attains the lowest $\EOD$ while simultaneously achieving higher $\mathrm{Acc}$, and this ranking is preserved when evaluated on different models. The small cross-model variation further indicates that these gains are \textbf{stable} rather than dependent on a specific architectural inductive bias, demonstrating robust transfer of both accuracy and fairness improvements across networks.

\subsection{Ablation on Barycenter Discrepancy}
\label{sec:abalationDisc}

To isolate the specific role of the barycentric discrepancy, we replaced 
the particular barycenter discrepancy employed in Section~\ref{sec:results_protocol}
with several alternative distance measures. Concretely, we resolved~\eqref{eq:COBRA-bilevel} with $d$ instantiated as $\ell_1$, $\ell_2$, $\ell_\infty$, cosine, and Huber~\cite{huber1992robust}. Details for each choice of $d$ are provided in Appendix~\ref{app:alt_discrep}. In all cases, we optimized $m$ using the L-BFGS ~\cite{liu1989limited}, initializing $m$ at the mean of the subgroup statistics and then minimizing $\sum_a d(\Phi_a, m)$.
Table~\ref{tab:disc_ablation_eodmax} shows that the choice of discrepancy $d$ can shift fairness, but the effect is highly non-uniform across datasets. In some cases, alternative discrepancies surpass our default (e.g., \cifars), whereas in others \cobra is superior (\cmnist (\fg)).
Moreover, no single discrepancy performs best across all benchmarks. While $\ell_\infty$ and cosine improve results on \cfmnist, they significantly harm performance on \bffhq. Although $\ell_2$ benefits \cmnist (\bg), it underperforms on \cifars. Overall, substituting our default discrepancy with a ``more sophisticated'' choice fails to yield robust fairness gains; improvements on one dataset are often offset by degradations on another. This pattern suggests that fairness gains mainly arise from barycentric alignment, while the geometry defined by $d$ acts as a dataset-specific hyperparameter.

\begin{table}[h!]
\centering
\small
  \renewcommand{\arraystretch}{1.1}
  \setlength{\tabcolsep}{2.4pt}
\caption{Ablation on discrepancy $d$ (\DM, \IPC$=10$). Cells show $\mathrm{EOD}$ after recomputing the barycenter using the specified $d$.}


\resizebox{\columnwidth}{!}{%
\begin{tabular}{lcccccc}
\toprule
\textbf{Dataset} &
$\ell_1$ &
$\ell_2$ &
$\ell_\infty$ &
Cosine &
Huber &
\cobra \\
\midrule
\cifars      & 16.35 & 20.99 & 18.60 & 16.96 & 15.36 & 20.18 \\
\cfmnist (\fg)  & 36.00 & 41.33 & 27.17 & 27.00 & 32.17 & 25.83 \\
\cfmnist (\bg)  & 47.17 & 49.83 & 38.00 & 36.33 & 38.83 & 33.80 \\
\cmnist (\fg)   & 44.57 & 26.63 & 18.52 & 21.16 & 18.28 & 14.74 \\
\cmnist (\bg)   &  --  &  12.78 & 17.00 & 15.03 &13.57  & 12.99 \\
\utkface        & 37.83 & 43.00 & 34.83 & 37.83 & 39.67 & 38.33 \\
\bffhq          & 19.47 & 19.73 & 30.00 & 32.53 & 25.60 & 15.73 \\
\bottomrule

\end{tabular}
\label{tab:disc_ablation_eodmax}
}
\end{table}

\vspace{-0.5em}
\subsection{Ablation on Worst-Case Residual Bound}
\label{subsec:maxres_empirical}
Here, we perform an ablation study to further substantiate the claims in Theorem~\ref{thm:max-residual}. For each method and for each \IPC, we train a model on the distilled dataset until convergence and then compute the subgroup statistics $\Phi_{a\mid y}$ on the real data for every class $y$. We define $\phi_{a\mid y}$ as class-conditional subgroup gradients for \DC and as subgroup mean embeddings for \DM. Using these, we form the vanilla DD and FairDD targets $m^{\mathrm{van}}_{y}$, as well as the COBRA barycenter target $m^{*}_{y}$, and compare their worst-case residuals $\max_{a}\|\phi_{a\mid y}-m\|_{2}^{2}$. We present both the mean and the worst-class residuals in Table~\ref{tab:maxres_cobra_colwidth}.
Across selected datasets and distillation objectives, the \cobra target consistently achieves smaller worst-case subgroup residuals than the Vanilla DD mixture target, often by a wide margin, and it also improves upon the FairDD variant. This effect is most pronounced for Vanilla DD, where the class-averaged $\mathrm{MaxRes}_{\mathrm{van}}$ is substantially larger than for either FairDD or \cobra, indicating that naive targets can be severely misaligned with the underlying subgroup geometry. These findings are consistent with the observation that, in all of these settings, \cobra attains the lowest $\EOD$, as reported in Table~\ref{tab:bigall}. Additional details are provided in Appendix~\ref{add:addworsetacea}.




\begin{table}[t]
\centering
\small
\setlength{\tabcolsep}{3.0pt}
\renewcommand{\arraystretch}{1.05}
\caption{Worst-case residual geometry for validating Theorem~\ref{thm:max-residual}. Each entry reports $\mathrm{MaxRes}_{\textsc{cobra}}$ averaged over classes, with the worst class (maximum) in parentheses. Bold marks the lowest mean within each block (same objective and IPC).}
\label{tab:maxres_cobra_colwidth}
\resizebox{\columnwidth}{!}{%
\begin{tabular}{l c c c c}
\toprule
\cmidrule(lr){3-5}
\textbf{Objective} & \textbf{Dataset} & \textbf{Vanilla} & \textbf{FairDD} & \textbf{\cobra} \\
\midrule
\multirow{4}{*}{\normalsize{\textbf{\DC}}}
& \bffhq  & 117.891 (198.667) & 41.245 (57.090)  & \textbf{15.454 (21.730)} \\
& \bffhq  & 254.052 (277.810) & 105.988 (148.130) & \textbf{22.805 (33.700)} \\
& \cifars & 59.267 (103.680)  & 46.417 (80.887)  & \textbf{40.012 (68.493)} \\
& \cifars & 403.143 (776.124) & 91.911 (139.054) & \textbf{79.756 (134.649)} \\
\addlinespace[1.0mm]
\midrule
\multirow{4}{*}{\textbf{\normalsize{\DM}}}
& \bffhq  & 270.327 (357.298) & 50.648 (57.988)  & \textbf{45.250 (54.682)} \\
& \bffhq  & 557.932 (750.967) & 53.672 (69.452)  & \textbf{45.235 (54.819)} \\
& \cifars & 171.403 (365.839) & 49.884 (87.755)  & \textbf{42.236 (72.801)} \\
& \cifars & 580.187 (1108.934)& 58.918 (101.583) & \textbf{42.258 (61.716)} \\
\bottomrule
\end{tabular}%
}
\end{table}

\subsection{Image Visualization
\& Additional Results}


Figure~\ref{fig:vis_dm_cmnist_bg} illustrates how \cobra qualitatively modifies the distilled set. With standard \textsc{DM}, backgrounds are nearly uniform within each class, indicating that the synthetic prototypes still encode the spurious group signal. By contrast, \cobra promotes greater within-class background diversity while maintaining well-defined digit shapes, consistent with the gains in subgroup fairness reported in our quantitative analysis.
 Further examples are given in Appendix~\ref{app:visaul}.

\begin{figure}[t]
    \centering
    \begin{subfigure}{0.48\columnwidth}
        \centering
        \includegraphics[width=\linewidth]{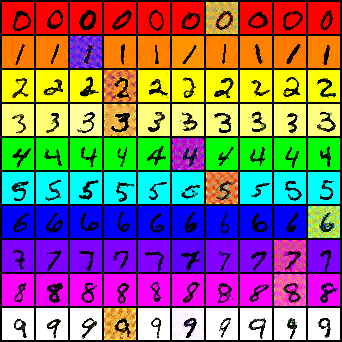}
        \caption{Vanilla \textsc{DM}}
        \label{fig:vis_dm_cmnist_bg_vanilla}
    \end{subfigure}\hfill
    \begin{subfigure}{0.48\columnwidth}
        \centering
        \includegraphics[width=\linewidth]{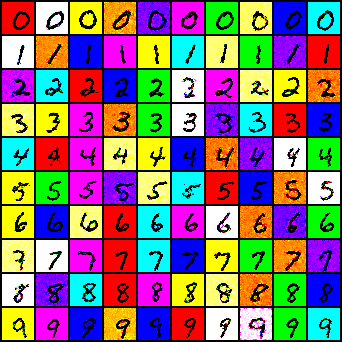}
        \caption{\textsc{DM} + \cobra}
        \label{fig:vis_dm_cmnist_bg_cobra}
    \end{subfigure}
    \caption{Distilled \cmnist (\bg) at $\IPC=10$ using Vanilla DM and \cobra.}
    \label{fig:vis_dm_cmnist_bg}
    \vspace{-1em}
\end{figure}

\subsection{Ablation on MTT}

We next evaluate whether COBRA can also be applied to trajectory-based dataset distillation. This setup is especially informative because MTT \citep{cazenavette2022dataset} does not align class-level statistics such as gradients or embeddings; instead, it aligns training dynamics in parameter space. Specifically, MTT first computes \emph{expert trajectories} by training networks on the full dataset and recording parameter snapshots over epochs. During distillation, it samples a start epoch $t$ from one expert trajectory, initializes a student network from that checkpoint, performs $K$ gradient steps on the synthetic data, and then updates the synthetic images so that the resulting student parameters match the expert’s parameters at epoch $t+K$.

To integrate COBRA into MTT, we preserve the original optimization loop and modify only the construction of real-data targets. Instead of a single replay buffer over all data, we maintain one buffer per demographic group. At each distillation step, we sample one trajectory from each group, all initialized at the same epoch $t$, and form cross-group barycentric start and end points by averaging the corresponding checkpoints:
\[
\bar{\theta}_{t} = \frac{1}{G}\sum_{g=1}^{G} \theta_{g,t},
\qquad
\bar{\theta}_{t+K} = \frac{1}{G}\sum_{g=1}^{G} \theta_{g,t+K},
\]
where $G$ denotes the total number of groups. We initialize the student at $\bar{\theta}_{t}$ and then optimize the synthetic dataset so that, after $K$ synthetic updates, the student parameters match $\bar{\theta}_{t+K}$. Consequently, COBRA-MTT is a lightweight adaptation of standard MTT that replaces a single aggregate trajectory target with a group-balanced, barycentric trajectory target.

Table~\ref{tab:mtt_ablation} demonstrates that COBRA remains effective under this fundamentally different distillation signal. On BFFHQ, COBRA-MTT lowers EOD from $32.96\%$ to $9.12\%$, indicating a substantial fairness improvement. This improvement comes with a drop in accuracy from $61.64\%$ to $51.86\%$. This indicates that vanilla MTT can retain group-specific shortcuts that improve overall accuracy while causing large subgroup disparities, whereas barycentric trajectory matching mitigates these shortcuts. On \cifars, COBRA-MTT improves both metrics, raising accuracy from $26.36\%$ to $36.32\%$ and reducing EOD from $39.40\%$ to $30.38\%$. Collectively, these findings indicate that COBRA is not limited to representation-matching objectives and can be incorporated into trajectory matching with only minor modifications to the target construction.
\begin{table}[h]
  \centering
  \footnotesize
  \caption{
COBRA-MTT with IPC $=10$.  
Values are reported over five runs.  
The superior value for each method is bold.
  }
  \label{tab:mtt_ablation}
\resizebox{0.48\textwidth}{!}{
  \begin{tabular}{llcc}
    \toprule
    \textbf{Method} & \textbf{Dataset}
      & \textbf{Accuracy} 
      & \textbf{EOD} \\
    \midrule
    Vanilla-MTT & BFFHQ         & \textbf{61.64} $\pm$ 1.09 & 32.96 $\pm$ 6.55 \\
    COBRA-MTT   & BFFHQ         & 51.86 $\pm$ 1.71          & \textbf{9.12} $\pm$ 1.95 \\
    \midrule
    Vanilla-MTT & \cifars & 26.36 $\pm$ 0.21          & 39.40 $\pm$ 1.77 \\
    COBRA-MTT   & \cifars & \textbf{36.32} $\pm$ 0.58 & \textbf{30.38} $\pm$ 2.58 \\
    \bottomrule
  \end{tabular}
  }
\end{table}


\textbf{Additional analyses.} Supplementary experiments covering low-\IPC robustness for $\IPC \in \{1,3,5\}$, ablations comparing our COBRA with standard fairness interventions, performance under partially observed group labels, and an assessment of computational overhead are presented in Appendices~\ref{app:losipcabbl},~\ref{sec_ablation_pipeline},~\ref{sec:partial_labels_ablation}, and~\ref{app:overhead}.

\section{Conclusion}




Dataset distillation is starting to be used in settings where subgroup disparities matter; yet, most distillation objectives are still optimized for average accuracy. In this work, we show that subgroup unfairness in distilled data is not explained by group imbalance alone or by subgroup representation differences alone. Instead, unfairness is driven by their interaction, where an aggregate distillation target can be dominated by the majority group while also pulling the synthetic set toward group-specific representation patterns.
We address this with \cobra, a fairness-oriented objective that forms a shared target across subgroups before computing the distillation loss, rather than only reweighting subgroup losses after the fact. \cobra is compatible with existing distillation pipelines and can be applied without changing the downstream training procedure. Across multiple biased benchmarks and distillation backbones, \cobra consistently reduces subgroup disparity while maintaining competitive accuracy. These gains persist under low \IPC budgets and transfer across architectures, which supports the view that the distilled samples capture more stable, group-robust information.
Future directions include extending barycentric targets to settings with missing or noisy group annotations, multiple protected attributes, and distribution shifts between distillation and deployment.

\section*{Acknowledgements}
The authors acknowledge the Vector Institute for providing computational resources. This research was supported by the Natural Sciences and Engineering Research Council of Canada (NSERC) through the Discovery Grants program including grant RGPIN-2025-04585 and by the John Thompson Chair Fellowship.


\section*{Impact Statement}
This work addresses fairness in dataset distillation, which compresses large datasets into small synthetic ones. While distillation can reduce storage and training costs, it may also amplify subgroup disparities when the distilled data underrepresents demographic-specific predictive patterns. COBRA aims to mitigate this risk by aligning distilled representations with a subgroup-agnostic barycentric target, improving fairness while preserving predictive performance.

The potential benefit of this work is to make dataset distillation more reliable for fairness-sensitive applications such as healthcare, finance, and other high-stakes settings. However, the method relies on group annotations during distillation, which may be unavailable or noisy. COBRA should therefore be used within a broader fairness evaluation process that includes domain-specific auditing, appropriate fairness metrics, and monitoring in real deployment.


\bibliography{ref}
\bibliographystyle{icml2026}

\newpage
\appendix
\onecolumn

\section{Appendix}



\section{Related Work}
\label{sec:RelWAPP}
\paragraph{Coresets}
Coresets are weighted subsets of the data that can be used to train models with nearly the same performance as training on the full dataset. Many selection methods exist, but most are {task-specific}, meaning they are tailored to a particular learning model family.
Examples include coresets for $k$-means and clustering~\cite{jubran2020sets, cohen2022towards, har2004coresets}, regression~\cite{meyer2022fast, maalouf2019fast}, mixture models~\cite{lucic2018training}\ignore{, low-rank approximation~\cite{cohen2017input, braverman2020near}}, and Bayesian inference~\cite{campbell2018bayesian}. 
In deep learning, recent methods build coresets by aligning the gradients of the full dataset with weighted subset~\cite{mirzasoleiman2020coresets, killamsetty2021grad, tukan2023provable}. However, selecting a small subset that preserves performance can still be difficult in high-dimensional settings or large datasets.


\paragraph{Fairness-aware coresets}
Fairness constraints have been incorporated into coreset design, most prominently for fair clustering~\cite{schmidt2019fair, huang2019coresets, bandyapadhyay2024coresets, liang2025coresets}. Streaming fair coresets produce compact weighted summaries that can be merged across data batches, supporting scalable fair $k$-means in streaming scenarios~\cite{schmidt2019fair}. In the presence of multiple, possibly overlapping sensitive attributes, one can construct coresets by partitioning the data points based on their combinations of group memberships and then selecting weighted representative points, ensuring that the fair $k$-means objectives are well approximated~\cite{huang2019coresets}. Sampling-based constructions yield fair coresets in arbitrary metric spaces and avoid exponential dimension dependence in Euclidean settings, leading to faster approximation and streaming methods~\cite{bandyapadhyay2024coresets}.
Beyond clustering, fairness-aware coresets for regression and individually fair clustering modify both the coreset definition and sampling probabilities to preserve loss and fairness constraints~\cite{chhaya2022coresets}. Applied work includes interactive, human-in-the-loop selection of fairness-aware coresets with explanations for tabular data~\cite{hadar2025interactive}, and optimal-transport-based representatives via fair Wasserstein coresets~\cite{xiong2024fair}. This line of work is \emph{distinct from fairness in dataset distillation}, which explicitly {learns \emph{synthetic} training examples} that each summarize many original samples via matched training dynamics, whereas coresets instead {select \emph{real} datapoints} (with weights), so every chosen point corresponds to one original example rather than a learned prototype.

\paragraph{Fairness-aware Data Generation}
Another line of research incorporates fairness constraints \emph{directly into the process of generating synthetic data}.
FairGAN~\cite{xu2018fairgan} alters GAN training by adding an explicit fairness objective so that the synthesized data stay realistic while reducing the dependence between protected attributes and target outcomes.
Fairness GAN~\cite{sattigeri2019fairness} extends an auxiliary-classifier GAN with fairness regularizer for criteria such as demographic parity and equality of opportunity, yet still produces realistic, high-dimensional samples.
Instead of producing a completely new dataset, {data-domain} fair representation learning applies a structured modification to each individual example, \cite{quadrianto2019discovering} formulates fairness as data-to-data translation, generating representations that lie in the {original input space} and thus allow qualitative examination of what semantic alterations are needed to satisfy a chosen fairness notion.
This is again distinct from dataset distillation. Fair data generation aims to approximate the original data distribution while improving fairness at the \emph{distribution} level and can yield arbitrarily many samples, whereas distillation learns a \emph{small} synthetic training set tuned to mimic the training dynamics, with each synthetic instance effectively summarizing numerous real examples.

\section{Proof of Theorems}
\label{app:COBRA_theory}

\subsection{Setting and notation}
\label{sec:setting_notation}

We fix a class label set $\Y$ and a finite sensitive-attribute (subgroup) set
$\A$ with $|\A|<\infty$. For a fixed class $y\in\Y$, let
$T_{a\mid y}$ denote the class-conditional subgroup corresponding to subgroup $a\in\A$ within class $y$. Let
$\pi_{a\mid y}\ge 0$ be the class-conditional subgroup weights, satisfying
$\sum_{a\in\A}\pi_{a\mid y}=1$ (e.g., $\pi_{a\mid y}=\Pr(A=a\mid Y=y)$ in the
population, or the empirical fraction within class $y$).
Let $Q\in\R^{d\times d}$ be symmetric positive definite (SPD), denoted $Q\succ 0$.
Define the $Q$-inner product and induced norm by
\[
\langle u,v\rangle_Q := u^\top Q v,
\qquad
\|u\|_Q := \sqrt{u^\top Q u},
\]
and the corresponding squared distance
\[
d(u,v)\;:=\;\|u-v\|_Q^2 = (u-v)^\top Q (u-v).
\]

\paragraph{Vanilla and COBRA class targets.}
For each class $y\in\Y$, define:
\begin{align}
&m_y^{\mathrm{van}}
:= \sum_{a\in\A}\pi_{a\mid y}\,\Phi_{T_{a\mid y}},
\label{eq:Vanilla_target_appendix}
\\
&m_y^\star
\in \arg\min_{m\in\R^d}\ \frac{1}{|\A|}\sum_{a\in\A} d\!\left(\Phi_{T_{a\mid y}},\, m\right).
\label{eq:cobra_inner_appendix}
\end{align}
The point $m_y^\star$ is the (uniform-weight) barycenter used by COBRA under the
distance $d$.

\paragraph{Residuals and the center shift.}
Given any center $m\in\R^d$, define the class-conditional subgroup residual for
subgroup $a$ as
$\Delta_{a\mid y}(m) \;:=\; \Phi_{T_{a\mid y}} - m.$
In particular, for COBRA and Vanilla:
\[
\Delta_{a\mid y}^{\textsc{C}} := \Delta_{a\mid y}(m_y^\star),
\qquad
\Delta_{a\mid y}^{\textsc{V}} := \Delta_{a\mid y}(m_y^{\mathrm{van}}).
\]
We also define the \emph{center shift}
\[
s_y \;:=\; m_y^{\mathrm{van}} - m_y^\star.
\]
Note the identity:
\begin{equation}
\label{eq:residual_shift_identity_appendix}
\Delta_{a\mid y}^{\textsc{V}}
= \Phi_{T_{a\mid y}} - m_y^{\mathrm{van}}
= \Phi_{T_{a\mid y}} - m_y^\star - (m_y^{\mathrm{van}}-m_y^\star)
= \Delta_{a\mid y}^{\textsc{C}} - s_y.
\end{equation}

\subsection{Closed-form barycenter for squared $Q$-norm}
\label{sec:proof_prop_mean_barycenter}

\begin{proposition}
\label{prop:mean-barycenter_app}
Assume $d(u,v)=\|u-v\|_Q^2$ for some $Q\succ 0$. Then for any fixed class
$y\in\Y$, the inner problem~\eqref{eq:cobra_inner_appendix} has a unique minimizer
given by the uniform mean of subgroup statistics:
\[
m_y^\star \;=\; \frac{1}{|\A|}\sum_{a\in\A}\Phi_{T_{a\mid y}}.
\]
\end{proposition}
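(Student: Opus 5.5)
Fix a class $y\in\Y$ and abbreviate $\Phi_a:=\Phi_{T_{a\mid y}}$, $K:=|\A|$, and $\bar\Phi:=\frac1K\sum_{a\in\A}\Phi_a$. The plan is to show that the objective
\[
F(m):=\frac{1}{K}\sum_{a\in\A}\|\Phi_a-m\|_Q^2
\]
differs from $\|m-\bar\Phi\|_Q^2$ only by a constant independent of $m$. Existence and uniqueness of the minimizer then follow immediately, with no calculus needed.

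\textbf{Step 1 (bias--variance decomposition).} For each $a$, write $\Phi_a-m=(\Phi_a-\bar\Phi)+(\bar\Phi-m)$ and expand using the $Q$-inner product, which is bilinear and symmetric because $Q$ is symmetric:
\[
\|\Phi_a-m\|_Q^2=\|\Phi_a-\bar\Phi\|_Q^2+2\langle \Phi_a-\bar\Phi,\ \bar\Phi-m\rangle_Q+\|\bar\Phi-m\|_Q^2 .
\]
Averaging over $a\in\A$ makes the cross term vanish, since $\frac1K\sum_a(\Phi_a-\bar\Phi)=0$ and the inner product is linear in its first argument. This gives
\[
F(m)=\frac1K\sum_{a\in\A}\|\Phi_a-\bar\Phi\|_Q^2+\|m-\bar\Phi\|_Q^2 .
\]

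\textbf{Step 2 (conclude).} The first term does not depend on $m$. Because $Q\succ0$, the second term satisfies $\|m-\bar\Phi\|_Q^2=(m-\bar\Phi)^\top Q(m-\bar\Phi)\ge 0$, with equality if and only if $m=\bar\Phi$. Hence $\bar\Phi$ is a minimizer. It is also the unique one: any $m\neq\bar\Phi$ gives a strictly larger value of $F$. Since $m_y^\star$ was defined as an element of the $\arg\min$ in~\eqref{eq:cobra_inner_appendix}, it follows that $m_y^\star=\frac1{K}\sum_a\Phi_{T_{a\mid y}}$.

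As a cross-check, one could also argue via first-order conditions. The gradient is $\nabla F(m)=\frac{2}{K}\sum_a Q(m-\Phi_a)=2Q(m-\bar\Phi)$, which vanishes only at $m=\bar\Phi$ because $Q$ is invertible. The Hessian $2Q\succ0$ gives strict convexity, which again yields uniqueness.

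I do not expect a real obstacle here. The only point needing care is that positive definiteness of $Q$, and not mere semidefiniteness, is what gives uniqueness. The symmetric-positive-definite assumption from Section~\ref{sec:setting_notation} is exactly what the argument requires.
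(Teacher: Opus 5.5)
Your proof is correct, but its main line is not the paper's; the paper's proof is essentially your ``cross-check.'' The paper expands $f(m)$, computes $\nabla f(m)=2Qm-\frac{2}{|\A|}Q\sum_{a}\phi_a$, and solves the stationarity equation using invertibility of $Q$. It then uses strict convexity from the Hessian $2Q\succ 0$ to conclude that this stationary point is the unique global minimizer.

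Your completing-the-square identity $F(m)=\frac{1}{K}\sum_{a}\|\Phi_a-\bar\Phi\|_Q^2+\|m-\bar\Phi\|_Q^2$ avoids calculus and the ``strictly convex, hence the stationary point is global'' step. It also gives something extra: an exact global formula for the excess objective of any center, $F(m)-F(\bar\Phi)=\|m-\bar\Phi\|_Q^2$. In the paper's notation, the uniform average of squared vanilla residuals therefore exceeds that of the COBRA residuals by exactly $\|s_y\|_Q^2$, with no angle condition. This is a natural average-case companion to the worst-case comparison in Theorem~\ref{thm:max-residual}, which does need the angle condition.

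The paper's stationarity-plus-strict-convexity template is the more portable recipe. It characterizes the barycenter, though without a closed form, for other differentiable strictly convex discrepancies. Your decomposition, by contrast, depends on $d$ being the square of a norm induced by an inner product.
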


\begin{proof}
Fix $y\in\Y$ and abbreviate $\phi_a := \Phi_{T_{a\mid y}}\in\R^d$ for $a\in\A$.
Consider the objective
\[
f(m) \;:=\; \frac{1}{|\A|}\sum_{a\in\A}\|\phi_a - m\|_Q^2
= \frac{1}{|\A|}\sum_{a\in\A} (\phi_a-m)^\top Q(\phi_a-m).
\]
Expanding and using symmetry of $Q$,
\begin{align*}
f(m)
&= \frac{1}{|\A|}\sum_{a\in\A}
\left(\phi_a^\top Q\phi_a - 2 m^\top Q\phi_a + m^\top Q m\right) \\
&= \underbrace{\frac{1}{|\A|}\sum_{a\in\A}\phi_a^\top Q\phi_a}_{\text{constant in }m}
\;-\; \frac{2}{|\A|} m^\top Q\sum_{a\in\A}\phi_a
\;+\; m^\top Q m.
\end{align*}
Taking derivatives with respect to $m$ yields
\[
\nabla f(m) = 2Qm - \frac{2}{|\A|}Q\sum_{a\in\A}\phi_a
= \frac{2}{|\A|}Q\left(|\A|m - \sum_{a\in\A}\phi_a\right).
\]
Setting $\nabla f(m)=0$ and using that $Q$ is invertible (since $Q\succ 0$) gives
\[
|\A|m = \sum_{a\in\A}\phi_a
\quad\Longrightarrow\quad
m = \frac{1}{|\A|}\sum_{a\in\A}\phi_a.
\]
To show uniqueness, observe that the Hessian is
\[
\nabla^2 f(m) = 2Q \succ 0,
\]
Hence, $f$ is strictly convex, which implies that the stationary point is the unique global minimizer. Replacing $\phi_a$ with $\Phi_{T_{a\mid y}}$ then establishes the claim.
\end{proof}


\subsection{Worst-case subgroup residual comparison}
\label{sec:proof_thm_max_residual}

\begin{theorem}
\label{thm:max-residual_app}
Fix $y\in\Y$ and assume $d(u,v)=\|u-v\|_Q^2$ with $Q\succ 0$.
Let
\[
m_y^\star=\frac{1}{|\A|}\sum_{a\in\A}\Phi_{T_{a\mid y}}
\qquad\text{and}\qquad
m_y^{\mathrm{van}}=\sum_{a\in\A}\pi_{a\mid y}\Phi_{T_{a\mid y}}
\]
be the COBRA and Vanilla targets, respectively.
Define residuals
\[
\Delta_{a\mid y}^{\textsc{C}}:=\Phi_{T_{a\mid y}}-m_y^\star,
\qquad
\Delta_{a\mid y}^{\textsc{V}}:=\Phi_{T_{a\mid y}}-m_y^{\mathrm{van}},
\]
and the shift $s_y := m_y^{\mathrm{van}}-m_y^\star$.
Assume there exists $a^\dagger\in\arg\max_{a\in\A}\|\Delta_{a\mid y}^{\textsc{C}}\|_Q$
such that the following {condition} holds:
\begin{equation*}
\label{eq:antipodal_cond_app}
\langle \Delta_{a^\dagger\mid y}^{\textsc{C}},\, s_y\rangle_Q \;\le\; 0.
\end{equation*}
Then COBRA does not increase the worst-case subgroup residual:
\begin{equation*}
\label{eq:max_compare_app}
\max_{a\in\A}\|\Delta_{a\mid y}^{\textsc{C}}\|_Q
\;\le\;
\max_{a\in\A}\|\Delta_{a\mid y}^{\textsc{V}}\|_Q.
\end{equation*}
\end{theorem}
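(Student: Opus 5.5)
The plan is to compare only the two residuals attached to the distinguished subgroup $a^\dagger$, and then pass to maxima at the end. The key tool is the shift identity~\eqref{eq:residual_shift_identity_appendix}, $\Delta^{\textsc{V}}_{a\mid y}=\Delta^{\textsc{C}}_{a\mid y}-s_y$. Since both $m_y^\star$ and $m_y^{\mathrm{van}}$ are given explicitly in the statement, the closed form from Proposition~\ref{prop:mean-barycenter_app} is consistent with this setup. No further structure of the barycenter is needed.

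First, I would expand the squared $Q$-norm of the vanilla residual at $a^\dagger$, using bilinearity and symmetry of $\langle\cdot,\cdot\rangle_Q$:
\[
\|\Delta^{\textsc{V}}_{a^\dagger\mid y}\|_Q^2
=\|\Delta^{\textsc{C}}_{a^\dagger\mid y}\|_Q^2-2\langle \Delta^{\textsc{C}}_{a^\dagger\mid y},s_y\rangle_Q+\|s_y\|_Q^2 .
\]
By the hypothesis~\eqref{eq:antipodal_cond}, the middle term $-2\langle\cdot,\cdot\rangle_Q$ is nonnegative. The last term is nonnegative because $Q\succ 0$. Together these give
\[
\|\Delta^{\textsc{V}}_{a^\dagger\mid y}\|_Q^2\ge\|\Delta^{\textsc{C}}_{a^\dagger\mid y}\|_Q^2 .
\]
Taking square roots preserves the inequality, since both sides are nonnegative.

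Second, I would chain the inequalities using the choice of $a^\dagger$:
\[
\max_a\|\Delta^{\textsc{C}}_{a\mid y}\|_Q=\|\Delta^{\textsc{C}}_{a^\dagger\mid y}\|_Q\le\|\Delta^{\textsc{V}}_{a^\dagger\mid y}\|_Q\le\max_a\|\Delta^{\textsc{V}}_{a\mid y}\|_Q .
\]
The first equality holds because $a^\dagger$ is an argmax of the COBRA residual norms. The final inequality is trivial, since a single term never exceeds the maximum.

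There is no real obstacle in this argument. The only care needed is to use the symmetric $Q$-inner product consistently, so that the cross term appears with factor $2$, and to note that $\A$ is finite so that the maxima are attained.
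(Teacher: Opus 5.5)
Your proposal is correct and follows the paper's proof essentially step for step. Like the paper, you expand $\|\Delta^{\textsc{C}}_{a^\dagger\mid y}-s_y\|_Q^2$ via the shift identity, then use the sign condition together with $\|s_y\|_Q^2\ge 0$ to obtain the pointwise comparison at $a^\dagger$. You then lift this to the maxima using the argmax property of $a^\dagger$ and the trivial bound $\|\Delta^{\textsc{V}}_{a^\dagger\mid y}\|_Q\le\max_a\|\Delta^{\textsc{V}}_{a\mid y}\|_Q$.
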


\begin{proof}
Fix $y\in\Y$ and abbreviate $\Delta_a^{\textsc{C}}:=\Delta_{a\mid y}^{\textsc{C}}$,
$\Delta_a^{\textsc{V}}:=\Delta_{a\mid y}^{\textsc{V}}$, and $s:=s_y$ to simplify
notation. By the shift identity~\eqref{eq:residual_shift_identity_appendix},
\begin{equation*}
\Delta_a^{\textsc{V}} = \Delta_a^{\textsc{C}} - s \qquad \text{for all } a\in\A.
\tag*{(*)}\label{eq:deltaV_deltaC_shift}
\end{equation*}

Let $a^\dagger\in\arg\max_{a\in\A}\|\Delta_a^{\textsc{C}}\|_Q$ be an index that
attains the maximum COBRA residual and satisfies the condition
$\langle \Delta_{a^\dagger}^{\textsc{C}}, s\rangle_Q \le 0$.

\paragraph{Step 1: Compare the residual norms at $a^\dagger$.}
Using~\ref{eq:deltaV_deltaC_shift} and expanding the squared $Q$-norm,
\begin{align}
\|\Delta_{a^\dagger}^{\textsc{V}}\|_Q^2
&= \|\Delta_{a^\dagger}^{\textsc{C}} - s\|_Q^2 \nonumber\\
&= (\Delta_{a^\dagger}^{\textsc{C}} - s)^\top Q(\Delta_{a^\dagger}^{\textsc{C}} - s) \nonumber\\
&= (\Delta_{a^\dagger}^{\textsc{C}})^\top Q\Delta_{a^\dagger}^{\textsc{C}}
+ s^\top Q s
- 2(\Delta_{a^\dagger}^{\textsc{C}})^\top Q s \nonumber\\
&= \|\Delta_{a^\dagger}^{\textsc{C}}\|_Q^2 + \|s\|_Q^2 - 2\langle \Delta_{a^\dagger}^{\textsc{C}}, s\rangle_Q.
\tag*{(**)}\label{eq:norm_expand}
\end{align}
By assumption, $\langle \Delta_{a^\dagger}^{\textsc{C}}, s\rangle_Q \le 0$, hence
$-2\langle \Delta_{a^\dagger}^{\textsc{C}}, s\rangle_Q \ge 0$. Together with
$\|s\|_Q^2\ge 0$, equation~\ref{eq:norm_expand} implies
\begin{equation}
\|\Delta_{a^\dagger}^{\textsc{V}}\|_Q^2 \;\ge\; \|\Delta_{a^\dagger}^{\textsc{C}}\|_Q^2
\qquad\Longrightarrow\qquad
\|\Delta_{a^\dagger}^{\textsc{V}}\|_Q \;\ge\; \|\Delta_{a^\dagger}^{\textsc{C}}\|_Q.
\tag*{(***)}\label{eq:pointwise_compare}
\end{equation}

\paragraph{Step 2: Lift the pointwise comparison to a max comparison.}
Since $a^\dagger$ attains the maximum COBRA residual,
\[
\max_{a\in\A}\|\Delta_a^{\textsc{C}}\|_Q = \|\Delta_{a^\dagger}^{\textsc{C}}\|_Q.
\]
Also, by definition of the maximum,
\[
\max_{a\in\A}\|\Delta_a^{\textsc{V}}\|_Q \;\ge\; \|\Delta_{a^\dagger}^{\textsc{V}}\|_Q.
\]
Combining these with~\ref{eq:pointwise_compare} yields
\[
\max_{a\in\A}\|\Delta_a^{\textsc{V}}\|_Q
\;\ge\;
\|\Delta_{a^\dagger}^{\textsc{V}}\|_Q
\;\ge\;
\|\Delta_{a^\dagger}^{\textsc{C}}\|_Q
\;=\;
\max_{a\in\A}\|\Delta_a^{\textsc{C}}\|_Q,
\]
which is exactly~\eqref{eq:max_compare_app}.

\end{proof}

\section{Alternative Discrepancies for Barycenter}
\label{app:alt_discrep}

In contrast to squared Mahalanobis discrepancies, the following common choices generally \emph{do not} admit the uniform arithmetic mean as a unique minimizer of the barycenter objective, because the objective is either non-smooth (no unique first-order stationary), or smooth but non-quadratic.

\begin{itemize}
    \item \textbf{$\ell_1$ discrepancy.}
    \begin{equation*}
    D_{\ell_1}(\{\Phi_a\}, m)
    \;=\;
    \frac{1}{G}\sum_{a=1}^G \|\Phi_a - m\|_1
    \;=\;
    \frac{1}{G}\sum_{a=1}^G \sum_{i=1}^d \bigl|(\Phi_a-m)_i\bigr|.
    \end{equation*}
    This objective is convex but non-smooth; minimizers are coordinate-wise medians (generally not the arithmetic mean).

    \item \textbf{$\ell_2$ discrepancy.}
    \begin{equation*}
    D_{\ell_2}(\{\Phi_a\}, m)
    \;=\;
    \frac{1}{G}\sum_{a=1}^G \|\Phi_a - m\|_2 .
    \end{equation*}
    This objective is convex but non-quadratic; the optimality condition involves normalized residuals \(\sum_a \frac{m-\Phi_a}{\|m-\Phi_a\|_2}=0\), yielding the geometric median rather than the mean.

    \item \textbf{$\ell_\infty$ discrepancy.}
    \begin{equation*}
    D_{\ell_\infty}(\{\Phi_a\}, m)
    \;=\;
    \frac{1}{G}\sum_{a=1}^G \|\Phi_a - m\|_\infty
    \;=\;
    \frac{1}{G}\sum_{a=1}^G \max_{1\le i\le d}\bigl|(\Phi_a-m)_i\bigr|.
    \end{equation*}
    This objective is convex but non-smooth due to the max; minimizers are typically Chebyshev/minimax-type centers and need not coincide with the mean.

    \item \textbf{Cosine discrepancy.}
    Let $\cos(\Phi_a,m)=\frac{\langle \Phi_a,m\rangle}{\|\Phi_a\|_2\,\|m\|_2}$.
    \begin{equation*}
    D_{\cos}(\{\Phi_a\}, m)
    \;=\;
    \frac{1}{G}\sum_{a=1}^G \bigl(1-\cos(\Phi_a,m)\bigr)
    \;=\;
    \frac{1}{G}\sum_{a=1}^G
    \left(
    1-\frac{\langle \Phi_a,m\rangle}{\|\Phi_a\|_2\,\|m\|_2}
    \right).
    \end{equation*}
    This objective is scale-invariant in \(m\) and non-convex; the optimizer prioritizes directional alignment with the \(\Phi_a\)'s rather than Euclidean averaging.

    \item \textbf{Huber discrepancy.}
    Define the (scalar) Huber penalty
    \begin{equation*}
    h_\delta(t)=
    \begin{cases}
    \frac{1}{2}t^2, & |t|\le \delta,\\
    \delta|t|-\frac{1}{2}\delta^2, & |t|>\delta,
    \end{cases}
    \end{equation*}
    applied elementwise to coordinates. Then
    \begin{equation*}
    D_{\mathrm{Huber}}(\{\Phi_a\}, m)
    \;=\;
    \frac{1}{G}\sum_{a=1}^G \sum_{i=1}^d h_\delta\bigl((\Phi_a-m)_i\bigr).
    \end{equation*}
    While convex and piecewise-smooth, the first-order condition is a robust M-estimator with clipped residuals; the barycenter generally differs from the mean except in the purely quadratic regime \(|(\Phi_a-m)_i|\le\delta\).
\end{itemize}

\section{Datasets}
\label{app:datasets}

We evaluate methods under controlled spurious correlations between the classification label $y$ and a \emph{sensitive attribute} $a$ (i.e., group membership).
For each dataset, we construct a biased training split by assigning each class a designated majority group and sampling a fraction of that class from this group.
We quantify the correlation strength using the \textbf{\skeww}, defined as the fraction of training samples belonging to the majority sensitive group (reported either as a single value when uniform across classes, or per-class when it varies).
All test splits are \textbf{group-balanced}, meaning that for each class, the samples are uniformly distributed across the different sensitive groups.
Further information is provided in Table~\ref{tab:app_dataset_summary}. We now discuss the characteristics of each dataset in greater detail.
\begin{itemize}
  \setlength{\itemsep}{0.1em}
  \setlength{\topsep}{0.2em}
  \setlength{\parsep}{0pt}
  \setlength{\partopsep}{0pt}
\item \textbf{Colored MNIST.}
The foreground version ``{\cmnist (\fg)}'' correlates $y$ (digit identity) with the \emph{foreground color} $a$ by rendering digits in one dominant color per class (ten colors total), while the remaining samples are distributed across the other colors.
The background version ``{\cmnist (\bg)}'' instead correlates $y$ with the \emph{background color} $a$, leaving the digit foreground unchanged.
In both cases, the test set is generated by uniformly applying all colors per class to remove correlation.

\item \textbf{Colored Fashion-MNIST.}
The foreground version ``\cfmnist (\fg)'' correlates $y$ (object category) with the \emph{object color} $a$ using the same construction as \cmnist (\fg).
The background version ``\cfmnist (\bg)'' correlates $y$ with the \emph{background color} $a$ analogously to \cmnist (\bg).
Test splits are group-balanced across colors.

\item \textbf{\cifars.} This dataset introduces a binary sensitive attribute $a\in\{\text{color},\text{grayscale}\}$ by converting a portion of the images to grayscale in a class-dependent manner, inducing a correlation between $y$ and $a$ during training.
For evaluation, we duplicate the original CIFAR-10 test set, apply grayscale to the copy, and concatenate both halves, producing a group-balanced test split with doubled size.

\item \textbf{\utkface.}
This is an in-the-wild face dataset with over $20{,}000$ images and annotations for age, gender, and race, covering a wide age span and substantial variation in pose, illumination, and occlusion. 
We partition age into three classes and use race as the sensitive attribute, utilizing four major race groups from the provided race annotations.

\item \textbf{\bffhq.}
This is a gender-biased derivative of the FFHQ face dataset, constructed such that age serves as the prediction target while gender acts as a correlated bias. 
In particular, \bffhq defines two age classes, ``Young'' and ``Old'', with typical age ranges $10$--$29$ and $40$--$59$, and a strong training correlation where Young is predominantly female and Old is predominantly male.
In our setup, the task is age classification with two classes, and the sensitive attribute is gender.
Unlike the synthetic color datasets, these correlations arise from the data, and the effective \skeww can vary by class.


\end{itemize}
\vspace{-1em}
\textbf{Condensation ratio.}
When presenting condensation results, we adopt \IPC $\in\{10,50,100\}$
as the \textbf{condensation ratio}, defined as
$\frac{\text{\IPC}\cdot |{\cal Y}|}{|{\cal D}_{\text{train}}|}$, where $|{\cal Y}|$ denotes the total number of classes.


\begin{center}
\small
\setlength{\tabcolsep}{5pt}
\renewcommand{\arraystretch}{1.15}
\captionof{table}{Overview of the datasets and group-level statistics used in our experiments.
\skeww indicates the proportion of training examples belonging to the majority sensitive group (reported per class when it is not constant).
All test splits are balanced across groups.
The condensation ratio is reported for \IPC values $\in\{10,50,100\}$.}
\label{tab:app_dataset_summary}

\noindent\begin{minipage}{\linewidth}
\centering
\resizebox{\textwidth}{!}{%
\begin{tabular}{l l l c c r r l l c c c}
\toprule
\textbf{Dataset} & \textbf{Task label $y$} & \textbf{Sensitive attribute $a$} &
$|{\Y}|$ & $|{\A}|$ &
\textbf{Train} & \textbf{Test} &
\textbf{\skeww} & \textbf{Test split} &
\multicolumn{3}{c}{\textbf{Condensation ratio (\%)}} \\
\cmidrule(lr){10-12}
& & & & & & & & & \textbf{IPC=10} & \textbf{IPC=50} & \textbf{IPC=100} \\
\midrule
\cmnist (\fg)  & digit identity  & foreground color     & 10 & 10 & 60000 & 10000 & 0.90 & balanced & 0.17 & 0.83 & 1.67 \\
\cmnist (\bg)  & digit identity  & background color     & 10 & 10 & 60000 & 10000 & 0.90 & balanced & 0.17 & 0.83 & 1.67 \\
\cfmnist (\fg) & object category & object color         & 10 & 10 & 60000 & 10000 & 0.90 & balanced & 0.17 & 0.83 & 1.67 \\
\cfmnist (\bg) & object category & background color     & 10 & 10 & 60000 & 10000 & 0.90 & balanced & 0.17 & 0.83 & 1.67 \\
\cifars     & object category & grayscale indicator  & 10 &  2 & 50000 & 20000 & 0.90 & balanced & 0.20 & 1.00 & 2.00 \\
\utkface       & age    & race                 &  3 &  4 & 20813 &  1200 &
\begin{tabular}[c]{@{}l@{}}0.53 / 0.35 / 0.63\end{tabular}
& balanced & 0.14 & 0.72 & 1.44 \\
\bffhq         & age  & gender               &  2 &  2 & 19200 &  1000 &
\begin{tabular}[c]{@{}l@{}}0.995 / 0.995\end{tabular}
& balanced & 0.10 & 0.52 & 1.04 \\


\bottomrule
\end{tabular}%
}
\end{minipage}
\end{center}

\section{Details of Group-Imbalance and Separation-Effect Experiment}
\label{app:conflict_pattern}
In this section, we provide additional details on the experiment described in Section~\ref{sec:imblanceexp}.
For the group-imbalance sweep (varying \skeww), we use the same variant of \cifars introduced in Appendix~\ref{app:datasets}, but increase the ratio of group imbalance.
For the increasing group-separation setting (fixing \skeww and increasing \seper), we introduce a
\emph{hierarchical corruption severity} parameter $\alpha$ that specifies a cumulative sequence of image perturbations. Given a clean image $x$, we construct a corrupted image $x^{(\alpha)}$ by enforcing \textbf{monotonic nesting}, meaning that higher severities include
\emph{all} transformations from lower values of $\alpha$.
Transformations are applied sequentially and are cumulative:
\vspace{-0.5em}
\begin{itemize}
  \setlength{\itemsep}{0.1em}
  \setlength{\topsep}{0.2em}
  \setlength{\parsep}{0pt}
  \setlength{\partopsep}{0pt}
    \item $\alpha=0$: no corruption (the original image is returned).
    \item $\alpha\ge 1$: \textbf{channel shuffle} (permute the RGB channels, e.g.,
    $[R,G,B]\!\rightarrow\![G,B,R]$), producing a consistent appearance shift while
    preserving spatial structure.
    \item $\alpha\ge 2$: \textbf{structured mark}: overlay a thin diagonal line across
    the image, introducing a global structured artifact.
    \item $\alpha\ge 3$: \textbf{global noise + additional line}: add Gaussian noise
    to all pixels (with clipping to [0,255]) and draw a second diagonal
    line, increasing visual clutter and reducing the signal-to-noise ratio.
    \item $\alpha\ge 4$: \textbf{strong structural and photometric shift}: rotate the image by
    $270^\circ$, invert pixel intensities (image negative), and apply another round of
    Gaussian noise.
\end{itemize}


\section{Additional Results}
\label{app:addexp}
\subsection{Additional Results on Fairness-Aware Dataset Distillation}
\label{add:addexpstdall}

In addition to the $\EOD$ metric introduced in Section~\ref{sec:fairBG}, we extend this definition to the maximum and mean $\EOD$ gaps as follows.  
Let $\mathcal{A}=\{a_1,a_2,\dots,a_K\}$ denote the set of demographic groups, and let $A$ be the protected attribute.  
A sample $(x,y)$ is assigned to group $a_i$ if $A=a_i$.  
Let $\hat{Y}=f(X)$ be the model prediction, and define
$p^{\EOD}_y(a):=\Pr(\hat{Y}=y \mid Y=y,\; A=a)$.
We quantify disparities using the maximum and average $\EOD$ gaps:
\begin{equation*}
\EOD_M = \sup_{y\in\mathcal{Y},\,a_i,a_j\in\mathcal{A}}\Big|p^{\EOD}_y(a_i)-p^{\EOD}_y(a_j)\Big|, 
\qquad
\EOD_A = \mathbb{E}_{y \in \mathcal{Y}}\bigg[\sup_{a_i,a_j\in\mathcal{A}}\Big|p^{\EOD}_y(a_i)-p^{\EOD}_y(a_j)\Big|\bigg].
\end{equation*}
In the following, we present these metrics in the experimental setting described in Section~\ref{sec:results_protocol}, along with their statistical significance evaluated over 10 independent runs.


\begin{table*}[h]
\centering
\footnotesize

\caption{Accuracy (\%) on biased benchmarks. Each entry reports $\mathrm{Acc}\uparrow$ (mean $\pm$ std) over 10 independent runs.}

\label{tab:appendix_acc}
\resizebox{\textwidth}{!}{%
  \renewcommand{\arraystretch}{1.08}
  \setlength{\tabcolsep}{2.2pt}
\begin{tabular}{c|c|ccc|ccc|ccc|ccc}
\toprule
\multirow{2}{*}{\textbf{Dataset}} & \multirow{2}{*}{\textbf{IPC}}
& \multicolumn{3}{c|}{\textbf{DC}} & \multicolumn{3}{c|}{\textbf{DM}} & \multicolumn{3}{c|}{\textbf{CAFE}} & \multicolumn{3}{c}{\textbf{IDC}} \\
\cmidrule(lr){3-5}\cmidrule(lr){6-8}\cmidrule(lr){9-11}\cmidrule(lr){12-14}
& & Vanilla & FairDD & COBRA & Vanilla & FairDD & COBRA & Vanilla & FairDD & COBRA & Vanilla & FairDD & COBRA \\
\midrule

\multirow{3}{*}{\makecell[c]{UTKFace}}
& 10  & 60.43 $\pm$ 3.95 & 61.02 $\pm$ 3.64 & 62.83 $\pm$ 3.09 & 66.79 $\pm$ 1.12 & 67.50 $\pm$ 0.80 & 70.65 $\pm$ 0.68 & 64.64 $\pm$ 0.64 & 65.01 $\pm$ 1.30 & 66.08 $\pm$ 0.89 & 55.88 $\pm$ 0.65 & 60.48 $\pm$ 0.85 & 55.60 $\pm$ 0.38 \\
& 50  & 71.02 $\pm$ 0.67 & 70.86 $\pm$ 1.09 & 72.31 $\pm$ 0.81 & 72.31 $\pm$ 0.50 & 74.18 $\pm$ 0.42 & 74.83 $\pm$ 0.63 & 69.85 $\pm$ 0.99 & 71.72 $\pm$ 0.71 & 71.50 $\pm$ 0.69 & 58.33 $\pm$ 0.37 & 55.97 $\pm$ 0.51 & 57.85 $\pm$ 0.56 \\
& 100 & 70.78 $\pm$ 1.11 & 70.04 $\pm$ 1.68 & 72.33 $\pm$ 1.43 & 72.86 $\pm$ 0.28 & 76.01 $\pm$ 0.34 & 75.88 $\pm$ 0.53 & 72.64 $\pm$ 0.61 & 73.75 $\pm$ 1.10 & 74.79 $\pm$ 0.35 & 38.35 $\pm$ 1.48 & 37.30 $\pm$ 0.88 & 37.42 $\pm$ 1.45 \\
\midrule

\multirow{3}{*}{\makecell[c]{C-MNIST\\(FG)}}
& 10  & 72.01 $\pm$ 1.07 & 91.40 $\pm$ 0.26 & 91.65 $\pm$ 0.32 & 26.20 $\pm$ 0.49 & 94.02 $\pm$ 0.19 & 94.09 $\pm$ 0.18 & 22.06 $\pm$ 0.30 & 92.77 $\pm$ 0.16 & 92.10 $\pm$ 0.25 & 28.18 $\pm$ 1.43 & 92.75 $\pm$ 0.20 & 91.97 $\pm$ 0.27 \\
& 50  & 88.50 $\pm$ 1.17 & 92.38 $\pm$ 0.21 & 95.30 $\pm$ 0.10 & 59.57 $\pm$ 1.54 & 96.09 $\pm$ 0.07 & 96.89 $\pm$ 0.11 & 47.10 $\pm$ 0.93 & 95.91 $\pm$ 0.16 & 96.12 $\pm$ 0.11 & 42.54 $\pm$ 1.06 & 94.16 $\pm$ 0.07 & 93.54 $\pm$ 0.09 \\
& 100 & 85.60 $\pm$ 1.20 & 92.83 $\pm$ 0.37 & 95.12 $\pm$ 0.37 & 71.14 $\pm$ 1.60 & 96.97 $\pm$ 0.10 & 97.74 $\pm$ 0.07 & 66.52 $\pm$ 0.94 & 96.65 $\pm$ 0.05 & 97.10 $\pm$ 0.11 & 34.30 $\pm$ 0.86 & 93.96 $\pm$ 0.21 & 92.56 $\pm$ 0.28 \\
\midrule

\multirow{3}{*}{\makecell[c]{C-MNIST\\(BG)}}
& 10  & 67.63 $\pm$ 2.03 & 91.44 $\pm$ 0.36 & 91.27 $\pm$ 0.21 & 20.32 $\pm$ 1.11 & 94.67 $\pm$ 0.07 & 94.45 $\pm$ 0.18 & 18.53 $\pm$ 0.90 & 79.74 $\pm$ 1.96 & 89.56 $\pm$ 0.54 & 19.30 $\pm$ 0.35 & 91.63 $\pm$ 0.21 & 90.99 $\pm$ 0.13 \\
& 50  & 88.98 $\pm$ 0.71 & 92.66 $\pm$ 0.28 & 93.50 $\pm$ 0.19 & 48.81 $\pm$ 1.60 & 96.40 $\pm$ 0.05 & 96.76 $\pm$ 0.14 & 47.59 $\pm$ 2.00 & 81.18 $\pm$ 1.25 & 93.43 $\pm$ 0.20 & 36.78 $\pm$ 1.55 & 93.92 $\pm$ 0.17 & 92.27 $\pm$ 0.25 \\
& 100 & 87.27 $\pm$ 1.06 & 92.44 $\pm$ 0.45 & 92.56 $\pm$ 0.19 & 70.27 $\pm$ 1.53 & 97.24 $\pm$ 0.05 & 97.50 $\pm$ 0.06 & 66.07 $\pm$ 1.35 & 80.05 $\pm$ 1.00 & 94.94 $\pm$ 0.18 & 40.83 $\pm$ 0.68 & 93.17 $\pm$ 0.22 & 80.46 $\pm$ 1.81 \\
\midrule

\multirow{3}{*}{\makecell[c]{C-FMNIST\\(FG)}}
& 10  & 61.96 $\pm$ 1.09 & 75.21 $\pm$ 0.57 & 77.00 $\pm$ 0.22 & 33.85 $\pm$ 0.83 & 76.57 $\pm$ 0.29 & 77.07 $\pm$ 0.21 & 27.91 $\pm$ 0.63 & 73.66 $\pm$ 0.36 & 74.56 $\pm$ 0.35 & 41.77 $\pm$ 0.57 & 75.82 $\pm$ 0.27 & 75.89 $\pm$ 0.23 \\
& 50  & 65.74 $\pm$ 0.35 & 75.34 $\pm$ 0.12 & 74.52 $\pm$ 0.24 & 49.90 $\pm$ 0.59 & 81.44 $\pm$ 0.18 & 82.87 $\pm$ 0.12 & 48.15 $\pm$ 1.20 & 80.33 $\pm$ 0.08 & 81.07 $\pm$ 0.18 & 57.00 $\pm$ 0.51 & 77.08 $\pm$ 0.13 & 79.55 $\pm$ 0.26 \\
& 100 & 61.23 $\pm$ 1.26 & 68.57 $\pm$ 1.25 & 77.15 $\pm$ 0.61 & 58.09 $\pm$ 0.34 & 82.81 $\pm$ 0.22 & 84.48 $\pm$ 0.24 & 54.72 $\pm$ 1.15 & 82.02 $\pm$ 0.15 & 83.41 $\pm$ 0.12 & 57.95 $\pm$ 0.55 & 78.20 $\pm$ 0.14 & 77.41 $\pm$ 0.24 \\
\midrule

\multirow{3}{*}{\makecell[c]{C-FMNIST\\(BG)}}
& 10  & 49.32 $\pm$ 0.77 & 69.04 $\pm$ 0.38 & 70.43 $\pm$ 0.40 & 22.05 $\pm$ 0.57 & 70.91 $\pm$ 0.48 & 70.62 $\pm$ 0.18 & 23.06 $\pm$ 0.87 & 59.35 $\pm$ 0.37 & 63.52 $\pm$ 0.48 & 33.46 $\pm$ 1.01 & 67.78 $\pm$ 0.31 & 67.48 $\pm$ 0.46 \\
& 50  & 61.59 $\pm$ 0.75 & 76.06 $\pm$ 0.18 & 73.70 $\pm$ 0.22 & 36.70 $\pm$ 0.82 & 78.53 $\pm$ 0.20 & 79.83 $\pm$ 0.20 & 36.99 $\pm$ 0.54 & 66.36 $\pm$ 0.65 & 70.63 $\pm$ 0.38 & 41.12 $\pm$ 0.87 & 72.62 $\pm$ 0.30 & 71.66 $\pm$ 0.39 \\
& 100 & 63.64 $\pm$ 0.39 & 70.69 $\pm$ 0.49 & 74.98 $\pm$ 0.21 & 46.17 $\pm$ 0.78 & 79.84 $\pm$ 0.17 & 81.40 $\pm$ 0.13 & 42.77 $\pm$ 0.66 & 67.78 $\pm$ 0.35 & 74.20 $\pm$ 0.40 & 41.40 $\pm$ 0.99 & 71.79 $\pm$ 0.63 & 70.13 $\pm$ 0.62 \\
\midrule

\multirow{3}{*}{\makecell[c]{CIFAR10-S}}
& 10  & 36.66 $\pm$ 0.64 & 40.70 $\pm$ 0.41 & 40.87 $\pm$ 0.49 & 37.25 $\pm$ 0.31 & 43.94 $\pm$ 0.62 & 44.50 $\pm$ 0.56 & 35.77 $\pm$ 0.54 & 36.00 $\pm$ 0.88 & 39.49 $\pm$ 0.46 & 34.64 $\pm$ 0.17 & 40.10 $\pm$ 0.45 & 40.75 $\pm$ 0.14 \\
& 50  & 39.46 $\pm$ 0.98 & 46.16 $\pm$ 0.44 & 46.63 $\pm$ 0.56 & 45.26 $\pm$ 0.60 & 57.73 $\pm$ 0.35 & 57.89 $\pm$ 0.24 & 43.47 $\pm$ 0.76 & 44.47 $\pm$ 0.48 & 54.13 $\pm$ 0.34 & 39.43 $\pm$ 0.33 & 51.24 $\pm$ 0.83 & 50.13 $\pm$ 0.65 \\
& 100 & 41.84 $\pm$ 0.56 & 49.23 $\pm$ 0.49 & 50.45 $\pm$ 0.74 & 45.44 $\pm$ 0.46 & 61.20 $\pm$ 0.44 & 62.38 $\pm$ 0.52 & 43.24 $\pm$ 0.71 & 44.14 $\pm$ 0.75 & 58.02 $\pm$ 0.45 & 40.09 $\pm$ 0.42 & 51.14 $\pm$ 0.29 & 50.77 $\pm$ 0.41 \\
\midrule

\multirow{3}{*}{\makecell[c]{BFFHQ}}
& 10  & 61.32 $\pm$ 0.97 & 63.78 $\pm$ 2.25 & 65.22 $\pm$ 2.68 & 65.12 $\pm$ 0.95 & 68.42 $\pm$ 1.00 & 69.47 $\pm$ 0.78 & 63.53 $\pm$ 0.82 & 63.80 $\pm$ 1.27 & 65.34 $\pm$ 0.89 & 62.06 $\pm$ 0.41 & 64.14 $\pm$ 0.36 & 66.26 $\pm$ 0.36 \\
& 50  & 64.45 $\pm$ 0.94 & 69.83 $\pm$ 0.60 & 71.58 $\pm$ 0.88 & 62.93 $\pm$ 0.58 & 72.62 $\pm$ 1.04 & 72.45 $\pm$ 0.91 & 65.52 $\pm$ 0.31 & 70.12 $\pm$ 0.54 & 70.65 $\pm$ 0.79 & 59.60 $\pm$ 0.71 & 63.60 $\pm$ 0.70 & 65.90 $\pm$ 0.81 \\
& 100 & 65.55 $\pm$ 0.71 & 69.02 $\pm$ 1.37 & 71.07 $\pm$ 2.42 & 65.77 $\pm$ 0.39 & 74.42 $\pm$ 0.72 & 74.25 $\pm$ 0.26 & 65.47 $\pm$ 0.60 & 71.95 $\pm$ 0.85 & 74.68 $\pm$ 0.59 & 51.14 $\pm$ 1.85 & 50.22 $\pm$ 1.61 & 50.88 $\pm$ 0.61 \\
\bottomrule
\end{tabular}%
}
\end{table*}

\begin{table*}[!b]
\centering
\footnotesize
\caption{Maximum Equalized Odds Difference (\%) on biased benchmarks. Each entry reports $\EOD_M\downarrow$ (mean $\pm$ std) over 10 independent runs.}

\label{tab:appendix_maxeod}
\resizebox{\textwidth}{!}{%
  \renewcommand{\arraystretch}{1.08}
  \setlength{\tabcolsep}{2.2pt}
\begin{tabular}{c|c|ccc|ccc|ccc|ccc}
\toprule
\multirow{2}{*}{\textbf{Dataset}} & \multirow{2}{*}{\textbf{IPC}}
& \multicolumn{3}{c|}{\textbf{DC}} & \multicolumn{3}{c|}{\textbf{DM}} & \multicolumn{3}{c|}{\textbf{CAFE}} & \multicolumn{3}{c}{\textbf{IDC}} \\
\cmidrule(lr){3-5}\cmidrule(lr){6-8}\cmidrule(lr){9-11}\cmidrule(lr){12-14}
& & Vanilla & FairDD & COBRA & Vanilla & FairDD & COBRA & Vanilla & FairDD & COBRA & Vanilla & FairDD & COBRA \\
\midrule

\multirow{3}{*}{\makecell[c]{UTKFace}}
& 10  & 50.83 $\pm$ 4.45 & 45.83 $\pm$ 6.07 & 36.67 $\pm$ 6.60 & 54.17 $\pm$ 4.14 & 40.17 $\pm$ 5.27 & 38.33 $\pm$ 1.89 & 66.83 $\pm$ 3.44 & 41.17 $\pm$ 3.93 & 40.70 $\pm$ 4.12 & 61.40 $\pm$ 1.36 & 49.20 $\pm$ 3.25 & 45.20 $\pm$ 2.79 \\
& 50  & 51.83 $\pm$ 2.03 & 35.67 $\pm$ 5.34 & 29.00 $\pm$ 2.38 & 53.50 $\pm$ 1.89 & 38.83 $\pm$ 1.77 & 35.00 $\pm$ 2.00 & 51.67 $\pm$ 2.21 & 43.33 $\pm$ 0.94 & 40.30 $\pm$ 2.15 & 52.20 $\pm$ 2.71 & 46.20 $\pm$ 1.83 & 38.80 $\pm$ 2.56 \\
& 100 & 47.83 $\pm$ 3.44 & 26.50 $\pm$ 1.98 & 26.67 $\pm$ 3.04 & 48.83 $\pm$ 1.67 & 31.00 $\pm$ 1.29 & 32.33 $\pm$ 2.36 & 46.67 $\pm$ 2.43 & 41.00 $\pm$ 1.63 & 36.83 $\pm$ 2.03 & 18.40 $\pm$ 5.46 & 13.60 $\pm$ 5.39 & 12.14 $\pm$ 4.74 \\
\midrule

\multirow{3}{*}{\makecell[c]{C-MNIST\\(FG)}}
& 10  & 99.49 $\pm$ 0.51 & 22.88 $\pm$ 3.27 & 21.01 $\pm$ 2.89 & 100.00 $\pm$ 0.00 & 15.02 $\pm$ 2.86 & 14.74 $\pm$ 3.99 & 100.00 $\pm$ 0.00 & 11.73 $\pm$ 2.04 & 18.10 $\pm$ 2.94 & 100.00 $\pm$ 0.00 & 13.26 $\pm$ 1.83 & 18.78 $\pm$ 1.89 \\
& 50  & 64.86 $\pm$ 9.31 & 26.44 $\pm$ 3.30 & 10.39 $\pm$ 1.04 & 100.00 $\pm$ 0.00 & 10.57 $\pm$ 2.48 & 8.10 $\pm$ 0.98 & 100.00 $\pm$ 0.00 & 11.38 $\pm$ 0.96 & 10.01 $\pm$ 1.21 & 100.00 $\pm$ 0.00 & 17.55 $\pm$ 1.00 & 17.35 $\pm$ 1.58 \\
& 100 & 83.09 $\pm$ 8.78 & 22.44 $\pm$ 6.63 & 17.39 $\pm$ 3.61 & 100.00 $\pm$ 0.00 & 8.12 $\pm$ 0.68 & 7.62 $\pm$ 0.88 & 100.00 $\pm$ 0.00 & 9.75 $\pm$ 0.82 & 8.95 $\pm$ 0.76 & 100.00 $\pm$ 0.00 & 14.67 $\pm$ 2.28 & 18.40 $\pm$ 2.18 \\
\midrule

\multirow{3}{*}{\makecell[c]{C-MNIST\\(BG)}}
& 10  & 99.67 $\pm$ 0.47 & 20.79 $\pm$ 5.00 & 17.51 $\pm$ 2.51 & 100.00 $\pm$ 0.00 & 15.12 $\pm$ 2.82 & 12.99 $\pm$ 2.42 & 100.00 $\pm$ 0.00 & 91.14 $\pm$ 3.29 & 17.29 $\pm$ 3.62 & 100.00 $\pm$ 0.00 & 18.13 $\pm$ 4.25 & 13.13 $\pm$ 1.93 \\
& 50  & 58.91 $\pm$ 9.19 & 19.77 $\pm$ 4.79 & 27.26 $\pm$ 3.88 & 100.00 $\pm$ 0.00 & 8.45 $\pm$ 0.73 & 7.46 $\pm$ 0.78 & 100.00 $\pm$ 0.00 & 90.39 $\pm$ 8.15 & 17.18 $\pm$ 2.28 & 100.00 $\pm$ 0.00 & 14.90 $\pm$ 1.66 & 17.10 $\pm$ 2.72 \\
& 100 & 94.55 $\pm$ 3.32 & 56.09 $\pm$ 14.35 & 30.09 $\pm$ 1.84 & 100.00 $\pm$ 0.00 & 8.29 $\pm$ 0.89 & 7.58 $\pm$ 1.09 & 100.00 $\pm$ 0.00 & 97.45 $\pm$ 1.51 & 13.29 $\pm$ 0.87 & 100.00 $\pm$ 0.00 & 14.82 $\pm$ 2.14 & 81.79 $\pm$ 8.62 \\
\midrule

\multirow{3}{*}{\makecell[c]{C-FMNIST\\(FG)}}
& 10  & 99.00 $\pm$ 0.58 & 53.83 $\pm$ 5.87 & 37.00 $\pm$ 5.16 & 100.00 $\pm$ 0.00 & 29.50 $\pm$ 1.26 & 25.83 $\pm$ 3.53 & 100.00 $\pm$ 0.00 & 45.50 $\pm$ 6.29 & 25.50 $\pm$ 3.77 & 100.00 $\pm$ 0.00 & 47.20 $\pm$ 6.82 & 36.00 $\pm$ 3.41 \\
& 50  & 100.00 $\pm$ 0.00 & 47.50 $\pm$ 5.71 & 23.00 $\pm$ 4.04 & 100.00 $\pm$ 0.00 & 25.67 $\pm$ 2.21 & 21.00 $\pm$ 3.00 & 100.00 $\pm$ 0.00 & 21.50 $\pm$ 2.36 & 25.40 $\pm$ 2.11 & 100.00 $\pm$ 0.00 & 35.00 $\pm$ 3.85 & 29.00 $\pm$ 2.00 \\
& 100 & 100.00 $\pm$ 0.00 & 74.83 $\pm$ 13.97 & 36.83 $\pm$ 3.85 & 100.00 $\pm$ 0.00 & 26.33 $\pm$ 2.36 & 24.17 $\pm$ 1.34 & 100.00 $\pm$ 0.00 & 21.83 $\pm$ 3.58 & 19.50 $\pm$ 2.97 & 100.00 $\pm$ 0.00 & 40.60 $\pm$ 6.83 & 31.80 $\pm$ 3.97 \\
\midrule

\multirow{3}{*}{\makecell[c]{C-FMNIST\\(BG)}}
& 10  & 100.00 $\pm$ 0.00 & 61.00 $\pm$ 5.42 & 34.50 $\pm$ 3.40 & 100.00 $\pm$ 0.00 & 44.67 $\pm$ 3.64 & 33.80 $\pm$ 1.72 & 100.00 $\pm$ 0.00 & 90.17 $\pm$ 2.34 & 46.70 $\pm$ 9.92 & 100.00 $\pm$ 0.00 & 47.20 $\pm$ 1.60 & 30.40 $\pm$ 3.50 \\
& 50  & 100.00 $\pm$ 0.00 & 50.00 $\pm$ 7.81 & 27.17 $\pm$ 3.24 & 100.00 $\pm$ 0.00 & 25.17 $\pm$ 2.27 & 21.40 $\pm$ 1.36 & 100.00 $\pm$ 0.00 & 90.00 $\pm$ 4.51 & 53.00 $\pm$ 5.66 & 100.00 $\pm$ 0.00 & 36.60 $\pm$ 4.84 & 34.00 $\pm$ 2.00 \\
& 100 & 99.00 $\pm$ 0.00 & 50.67 $\pm$ 5.02 & 36.67 $\pm$ 4.07 & 100.00 $\pm$ 0.00 & 28.00 $\pm$ 2.94 & 22.40 $\pm$ 3.26 & 100.00 $\pm$ 0.00 & 95.17 $\pm$ 1.77 & 65.40 $\pm$ 8.01 & 100.00 $\pm$ 0.00 & 46.00 $\pm$ 4.20 & 38.80 $\pm$ 1.17 \\
\midrule

\multirow{3}{*}{\makecell[c]{CIFAR10-S}}
& 10  & 47.12 $\pm$ 3.12 & 26.30 $\pm$ 3.64 & 17.83 $\pm$ 1.99 & 56.25 $\pm$ 1.70 & 25.58 $\pm$ 2.34 & 20.18 $\pm$ 1.42 & 62.27 $\pm$ 3.58 & 39.97 $\pm$ 4.64 & 30.48 $\pm$ 1.90 & 54.60 $\pm$ 1.68 & 28.14 $\pm$ 2.70 & 22.42 $\pm$ 2.48 \\
& 50  & 71.85 $\pm$ 2.45 & 35.65 $\pm$ 1.73 & 26.18 $\pm$ 2.40 & 73.22 $\pm$ 2.07 & 25.40 $\pm$ 2.31 & 16.70 $\pm$ 2.65 & 61.93 $\pm$ 1.41 & 36.10 $\pm$ 3.48 & 30.28 $\pm$ 2.59 & 63.54 $\pm$ 1.01 & 37.68 $\pm$ 1.13 & 25.12 $\pm$ 0.99 \\
& 100 & 69.37 $\pm$ 2.50 & 41.43 $\pm$ 3.21 & 21.15 $\pm$ 3.67 & 82.87 $\pm$ 0.94 & 25.17 $\pm$ 3.65 & 9.37 $\pm$ 1.00 & 73.33 $\pm$ 1.98 & 27.73 $\pm$ 3.65 & 18.70 $\pm$ 4.58 & 60.64 $\pm$ 2.69 & 30.80 $\pm$ 1.18 & 22.44 $\pm$ 1.57 \\
\midrule

\multirow{3}{*}{\makecell[c]{BFFHQ}}
& 10  & 48.80 $\pm$ 3.89 & 43.80 $\pm$ 6.07 & 32.87 $\pm$ 6.87 & 44.80 $\pm$ 1.53 & 22.13 $\pm$ 2.47 & 15.73 $\pm$ 1.24 & 37.20 $\pm$ 3.27 & 22.33 $\pm$ 2.71 & 19.56 $\pm$ 3.22 & 45.20 $\pm$ 2.26 & 33.60 $\pm$ 1.43 & 22.96 $\pm$ 2.68 \\
& 50  & 52.40 $\pm$ 3.16 & 53.53 $\pm$ 3.89 & 40.73 $\pm$ 5.21 & 60.27 $\pm$ 0.88 & 24.13 $\pm$ 3.21 & 15.13 $\pm$ 2.26 & 62.87 $\pm$ 2.30 & 38.40 $\pm$ 3.12 & 16.64 $\pm$ 3.01 & 35.76 $\pm$ 1.82 & 33.52 $\pm$ 3.64 & 23.68 $\pm$ 0.47 \\
& 100 & 59.60 $\pm$ 1.53 & 56.93 $\pm$ 4.54 & 44.07 $\pm$ 4.66 & 63.47 $\pm$ 1.53 & 22.53 $\pm$ 1.49 & 7.87 $\pm$ 1.36 & 65.27 $\pm$ 1.52 & 34.80 $\pm$ 2.41 & 15.36 $\pm$ 3.01 & 7.12 $\pm$ 4.26 & 5.60 $\pm$ 2.32 & 5.28 $\pm$ 3.50 \\
\bottomrule
\end{tabular}%
}
\end{table*}

\begin{table*}[!t]
\centering
\footnotesize
\caption{Mean Equalized Odds Difference (\%) on biased benchmarks. Each entry reports $\EOD_A\downarrow$ (mean $\pm$ std) over 10 independent runs.}

\label{tab:appendix_meaneod}
\resizebox{\textwidth}{!}{%
  \renewcommand{\arraystretch}{1.08}
  \setlength{\tabcolsep}{2.2pt}
\begin{tabular}{c|c|ccc|ccc|ccc|ccc}
\toprule
\multirow{2}{*}{\textbf{Dataset}} & \multirow{2}{*}{\textbf{IPC}}
& \multicolumn{3}{c|}{\textbf{DC}} & \multicolumn{3}{c|}{\textbf{DM}} & \multicolumn{3}{c|}{\textbf{CAFE}} & \multicolumn{3}{c}{\textbf{IDC}} \\
\cmidrule(lr){3-5}\cmidrule(lr){6-8}\cmidrule(lr){9-11}\cmidrule(lr){12-14}
& & Vanilla & FairDD & COBRA & Vanilla & FairDD & COBRA & Vanilla & FairDD & COBRA & Vanilla & FairDD & COBRA \\
\midrule

\multirow{3}{*}{\makecell[c]{UTKFace}}
& 10  & 32.61 $\pm$ 3.87 & 22.39 $\pm$ 1.81 & 20.44 $\pm$ 1.24 & 33.39 $\pm$ 2.53 & 22.78 $\pm$ 3.25 & 20.78 $\pm$ 1.73 & 45.95 $\pm$ 2.77 & 22.67 $\pm$ 2.90 & 21.07 $\pm$ 1.83 & 35.53 $\pm$ 1.41 & 27.00 $\pm$ 1.47 & 21.06 $\pm$ 1.60 \\
& 50  & 33.78 $\pm$ 1.52 & 23.11 $\pm$ 1.76 & 19.78 $\pm$ 1.73 & 33.44 $\pm$ 1.24 & 23.00 $\pm$ 0.61 & 20.61 $\pm$ 0.65 & 33.83 $\pm$ 1.86 & 24.72 $\pm$ 1.43 & 23.93 $\pm$ 1.26 & 24.13 $\pm$ 1.66 & 21.47 $\pm$ 1.20 & 18.80 $\pm$ 0.88 \\
& 100 & 34.22 $\pm$ 1.83 & 20.05 $\pm$ 1.75 & 21.39 $\pm$ 2.81 & 32.66 $\pm$ 1.33 & 21.28 $\pm$ 1.57 & 19.16 $\pm$ 1.40 & 31.50 $\pm$ 2.04 & 24.61 $\pm$ 1.14 & 23.67 $\pm$ 1.67 & 10.40 $\pm$ 1.65 & 7.73 $\pm$ 2.67 & 7.05 $\pm$ 1.16 \\
\midrule

\multirow{3}{*}{\makecell[c]{C-MNIST\\(FG)}}
& 10  & 63.64 $\pm$ 2.54 & 10.34 $\pm$ 0.72 & 8.96 $\pm$ 0.43 & 99.97 $\pm$ 0.05 & 7.26 $\pm$ 0.50 & 6.71 $\pm$ 0.18 & 99.98 $\pm$ 0.04 & 7.49 $\pm$ 0.65 & 9.72 $\pm$ 0.57 & 99.82 $\pm$ 0.12 & 7.38 $\pm$ 0.31 & 8.36 $\pm$ 0.48 \\
& 50  & 26.68 $\pm$ 2.46 & 10.51 $\pm$ 0.42 & 5.96 $\pm$ 0.13 & 87.51 $\pm$ 2.50 & 5.62 $\pm$ 0.46 & 4.74 $\pm$ 0.27 & 96.52 $\pm$ 0.98 & 6.99 $\pm$ 0.33 & 5.88 $\pm$ 0.47 & 99.07 $\pm$ 0.41 & 7.21 $\pm$ 0.28 & 7.76 $\pm$ 0.31 \\
& 100 & 43.60 $\pm$ 5.62 & 10.70 $\pm$ 1.32 & 8.05 $\pm$ 0.93 & 73.06 $\pm$ 4.69 & 4.62 $\pm$ 0.15 & 4.12 $\pm$ 0.16 & 76.49 $\pm$ 2.24 & 5.84 $\pm$ 0.40 & 5.19 $\pm$ 0.48 & 99.69 $\pm$ 0.16 & 7.39 $\pm$ 0.38 & 8.95 $\pm$ 0.62 \\
\midrule

\multirow{3}{*}{\makecell[c]{C-MNIST\\(BG)}}
& 10  & 71.07 $\pm$ 3.71 & 10.07 $\pm$ 1.40 & 8.80 $\pm$ 1.02 & 99.63 $\pm$ 0.07 & 7.51 $\pm$ 0.28 & 7.04 $\pm$ 0.69 & 99.68 $\pm$ 0.07 & 47.84 $\pm$ 5.42 & 10.10 $\pm$ 0.59 & 99.86 $\pm$ 0.10 & 9.66 $\pm$ 0.62 & 8.15 $\pm$ 0.46 \\
& 50  & 23.91 $\pm$ 1.75 & 9.63 $\pm$ 1.36 & 8.39 $\pm$ 0.59 & 98.92 $\pm$ 0.55 & 5.31 $\pm$ 0.23 & 5.10 $\pm$ 0.34 & 97.96 $\pm$ 0.65 & 51.41 $\pm$ 4.10 & 8.74 $\pm$ 0.62 & 98.35 $\pm$ 0.87 & 7.50 $\pm$ 0.41 & 9.05 $\pm$ 0.42 \\
& 100 & 36.44 $\pm$ 3.02 & 14.25 $\pm$ 2.02 & 11.91 $\pm$ 1.35 & 76.36 $\pm$ 3.44 & 4.81 $\pm$ 0.18 & 4.74 $\pm$ 0.37 & 82.91 $\pm$ 2.77 & 56.28 $\pm$ 1.90 & 7.67 $\pm$ 0.40 & 98.20 $\pm$ 0.50 & 9.05 $\pm$ 0.85 & 41.93 $\pm$ 5.20 \\
\midrule

\multirow{3}{*}{\makecell[c]{C-FMNIST\\(FG)}}
& 10  & 74.10 $\pm$ 2.12 & 26.02 $\pm$ 2.35 & 17.43 $\pm$ 0.56 & 99.20 $\pm$ 0.06 & 17.25 $\pm$ 0.48 & 15.93 $\pm$ 0.98 & 99.00 $\pm$ 0.18 & 22.85 $\pm$ 1.22 & 16.64 $\pm$ 1.02 & 98.68 $\pm$ 0.37 & 21.34 $\pm$ 1.16 & 16.90 $\pm$ 1.16 \\
& 50  & 76.23 $\pm$ 1.69 & 21.05 $\pm$ 1.26 & 11.12 $\pm$ 1.01 & 96.43 $\pm$ 0.82 & 14.82 $\pm$ 0.85 & 12.65 $\pm$ 0.80 & 95.82 $\pm$ 1.36 & 12.23 $\pm$ 0.41 & 14.49 $\pm$ 0.72 & 89.92 $\pm$ 2.28 & 18.06 $\pm$ 0.39 & 14.38 $\pm$ 0.90 \\
& 100 & 84.82 $\pm$ 3.17 & 27.23 $\pm$ 2.52 & 15.87 $\pm$ 0.72 & 87.83 $\pm$ 0.57 & 13.43 $\pm$ 0.70 & 11.70 $\pm$ 0.56 & 88.83 $\pm$ 0.91 & 11.45 $\pm$ 0.95 & 11.76 $\pm$ 0.69 & 89.82 $\pm$ 1.06 & 17.52 $\pm$ 1.10 & 16.80 $\pm$ 0.84 \\
\midrule

\multirow{3}{*}{\makecell[c]{C-FMNIST\\(BG)}}
& 10  & 90.10 $\pm$ 0.37 & 32.92 $\pm$ 1.62 & 21.38 $\pm$ 1.35 & 99.50 $\pm$ 0.08 & 24.05 $\pm$ 0.40 & 21.48 $\pm$ 1.23 & 99.57 $\pm$ 0.05 & 51.73 $\pm$ 2.24 & 25.80 $\pm$ 1.87 & 99.30 $\pm$ 0.13 & 27.88 $\pm$ 1.16 & 19.00 $\pm$ 0.86 \\
& 50  & 72.08 $\pm$ 0.84 & 24.93 $\pm$ 2.14 & 16.40 $\pm$ 0.77 & 99.30 $\pm$ 0.15 & 15.80 $\pm$ 1.07 & 13.70 $\pm$ 0.94 & 99.50 $\pm$ 0.14 & 50.02 $\pm$ 2.22 & 28.20 $\pm$ 2.26 & 99.34 $\pm$ 0.23 & 19.90 $\pm$ 1.26 & 20.02 $\pm$ 1.37 \\
& 100 & 64.03 $\pm$ 1.82 & 23.15 $\pm$ 1.35 & 19.08 $\pm$ 1.83 & 98.17 $\pm$ 0.93 & 15.17 $\pm$ 0.36 & 13.48 $\pm$ 1.01 & 98.10 $\pm$ 0.40 & 49.08 $\pm$ 1.55 & 32.83 $\pm$ 2.46 & 97.84 $\pm$ 0.48 & 25.02 $\pm$ 1.57 & 24.52 $\pm$ 1.05 \\
\midrule

\multirow{3}{*}{\makecell[c]{CIFAR10-S}}
& 10  & 27.58 $\pm$ 2.44 & 9.30 $\pm$ 0.59 & 6.77 $\pm$ 0.75 & 39.10 $\pm$ 0.43 & 9.18 $\pm$ 0.29 & 9.56 $\pm$ 0.62 & 41.96 $\pm$ 1.76 & 21.75 $\pm$ 2.99 & 14.32 $\pm$ 1.41 & 29.22 $\pm$ 0.82 & 8.78 $\pm$ 0.79 & 7.81 $\pm$ 0.40 \\
& 50  & 42.78 $\pm$ 2.43 & 8.05 $\pm$ 0.60 & 7.54 $\pm$ 0.57 & 52.32 $\pm$ 1.24 & 7.96 $\pm$ 1.02 & 5.97 $\pm$ 0.44 & 49.39 $\pm$ 1.44 & 17.60 $\pm$ 1.45 & 11.78 $\pm$ 1.55 & 42.14 $\pm$ 0.49 & 10.91 $\pm$ 0.68 & 9.65 $\pm$ 0.84 \\
& 100 & 44.08 $\pm$ 2.38 & 11.77 $\pm$ 0.90 & 7.10 $\pm$ 0.66 & 59.95 $\pm$ 1.02 & 8.15 $\pm$ 1.08 & 4.37 $\pm$ 0.25 & 57.74 $\pm$ 1.26 & 15.94 $\pm$ 1.06 & 7.74 $\pm$ 0.97 & 39.10 $\pm$ 0.92 & 9.62 $\pm$ 0.32 & 8.46 $\pm$ 0.25 \\
\midrule

\multirow{3}{*}{\makecell[c]{BFFHQ}}
& 10  & 38.17 $\pm$ 3.97 & 33.50 $\pm$ 3.35 & 21.23 $\pm$ 4.82 & 36.43 $\pm$ 2.37 & 15.83 $\pm$ 2.27 & 9.27 $\pm$ 0.91 & 33.40 $\pm$ 2.20 & 12.27 $\pm$ 1.20 & 12.96 $\pm$ 2.14 & 40.28 $\pm$ 2.16 & 21.72 $\pm$ 1.14 & 18.12 $\pm$ 1.04 \\
& 50  & 43.83 $\pm$ 2.82 & 40.00 $\pm$ 2.53 & 29.97 $\pm$ 2.63 & 50.80 $\pm$ 1.24 & 18.50 $\pm$ 1.95 & 8.90 $\pm$ 1.13 & 51.17 $\pm$ 1.35 & 26.43 $\pm$ 1.36 & 11.34 $\pm$ 2.32 & 34.00 $\pm$ 1.78 & 30.96 $\pm$ 2.12 & 22.60 $\pm$ 0.54 \\
& 100 & 48.77 $\pm$ 0.93 & 44.03 $\pm$ 1.59 & 32.67 $\pm$ 1.77 & 53.33 $\pm$ 0.66 & 18.17 $\pm$ 0.64 & 5.63 $\pm$ 0.87 & 53.40 $\pm$ 1.11 & 27.83 $\pm$ 0.95 & 11.84 $\pm$ 2.35 & 5.24 $\pm$ 3.55 & 4.28 $\pm$ 1.61 & 3.36 $\pm$ 1.69 \\
\bottomrule
\end{tabular}%
}
\end{table*}

\subsection{Further Details on Ablation for the Worst-Case Residual Bound}
\label{add:addworsetacea}

\textbf{Empirical verification protocol.}
We empirically verify Theorem \ref{thm:max-residual} by comparing, for each class $y$, how well two different targets summarize
class-conditional subgroup statistics in a worst-case sense.
For a fixed dataset and a fixed distilled set, we reconstruct a \emph{converged} model by
training a fresh network from scratch on the distilled images within a fixed budget.
We then freeze this trained network and compute class-conditional subgroup statistics
$\{\Phi_{a\mid y}\}_{a \in \A}$ from the \emph{real} training set, where $a$ indexes the sensitive attribute.
For each class $y$ and each subgroup $a$, we estimate
$\Phi_{a\mid y}$.
We use objective-dependent definitions of $\Phi$:
for \textbf{DC}, $\Phi_{a\mid y}$ is the flattened parameter-gradient vector of the cross-entropy loss 
computed on a real minibatch from subgroup $a$ (matching gradient-based distillation statistics).
For \textbf{DM}, $\Phi_{a\mid y}$ is the subgroup mean feature embedding 
$\Phi_{a\mid y}=\mu_{a\mid y}=\mathbb{E}[\mathrm{embed}(x)\mid a,y]$,
computed by passing the real examples through the network's embedding function and averaging in feature space.

\textbf{Targets, residuals, and theorem conditions.}
Given $\{\Phi_{a\mid y}\}_{a\in \A}$ for a class $y$, we form the two targets compared in Theorem~\ref{thm:max-residual}:
the \emph{vanilla (mixture) target} $m^{\mathrm{van}}_{y}=\sum_{a}\pi_{a\mid y}\Phi_{a\mid y}$, where
$\pi_{a\mid y}$ is the empirical subgroup proportion within class $y$, and the \emph{COBRA barycenter target}
$m^{*}_{y}$ defined as the uniform barycenter over subgroups.
Under the squared Euclidean geometry used in our verification, the uniform barycenter reduces to the $m^{*}_{y}=\frac{1}{|\mathcal{A}_{y}|}\sum_{a}\Phi_{a\mid y}$.
We then compute the worst-case residual for each target as
$\max_{a}\|\Phi_{a\mid y}-m_{y}\|_{2}^{2}$ and record whether COBRA reduces this maximum residual relative to
the vanilla target, i.e.,
$\max_{a}\|\Phi_{a\mid y}-m^{*}_{y}\|_{2}^{2}\le \max_{a}\|\Phi_{a\mid y}-m^{\mathrm{van}}_{y}\|_{2}^{2}$.
To assess when the theorem's sufficient condition applies, we additionally compute the dot-product condition:
letting $a^{+}$ be the subgroup attaining the maximum COBRA residual,
$\Delta^{C}_{a^{+}\mid y}=\Phi_{a^{+}\mid y}-m^{*}_{y}$, and $s_{y}=m^{\mathrm{van}}_{y}-m^{*}_{y}$,
we evaluate $\langle \Delta^{C}_{a^{+}\mid y}, s_{y}\rangle$ and mark the condition as holding when it is
non-positive.
We repeat this evaluation for every class (skipping classes with only one observed subgroup) and aggregate the
results across datasets (\bffhq, \cifars), distilled set sizes ($\IPC\in\{10,50\}$), and report both class-averaged and worst-class
$\mathrm{MaxRes}_{\textsc{cobra}}$ used in Table~\ref{tab:maxres_cobra_colwidth}.

\subsection{Ablation Study with Small IPC}
\label{app:losipcabbl}
Table~\ref{tab:lowipc} demonstrates that in the low-IPC setting, dataset distillation severely worsens fairness: for both datasets (\cifars, \bffhq) and both distillation backbones (\DC, \DM), greater imbalance-induced difficulty consistently raises $\EOD$, reflecting stronger bias amplification.
In contrast, \cobra consistently achieves the lowest (or tied-lowest) $\EOD$ across nearly all settings while maintaining comparable accuracy. On \cifars, \cobra reduces $\EOD$ sharply at \IPC$=1$ (e.g., $18.0\!\rightarrow\!6.2$ for \DC) and remains the best as \IPC increases. On \bffhq, \cobra improves upon both Vanilla and FairDD for \DC at all IPCs, and for \DM, it matches or surpasses FairDD, with particularly large gains at \IPC$=3$ and $5$ (e.g., $11.1\!\rightarrow\!7.8$) without sacrificing accuracy. Overall, these results indicate that \cobra is robust in the low-IPC regime, providing consistent fairness improvements across distillation methods while preserving predictive performance.

\begin{table}[t]
\centering
\small
\renewcommand{\arraystretch}{1.06}
\setlength{\tabcolsep}{1.7pt}
\caption{Sensitivity to low IPC. Entries report max EOD$\downarrow$ with accuracy$\uparrow$ in {\scriptsize parentheses}.}
\label{tab:lowipc}
\resizebox{0.5\columnwidth}{!}{%
\begin{tabular}{c c ccc ccc}
\toprule
\multirow{2}{*}{\textbf{Dataset}} & \multirow{2}{*}{\textbf{IPC}}
& \multicolumn{3}{c}{\textbf{DC}} & \multicolumn{3}{c}{\textbf{DM}} \\
& & \textbf{Vanilla} & \textbf{FairDD} & \textbf{\cobra}
  & \textbf{Vanilla} & \textbf{FairDD} & \textbf{\cobra} \\
\cmidrule(lr){3-5}\cmidrule(lr){6-8}

\multirow{3}{*}{{\cifars}}
& 1 & 18.0 {\scriptsize (25.8)} & \phantom{0}8.4 {\scriptsize (26.2)} & \phantom{0}6.2 {\scriptsize (25.8)}
    & 17.2 {\scriptsize (24.4)} & 14.6 {\scriptsize (24.8)} & \phantom{0}4.9 {\scriptsize (23.6)} \\
& 3 & 37.4 {\scriptsize (32.3)} & 25.5 {\scriptsize (33.3)} & 20.3 {\scriptsize (33.2)}
    & 47.6 {\scriptsize (31.1)} & 29.8 {\scriptsize (34.5)} & 23.4 {\scriptsize (34.5)} \\
& 5 & 41.4 {\scriptsize (32.7)} & 30.5 {\scriptsize (34.6)} & 21.2 {\scriptsize (34.8)}
    & 51.0 {\scriptsize (34.0)} & 28.4 {\scriptsize (38.1)} & 27.5 {\scriptsize (39.5)} \\

\midrule

\multirow{3}{*}{{\bffhq}}
& 1 & 32.2 {\scriptsize (59.6)} & 23.5 {\scriptsize (61.7)} & 15.1 {\scriptsize (60.6)}
    & 27.6 {\scriptsize (61.3)} & 19.9 {\scriptsize (62.1)} & 19.0 {\scriptsize (61.9)} \\
& 3 & 34.6 {\scriptsize (60.1)} & 29.0 {\scriptsize (60.6)} & 16.5 {\scriptsize (58.0)}
    & 30.4 {\scriptsize (60.9)} & 11.1 {\scriptsize (61.9)} & \phantom{0}7.8 {\scriptsize (58.3)} \\
& 5 & 38.9 {\scriptsize (61.4)} & 35.4 {\scriptsize (60.9)} & 21.8 {\scriptsize (61.5)}
    & 43.2 {\scriptsize (62.4)} & 10.5 {\scriptsize (63.9)} & \phantom{0}8.6 {\scriptsize (64.4)} \\

\bottomrule
\end{tabular}%
}
\end{table}

\section{Statistical Significance for Group Imbalance and Group Separation Results}
\label{app:stdskew}
In Tables \ref{tab:skew_bias_eod_full2} and \ref{tab:skew_bias_eod_full1}, we report the statistically annotated versions of the original results from Table \ref{tab:skew_bias}, discussed in Section \ref{sec:imblanceexp}. Specifically, the reported values include the standard deviation over 10 runs, augmenting the initial findings with measures of statistical variability and thereby offering a more comprehensive assessment of the robustness and reliability of the observed patterns.

\begin{table}[h]
\centering
\footnotesize
\renewcommand{\arraystretch}{1.06}
\setlength{\tabcolsep}{1.7pt}

\caption{Accuracy (Acc) as \skeww varies (top) on \cifars with fixed group \seper and \IPC=50, and as group \seper varies (bottom) at fixed \skeww=0.8. Results are reported for \DC/\DM/\IDC distilled datasets (Vanilla, FairDD, \cobra), together with the \fulldata baseline.}
\label{tab:skew_bias_eod_full2}
\resizebox{\textwidth}{!}{%
  \renewcommand{\arraystretch}{1.05}
  \setlength{\tabcolsep}{2.2pt}

\begin{tabular}{c c ccc ccc ccc c}
\toprule
\multirow{2}{*}{\textbf{Method}\vspace{-4ex}} & \multirow{2}{*}{\textbf{Value}\vspace{-4ex}} 
& \multicolumn{3}{c}{\textbf{\DC}}
& \multicolumn{3}{c}{\textbf{\DM}}
& \multicolumn{3}{c}{\textbf{\IDC}}
& \multirow{2}{*}{\textbf{\fulldata}\vspace{-4ex}} \\
\cmidrule(lr){3-5}\cmidrule(lr){6-8}\cmidrule(lr){9-11}
&
& {Vanilla} & {FairDD} & {\cobra}
& {Vanilla} & {FairDD} & {\cobra}
& {Vanilla} & {FairDD} & {\cobra} \\
\midrule

\multirow{6}{*}{\rotatebox[origin=c]{90}{\textbf{\skeww}}}
& 0.60 & 44.58 {\scriptsize$\pm$ 0.15} & 45.67 {\scriptsize$\pm$ 0.32} & 46.44 {\scriptsize$\pm$ 0.23}
       & 49.23 {\scriptsize$\pm$ 0.59} & 51.56 {\scriptsize$\pm$ 0.33} & 54.00 {\scriptsize$\pm$ 0.44}
       & 43.00 {\scriptsize$\pm$ 0.48} & 46.72 {\scriptsize$\pm$ 0.61} & 46.39 {\scriptsize$\pm$ 0.77} & 75.15 {\scriptsize$\pm$ 0.12} \\
& 0.65 & 43.72 {\scriptsize$\pm$ 0.58} & 45.78 {\scriptsize$\pm$ 0.39} & 46.17 {\scriptsize$\pm$ 0.21}
       & 48.03 {\scriptsize$\pm$ 0.23} & 51.38 {\scriptsize$\pm$ 0.37} & 52.76 {\scriptsize$\pm$ 0.37}
       & 42.08 {\scriptsize$\pm$ 0.53} & 47.06 {\scriptsize$\pm$ 0.52} & 46.20 {\scriptsize$\pm$ 0.54} & 74.65 {\scriptsize$\pm$ 0.15} \\
& 0.70 & 41.39 {\scriptsize$\pm$ 0.30} & 45.43 {\scriptsize$\pm$ 0.27} & 45.49 {\scriptsize$\pm$ 0.52}
       & 46.30 {\scriptsize$\pm$ 0.39} & 50.98 {\scriptsize$\pm$ 0.47} & 52.92 {\scriptsize$\pm$ 0.39}
       & 39.77 {\scriptsize$\pm$ 0.37} & 46.13 {\scriptsize$\pm$ 0.44} & 46.07 {\scriptsize$\pm$ 0.50} & 73.99 {\scriptsize$\pm$ 0.28} \\
& 0.75 & 39.24 {\scriptsize$\pm$ 0.44} & 44.76 {\scriptsize$\pm$ 0.53} & 45.37 {\scriptsize$\pm$ 0.37}
       & 44.27 {\scriptsize$\pm$ 0.47} & 50.58 {\scriptsize$\pm$ 0.45} & 52.49 {\scriptsize$\pm$ 0.36}
       & 37.29 {\scriptsize$\pm$ 0.43} & 46.75 {\scriptsize$\pm$ 0.40} & 46.35 {\scriptsize$\pm$ 0.78} & 72.59 {\scriptsize$\pm$ 0.07} \\
& 0.80 & 37.63 {\scriptsize$\pm$ 0.47} & 44.48 {\scriptsize$\pm$ 0.42} & 45.22 {\scriptsize$\pm$ 0.25}
       & 42.27 {\scriptsize$\pm$ 0.55} & 50.14 {\scriptsize$\pm$ 0.38} & 52.60 {\scriptsize$\pm$ 0.26}
       & 36.07 {\scriptsize$\pm$ 0.10} & 47.18 {\scriptsize$\pm$ 0.74} & 46.15 {\scriptsize$\pm$ 0.23} & 70.67 {\scriptsize$\pm$ 0.30} \\
& 0.85 & 35.91 {\scriptsize$\pm$ 0.29} & 43.43 {\scriptsize$\pm$ 0.31} & 44.60 {\scriptsize$\pm$ 0.30}
       & 39.24 {\scriptsize$\pm$ 0.37} & 49.61 {\scriptsize$\pm$ 0.29} & 51.60 {\scriptsize$\pm$ 0.31}
       & 34.62 {\scriptsize$\pm$ 0.37} & 46.17 {\scriptsize$\pm$ 0.47} & 45.96 {\scriptsize$\pm$ 0.95} & 68.44 {\scriptsize$\pm$ 0.11} \\
\midrule
\addlinespace[0.5ex]

\multirow{4}{*}{\rotatebox[origin=c]{90}{\textbf{\seper}}}
& 1 & 37.14 {\scriptsize$\pm$ 0.22} & 41.85 {\scriptsize$\pm$ 0.86} & 40.25 {\scriptsize$\pm$ 1.00}
    & 40.56 {\scriptsize$\pm$ 0.59} & 44.67 {\scriptsize$\pm$ 0.37} & 44.52 {\scriptsize$\pm$ 0.22}
    & 37.24 {\scriptsize$\pm$ 0.18} & 42.36 {\scriptsize$\pm$ 0.59} & 41.05 {\scriptsize$\pm$ 0.59}
    & 82.56 {\scriptsize$\pm$ 0.02} \\
& 2 & 34.46 {\scriptsize$\pm$ 0.17} & 39.98 {\scriptsize$\pm$ 0.68} & 40.20 {\scriptsize$\pm$ 0.99}
    & 35.68 {\scriptsize$\pm$ 0.21} & 45.76 {\scriptsize$\pm$ 0.51} & 44.89 {\scriptsize$\pm$ 0.50}
    & 36.27 {\scriptsize$\pm$ 0.45} & 41.91 {\scriptsize$\pm$ 0.52} & 42.01 {\scriptsize$\pm$ 0.85} & 79.60 {\scriptsize$\pm$ 0.07} \\
& 3 & 30.82 {\scriptsize$\pm$ 0.36} & 36.75 {\scriptsize$\pm$ 0.71} & 35.25 {\scriptsize$\pm$ 1.05}
    & 30.13 {\scriptsize$\pm$ 0.25} & 40.71 {\scriptsize$\pm$ 0.58} & 41.19 {\scriptsize$\pm$ 0.60}
    & 34.21 {\scriptsize$\pm$ 0.64}
    & 39.61 {\scriptsize$\pm$ 0.65}
    & 39.36 {\scriptsize$\pm$ 0.26} & 65.93 {\scriptsize$\pm$ 0.67} \\
& 4 & 27.43 {\scriptsize$\pm$ 0.37} & 33.29 {\scriptsize$\pm$ 0.90} & 31.21 {\scriptsize$\pm$ 2.07}
    & 29.21 {\scriptsize$\pm$ 0.27} & 37.70 {\scriptsize$\pm$ 0.32} & 37.34 {\scriptsize$\pm$ 0.30}
    & 31.13 {\scriptsize$\pm$ 0.33}
    & 35.94 {\scriptsize$\pm$ 0.56}
    & 36.23 {\scriptsize$\pm$ 0.35} & 62.88 {\scriptsize$\pm$ 0.03} \\
\bottomrule
\end{tabular}%
}
\end{table}

\begin{table}[h]
\centering
\footnotesize
\setlength{\tabcolsep}{1.7pt}
\renewcommand{\arraystretch}{1.06}
\caption{Equal opportunity difference ($\EOD$) as \skeww varies (top) on \cifars with fixed group \seper and \IPC=50, and as group \seper varies (bottom) at fixed \skeww=0.8. Results are reported for \DC/\DM/\IDC distilled datasets (Vanilla, FairDD, \cobra), together with the \fulldata baseline.}
\label{tab:skew_bias_eod_full1}
\resizebox{\textwidth}{!}{%
  \renewcommand{\arraystretch}{1.05}
  \setlength{\tabcolsep}{2.2pt}
\begin{tabular}{c c ccc ccc ccc c}
\toprule
\multirow{2}{*}{\textbf{Method}\vspace{-4ex}} & \multirow{2}{*}{\textbf{Value}\vspace{-4ex}}
& \multicolumn{3}{c}{\textbf{\DC}}
& \multicolumn{3}{c}{\textbf{\DM}}
& \multicolumn{3}{c}{\textbf{\IDC}}
& \multirow{2}{*}{\textbf{\fulldata}\vspace{-4ex}} \\
\cmidrule(lr){3-5}\cmidrule(lr){6-8}\cmidrule(lr){9-11}
&
& {Vanilla} & {FairDD} & {\cobra}
& {Vanilla} & {FairDD} & {\cobra}
& {Vanilla} & {FairDD} & {\cobra} \\
\midrule

\multirow{6}{*}{\rotatebox[origin=c]{90}{\textbf{\skeww}}}
& 0.60 & 32.24 {\scriptsize$\pm$ 2.85} & 13.64 {\scriptsize$\pm$ 2.15} & {10.16} {\scriptsize$\pm$ 1.33}
       & 27.05 {\scriptsize$\pm$ 2.62} & 9.13 {\scriptsize$\pm$ 1.65} & {8.53} {\scriptsize$\pm$ 1.74}
       & 34.50 {\scriptsize$\pm$ 2.33} & 10.23 {\scriptsize$\pm$ 1.15} & {8.08} {\scriptsize$\pm$ 1.64} & 14.13 {\scriptsize$\pm$ 0.69} \\
& 0.65 & 45.10 {\scriptsize$\pm$ 2.81} & 14.48 {\scriptsize$\pm$ 2.16} & {11.28} {\scriptsize$\pm$ 1.42}
       & 33.13 {\scriptsize$\pm$ 1.71} & 11.65 {\scriptsize$\pm$ 1.53} & {8.99} {\scriptsize$\pm$ 2.72}
       & 44.87 {\scriptsize$\pm$ 0.63} & 12.48 {\scriptsize$\pm$ 2.77} & {8.40} {\scriptsize$\pm$ 0.73} & 19.13 {\scriptsize$\pm$ 0.52} \\
& 0.70 & 57.29 {\scriptsize$\pm$ 1.88} & 15.00 {\scriptsize$\pm$ 2.09} & {11.33} {\scriptsize$\pm$ 1.21}
       & 43.53 {\scriptsize$\pm$ 1.93} & 12.90 {\scriptsize$\pm$ 2.17} & {10.83} {\scriptsize$\pm$ 2.01}
       & 55.53 {\scriptsize$\pm$ 3.18} & 12.23 {\scriptsize$\pm$ 1.58} & {7.43} {\scriptsize$\pm$ 1.88} & 26.27 {\scriptsize$\pm$ 1.06} \\
& 0.75 & 65.81 {\scriptsize$\pm$ 1.53} & 19.55 {\scriptsize$\pm$ 3.05} & {15.51} {\scriptsize$\pm$ 2.59}
       & 54.59 {\scriptsize$\pm$ 0.67} & 17.00 {\scriptsize$\pm$ 1.53} & {11.19} {\scriptsize$\pm$ 2.09}
       & 65.23 {\scriptsize$\pm$ 0.62} & 13.73 {\scriptsize$\pm$ 2.02} & {7.82} {\scriptsize$\pm$ 1.67} & 33.90 {\scriptsize$\pm$ 0.64} \\
& 0.80 & 70.43 {\scriptsize$\pm$ 1.78} & 26.05 {\scriptsize$\pm$ 3.77} & {14.70} {\scriptsize$\pm$ 2.32}
       & 61.94 {\scriptsize$\pm$ 1.06} & 22.94 {\scriptsize$\pm$ 2.44} & {12.59} {\scriptsize$\pm$ 1.32}
       & 70.63 {\scriptsize$\pm$ 2.07} & 16.63 {\scriptsize$\pm$ 1.51} & {7.65} {\scriptsize$\pm$ 1.63} & 42.73 {\scriptsize$\pm$ 0.74} \\
& 0.85 & 76.38 {\scriptsize$\pm$ 1.27} & 32.96 {\scriptsize$\pm$ 2.38} & {17.68} {\scriptsize$\pm$ 2.86}
       & 78.54 {\scriptsize$\pm$ 1.37} & 24.84 {\scriptsize$\pm$ 2.07} & {13.27} {\scriptsize$\pm$ 2.59}
       & 73.13 {\scriptsize$\pm$ 1.45} & 14.65 {\scriptsize$\pm$ 0.80} & {7.65} {\scriptsize$\pm$ 2.20} & 52.60 {\scriptsize$\pm$ 0.22} \\
\midrule
\addlinespace[0.5ex]

\multirow{4}{*}{\rotatebox[origin=c]{90}{\textbf{\seper}}}
& 1 & 30.93 {\scriptsize$\pm$ 1.17} & 10.15 {\scriptsize$\pm$ 1.57} & {7.98} {\scriptsize$\pm$ 1.73}
    & 46.13 {\scriptsize$\pm$ 4.80} & 16.12 {\scriptsize$\pm$ 1.25} & {8.42} {\scriptsize$\pm$ 1.19}
    & 41.87 {\scriptsize$\pm$ 1.15} & 7.98 {\scriptsize$\pm$ 0.45} & {4.63} {\scriptsize$\pm$ 0.40} & 27.30 {\scriptsize$\pm$ 0.20} \\
& 2 & 50.65 {\scriptsize$\pm$ 2.05} & 27.12 {\scriptsize$\pm$ 3.54} & {11.32} {\scriptsize$\pm$ 0.87}
    & 62.23 {\scriptsize$\pm$ 1.52} & 12.53 {\scriptsize$\pm$ 2.73} & {9.10} {\scriptsize$\pm$ 0.91}
    & 53.77 {\scriptsize$\pm$ 0.93} & 10.10 {\scriptsize$\pm$ 1.52} & {9.90} {\scriptsize$\pm$ 1.74} & 39.50 {\scriptsize$\pm$ 2.40} \\
& 3 & 56.75 {\scriptsize$\pm$ 2.05} & 27.75 {\scriptsize$\pm$ 4.60} & {11.73} {\scriptsize$\pm$ 1.79}
    & 72.40 {\scriptsize$\pm$ 0.51} & 19.73 {\scriptsize$\pm$ 3.06} & {17.28} {\scriptsize$\pm$ 3.34}
    & 56.53 {\scriptsize$\pm$ 2.22} & 17.80 {\scriptsize$\pm$ 1.07} & {12.98} {\scriptsize$\pm$ 1.33} & 54.85 {\scriptsize$\pm$ 2.95} \\
& 4 & 61.40 {\scriptsize$\pm$ 1.34} & 38.22 {\scriptsize$\pm$ 7.23} & {16.38} {\scriptsize$\pm$ 4.53}
    & 74.90 {\scriptsize$\pm$ 0.43} & 19.70 {\scriptsize$\pm$ 0.81} & {19.52} {\scriptsize$\pm$ 2.49}
    & 57.43 {\scriptsize$\pm$ 0.76} & 24.80 {\scriptsize$\pm$ 0.63} & {18.95} {\scriptsize$\pm$ 1.26} & 59.00 {\scriptsize$\pm$ 0.80} \\
\bottomrule
\end{tabular}%
}
\end{table}

\section{Computational Overhead of COBRA}
\label{app:overhead}

We provide a detailed analysis of the per-iteration wall-clock time and peak GPU
memory incurred by COBRA relative to Vanilla DD.
Vanilla DD computes one real-data distillation target per class. COBRA replaces
this monolithic target with a group-wise barycenter, requiring $G = |\mathcal{A}|$
separate passes (one per demographic group), whose outputs are then averaged into
a uniform barycenter. Here, $d$ denotes the dimensionality of the group-wise
quantity being averaged to form the barycenter. The barycenter computation itself
is only an elementwise average over the $G$ group-specific vectors, with
a cost of $\mathcal{O}(Gd)$, and is therefore negligible in practice. The dominant
cost comes from computing these group-wise quantities in the first place, which
requires repeated forward/backward evaluations of the network.

All experiments are run on a single \textbf{NVIDIA H100 80\,GB} GPU with an Intel
Xeon Gold 6442Y CPU and 2\,TB RAM. We use a \texttt{ConvNet} backbone with
$\mathrm{IPC}=10$ and a real-data batch size of 256 per class. Timings are averaged
over 120 measured iterations, after 10 warm-up iterations excluded from the
statistics, and are measured with full CUDA synchronization
(\texttt{torch.cuda.synchronize}) before and after each iteration. Peak GPU memory
is tracked using \texttt{torch.cuda.max\_memory\_allocated}.

Table~\ref{tab:overhead_time_memory_groups} reports both runtime and memory as the
number of groups is increased on \cfmnist (\bg). Runtime grows predictably with
$G$: for DC, the slowdown increases from $1.56\times$ at $G=2$ to $4.29\times$ at
$G=10$, while for DM it increases from $1.25\times$ to $3.02\times$. This trend is
expected since each group requires an additional real-data pass, and DC is more
sensitive because it performs repeated backward passes through the retained
real-data computation graph. In contrast, the memory overhead is modest. For DC,
peak memory only rises from $1.18\times$ to $1.19\times$ of Vanilla DD across the
entire sweep, while for DM it stays essentially unchanged and is even slightly
lower than Vanilla DD in our measurements ($0.87\times$--$0.98\times$), consistent
with the sequential execution of the group-wise forward passes. Even at $G=10$, the
absolute per-iteration times remain in the tens-to-hundreds of milliseconds
range; for example, a full DC run of 2{,}400 iterations still completes in under
15 minutes on an H100.

Table~\ref{tab:overhead_time_memory_datasets} summarizes the same quantities at
each dataset's natural group count. In the practically common two-group setting
(e.g., \bffhq and \cifars), the runtime overhead is moderate, with DC slowdown in
the range of $1.39\times$--$1.59\times$ and DM slowdown in the range of
$1.30\times$--$1.59\times$. Memory remains well controlled: DC uses only
$1.20\times$--$1.36\times$ the memory of Vanilla DD, while DM stays nearly
identical to Vanilla DD ($0.98\times$--$1.00\times$). Larger runtime overhead
appears only in the intentionally extreme 10-group synthetic settings
(\cfmnist and \cmnist), where DC reaches $4.29\times$ and $3.74\times$ slowdown
and DM reaches $3.02\times$ and $3.21\times$, yet the memory increase is still
small for DC ($1.19\times$) and negligible for DM ($0.98\times$).

Overall, COBRA introduces a predictable computational overhead because it replaces
a single class-level real-data pass with $G$ group-wise passes. In the practically
common two-group setting, this leads to a moderate runtime increase while keeping
memory overhead low. In more extreme synthetic settings with many groups, the
runtime cost becomes larger but remains tractable, and peak memory still changes
only modestly. Given the substantial fairness gains in the main paper, we view
this additional cost as well justified.

\begin{table*}[t]
\centering
\footnotesize
\caption{Per-iteration runtime (ms) and peak GPU memory (MB) on \cfmnist (\bg) as the number of groups $G$ increases. The first row is Vanilla DD; the remaining rows are COBRA. ``Slowdown'' and ``Mem.\ Mult.'' denote runtime and memory relative to Vanilla DD. Runtime is reported as mean $\pm$ std over 120 measured iterations after 10 warm-up iterations.}

\label{tab:overhead_time_memory_groups}
\renewcommand{\arraystretch}{1.01}
\setlength{\tabcolsep}{3.2pt}
\resizebox{0.8\textwidth}{!}{
\begin{tabular}{c cccc cccc}
\toprule
& \multicolumn{4}{c}{\textbf{DC}} & \multicolumn{4}{c}{\textbf{DM}} \\
\cmidrule(lr){2-5} \cmidrule(lr){6-9}
$\mathbf{G}$ 
& \textbf{Time (ms)} & \textbf{Slowdown} & \textbf{Mem (MB)} & \textbf{Mem.\ Mult.}
& \textbf{Time (ms)} & \textbf{Slowdown} & \textbf{Mem (MB)} & \textbf{Mem.\ Mult.} \\
\midrule
-- & $83.8 \pm 13.6$  & baseline     & $1{,}566$ & baseline    
   & $17.4 \pm 4.1$   & baseline     & $1{,}176$ & baseline     \\
2  & $131.0 \pm 21.1$ & $1.56\times$ & $1{,}855$ & $1.18\times$
   & $21.7 \pm 5.3$   & $1.25\times$ & $1{,}029$ & $0.87\times$ \\
4  & $190.6 \pm 26.0$ & $2.28\times$ & $1{,}857$ & $1.19\times$
   & $28.8 \pm 4.8$   & $1.66\times$ & $1{,}061$ & $0.90\times$ \\
6  & $244.5 \pm 19.7$ & $2.92\times$ & $1{,}864$ & $1.19\times$
   & $36.7 \pm 9.6$   & $2.11\times$ & $1{,}091$ & $0.93\times$ \\
8  & $302.7 \pm 22.1$ & $3.61\times$ & $1{,}862$ & $1.19\times$
   & $44.2 \pm 9.4$   & $2.55\times$ & $1{,}122$ & $0.95\times$ \\
10 & $359.4 \pm 24.2$ & $4.29\times$ & $1{,}868$ & $1.19\times$
   & $52.4 \pm 12.2$  & $3.02\times$ & $1{,}151$ & $0.98\times$ \\
\bottomrule
\end{tabular}
}
\end{table*}

\begin{table*}[t]
\centering
\footnotesize
\caption{Per-iteration runtime (ms) and peak GPU memory (MB) at each dataset's natural group count. We report the change from Vanilla DD to COBRA, together with the corresponding runtime and memory multipliers. Runtime is reported as mean $\pm$ std over 120 measured iterations after 10 warm-up iterations.}

\label{tab:overhead_time_memory_datasets}
\renewcommand{\arraystretch}{1.1}
\setlength{\tabcolsep}{2.5pt}
\resizebox{\textwidth}{!}{
\begin{tabular}{l c cccc cccc}
\toprule
& & \multicolumn{4}{c}{\textbf{DC}} & \multicolumn{4}{c}{\textbf{DM}} \\
\cmidrule(lr){3-6} \cmidrule(lr){7-10}
\textbf{Dataset} & $\mathbf{G}$
& \textbf{Time (ms)} & \textbf{Slowdown} & \textbf{Mem. (MB)} & \textbf{Mem.\ Mult.}
& \textbf{Time (ms)} & \textbf{Slowdown} & \textbf{Mem. (MB)} & \textbf{Mem.\ Mult.} \\

\midrule
\cifars
& 2
& $83.3 \pm 13.2 \rightarrow 132.5 \pm 22.2$ & $1.59\times$
& $1{,}447 \rightarrow 1{,}737$ & $1.20\times$
& $16.7 \pm 0.1 \rightarrow 26.6 \pm 8.2$ & $1.59\times$
& $1{,}058 \rightarrow 1{,}033$ & $0.98\times$ \\

BFFHQ
& 2
& $38.0 \pm 10.7 \rightarrow 52.7 \pm 8.3$ & $1.39\times$
& $3{,}146 \rightarrow 4{,}277$ & $1.36\times$
& $10.0 \pm 0.1 \rightarrow 13.0 \pm 4.5$ & $1.30\times$
& $2{,}068 \rightarrow 2{,}075$ & $1.00\times$ \\

\cfmnist
& 10
& $83.8 \pm 13.6 \rightarrow 359.4 \pm 24.2$ & $4.29\times$
& $1{,}566 \rightarrow 1{,}868$ & $1.19\times$
& $17.4 \pm 4.1 \rightarrow 52.4 \pm 12.2$ & $3.02\times$
& $1{,}176 \rightarrow 1{,}151$ & $0.98\times$ \\

\cmnist
& 10
& $96.7 \pm 39.8 \rightarrow 361.6 \pm 40.3$ & $3.74\times$
& $1{,}566 \rightarrow 1{,}868$ & $1.19\times$
& $17.4 \pm 4.7 \rightarrow 55.8 \pm 16.5$ & $3.21\times$
& $1{,}176 \rightarrow 1{,}151$ & $0.98\times$ \\
\bottomrule
\end{tabular}
}
\end{table*}

\section{Ablation Study on Robustness to Group Label Noise}
\label{sec:noise_ablation}

COBRA depends on per-sample demographic group labels to construct its cross-group barycentric target. In real-world scenarios, however, these labels are often imperfect because they may be self-reported, inferred from proxies, or affected by annotation noise. We therefore examine how robust COBRA is to corrupted group annotations during distillation. Concretely, we randomly choose a fraction $\rho \in \{10\%,\,15\%,\,20\%,\,50\%\}$ of the training instances and replace each chosen demographic label with a uniformly sampled \emph{incorrect} group index. All other elements of the training procedure are left unchanged. 

Table~\ref{tab:dm_ipc10_noise_compact} presents the results for DM with IPC$=10$. Overall, they show that COBRA's performance deteriorates smoothly as group labels become noisier. On \cmnist\ (\bg), the method is especially stable as accuracy stays close to $94\%$ across all corruption rates, and EOD increases only moderately from $12.99\%$ under clean labels to $17.92\%$ at $50\%$ noise. This indicates that, for controlled benchmarks with a clear signal-to-noise separation, the barycentric objective remains effective even when a sizable fraction of sensitive attributes is corrupted.

A comparable, though somewhat weaker, trend appears on \cifars. As the noise level grows, accuracy decreases slightly from $44.50\%$ to $41.17\%$, and EOD increases from $20.18\%$ to $37.03\%$. Crucially, even with noisy labels, COBRA remains substantially less biased than Vanilla DD, whose EOD reaches $56.25\%$, while also sustaining a consistent accuracy edge. This suggests that COBRA's fairness improvements do not rely on perfectly accurate group supervision.
On \bffhq, the effect of corrupted group labels is more pronounced. Both accuracy and fairness degrade progressively with increasing $\rho$, and EOD rises from $15.73\%$ with clean labels to $38.48\%$ at $50\%$ noise. Nonetheless, COBRA continues to deliver fairness gains over Vanilla DD at all corruption levels. However, the accuracy benefit seen in the clean scenario largely vanishes once noise is introduced, implying that real-world datasets with richer subgroup structures are more vulnerable to inaccuracies in protected attribute labels.

Overall, these findings suggest that COBRA exhibits reasonable robustness to moderate levels of demographic label noise. Although corrupting group annotations weakens the fairness signal that underlies the barycentric target, the approach consistently offers fairness advantages over fairness-agnostic distillation and often maintains competitive predictive accuracy. Such robustness is particularly valuable in practical applications, where sensitive attributes are commonly noisy, incomplete, or only approximately observed.

\newcommand{\std}[1]{{\footnotesize $\pm$ #1}}

\begin{table*}[h]
\centering
\caption{
Robustness to noisy sensitive-group labels under {DM} with {\IPC$=10$}.
We report accuracy and EOD. The first two column blocks present clean-label results for {Vanilla DD} and {COBRA}, while the remaining show {COBRA} performance with noisy group labels.}

\label{tab:dm_ipc10_noise_compact}
\setlength{\tabcolsep}{3.5pt}
\renewcommand{\arraystretch}{1.15}
\resizebox{\textwidth}{!}{
\begin{tabular}{l*{12}{c}}
\toprule
\multirow{2}{*}{\textbf{Dataset}}
& \multicolumn{2}{c}{\textbf{Vanilla DD}}
& \multicolumn{2}{c}{\textbf{COBRA Clean}}
& \multicolumn{2}{c}{\textbf{COBRA + 10\% noise}}
& \multicolumn{2}{c}{\textbf{COBRA + 15\% noise}}
& \multicolumn{2}{c}{\textbf{COBRA + 20\% noise}}
& \multicolumn{2}{c}{\textbf{COBRA + 50\% noise}} \\
\cmidrule(lr){2-3} \cmidrule(lr){4-5}
\cmidrule(lr){6-7} \cmidrule(lr){8-9}
\cmidrule(lr){10-11} \cmidrule(lr){12-13}
& \textbf{Acc} & \textbf{EOD}
& \textbf{Acc} & \textbf{EOD}
& \textbf{Acc} & \textbf{EOD}
& \textbf{Acc} & \textbf{EOD}
& \textbf{Acc} & \textbf{EOD}
& \textbf{Acc} & \textbf{EOD} \\
\midrule

\cmnist (\bg)
& 20.32 \std{1.11} & 100.0 \std{0.0}
& \textbf{94.45 \std{0.18}} & \textbf{12.99 \std{2.42}}
& 94.07 \std{0.13} & 13.33 \std{1.77}
& 93.87 \std{0.14} & 15.05 \std{2.46}
& 94.63 \std{0.09} & 14.70 \std{1.10}
& 94.05 \std{0.20} & 17.92 \std{2.91}
\\

\cifars
& 37.25 \std{0.31} & 56.25 \std{1.70}
& \textbf{44.50 \std{0.56}} & \textbf{20.18 \std{1.42}}
& 42.61 \std{0.48} & 28.60 \std{2.41}
& 42.39 \std{0.45} & 31.00 \std{1.24}
& 42.96 \std{0.22} & 31.48 \std{3.24}
& 41.17 \std{0.43} & 37.03 \std{2.70}
\\

\bffhq
& 65.12 \std{0.95} & 44.80 \std{1.53}
& \textbf{69.47 \std{0.78}} & \textbf{15.73 \std{1.24}}
& 63.65 \std{2.22} & 25.90 \std{2.92}
& 64.13 \std{0.88} & 26.10 \std{2.12}
& 64.58 \std{1.24} & 28.08 \std{2.72}
& 62.10 \std{0.84} & 38.48 \std{2.69}
\\

\bottomrule
\end{tabular}
}
\end{table*}

\section{Ablation Study on Partially Available Group Labels}
\label{sec:partial_labels_ablation}

In many practical settings, however, such demographic group labels are only available for a subset of the training data. To evaluate this regime, we simulate partial group annotation by retaining the sensitive labels for only a fraction $\eta \in \{75\%, 50\%, 25\%, 10\%, 5\%\}$ of the training set and masking the rest.
To recover the missing group information, we adopt a simple pseudo-labeling procedure based on unsupervised clustering. We first flatten the input images and run K-means over the full training set using only visual features, without access to group annotations. After clustering, each cluster is assigned a sensitive-group identity by majority vote over the samples in that cluster whose labels remain known. This cluster-level label is then propagated to all unlabeled samples in the same cluster, yielding pseudo-group labels for the missing portion of the dataset. COBRA is subsequently trained using the combination of observed and imputed group labels.

Table~\ref{tab:dm_ipc10_partial_compact} reports the results under DM with IPC$=10$. Overall, reducing the fraction of known sensitive labels degrades both accuracy and fairness, as expected; however, COBRA consistently remains less biased than Vanilla DD across all datasets and supervision levels. On \cmnist\ (\fg), the method is fairly robust with $50\%$--$75\%$ known labels; performance stays close to the clean-label COBRA result, and even at $10\%$ or $5\%$ known labels, COBRA still yields a much lower EOD than Vanilla DD.
A stronger dependence on group supervision is observed on \cfmnist (\bg). As the known-label ratio decreases, accuracy drops and EOD rises, especially below $25\%$. Still, even in the severe $10\%$ and $5\%$ settings, COBRA remains clearly better than Vanilla DD in terms of fairness, showing that the pseudo-labeling strategy continues to provide a useful group signal.

On \cifars\ and \bffhq, partial labels lead to a larger gap from the clean-label COBRA setting, indicating that pseudo-group recovery is less reliable on these more complex datasets. Nevertheless, COBRA retains a consistent fairness advantage over Vanilla DD throughout, including under the most challenging $10\%$ and $5\%$ label-availability regimes.
Taken together, these results show that COBRA remains effective when sensitive attributes are only partially observed. Although unsupervised label imputation does not fully recover the clean-label setting, it preserves a meaningful portion of COBRA's fairness benefit, even under very limited group supervision.

\begin{table*}[h]
\centering
\caption{
Robustness to partially available sensitive-group labels under {DM} with {IPC$=10$}. The first two column groups
show clean-label results for {Vanilla DD} and {COBRA}; the remaining groups
show {COBRA} when only a fraction of sensitive labels is known.
}

\label{tab:dm_ipc10_partial_compact}
\setlength{\tabcolsep}{3.2pt}
\renewcommand{\arraystretch}{1.12}
\resizebox{\textwidth}{!}{
\begin{tabular}{l*{14}{c}}
\toprule
\multirow{2}{*}{\textbf{Dataset}}
& \multicolumn{2}{c}{\textbf{Vanilla DD}}
& \multicolumn{2}{c}{\textbf{COBRA}}
& \multicolumn{2}{c}{\textbf{COBRA + 75\% known}}
& \multicolumn{2}{c}{\textbf{COBRA + 50\% known}}
& \multicolumn{2}{c}{\textbf{COBRA + 25\% known}}
& \multicolumn{2}{c}{\textbf{COBRA + 10\% known}}
& \multicolumn{2}{c}{\textbf{COBRA + 5\% known}} \\
\cmidrule(lr){2-3} \cmidrule(lr){4-5}
\cmidrule(lr){6-7} \cmidrule(lr){8-9}
\cmidrule(lr){10-11} \cmidrule(lr){12-13}
\cmidrule(lr){14-15}
& \textbf{Acc} & \textbf{EOD}
& \textbf{Acc} & \textbf{EOD}
& \textbf{Acc} & \textbf{EOD}
& \textbf{Acc} & \textbf{EOD}
& \textbf{Acc} & \textbf{EOD}
& \textbf{Acc} & \textbf{EOD}
& \textbf{Acc} & \textbf{EOD} \\
\midrule

\cmnist (\fg)
& 26.20 \std{0.49} & 100.0 \std{0.0}
& \textbf{94.09 \std{0.18}} & \textbf{14.74 \std{3.99}}
& 93.76 \std{0.14} & 18.23 \std{2.51}
& 93.86 \std{0.41} & 20.54 \std{7.55}
& 93.52 \std{0.19} & 30.43 \std{6.97}
& 92.37 \std{0.40} & 48.27 \std{5.96}
& 92.69 \std{0.20} & 54.68 \std{2.91}
\\

\cfmnist (\bg)
& 22.05 \std{0.57} & 100.0 \std{0.0}
& \textbf{77.07 \std{0.21}} & \textbf{33.80 \std{1.72}}
& 70.62 \std{0.50} & 38.50 \std{2.06}
& 69.39 \std{0.46} & 42.00 \std{1.22}
& 68.18 \std{0.42} & 60.50 \std{0.50}
& 67.91 \std{0.39} & 73.50 \std{2.18}
& 67.27 \std{0.61} & 81.25 \std{2.77}
\\
\bffhq
& 65.12 \std{0.95} & 44.80 \std{1.53}
& \textbf{69.47 \std{0.78}} & \textbf{15.73 \std{1.24}}
& 62.13 \std{1.23} & 34.60 \std{2.20}
& 63.53 \std{0.80} & 39.30 \std{4.83}
& 63.48 \std{0.29} & 35.50 \std{0.77}
& 64.15 \std{0.75} & 38.80 \std{3.34}
& 66.20 \std{0.94} & 41.60 \std{2.47}
\\

\cifars
& 37.25 \std{0.31} & 56.25 \std{1.70}
& \textbf{44.50 \std{0.56}} & \textbf{20.18 \std{1.42}}
& 41.09 \std{0.42} & 43.45 \std{2.35}
& 38.24 \std{0.49} & 47.90 \std{2.10}
& 39.14 \std{0.58} & 49.13 \std{1.86}
& 37.59 \std{0.44} & 53.60 \std{0.76}
& 38.14 \std{0.43} & 52.93 \std{0.75}
\\
\bottomrule
\end{tabular}
}
\end{table*}

\section{Ablation on Standard Fairness Interventions}
\label{sec_ablation_pipeline}
We next examine whether the bias induced by dataset distillation can be mitigated by inserting standard fairness interventions into an otherwise vanilla pipeline. We study three intervention points. \textbf{Representation-level fairness (RF)} uses a fair feature extractor before distillation. \textbf{Data-level fairness (DF)} uses subgroup-aware loss reweighting during distillation. \textbf{Model-level fairness (MF)} keeps the distilled set unchanged and applies a fair training objective only when fitting the downstream model.

Table \ref{tab_pipeline_ablation} shows a clear overall pattern. Interventions applied before or after distillation are generally unreliable, whereas reweighting in distillation is the strongest conventional baseline. This behavior is consistent with our main claim that the dominant source of bias lies in the aggregation target used during distillation, rather than solely in the upstream representation or the downstream learner.

The pre-distillation baseline RF is often insufficient. Even when the extractor is trained with a fairness-aware objective, vanilla distillation still aggregates subgroup statistics according to their empirical frequencies. As a result, the target can remain skewed toward the majority structure. This is especially visible on \cmnist (\bg) with DM, where RF leaves EOD at 100.0\% and yields almost no change relative to Vanilla DD. A similar pattern appears on \cifars with DM, where RF is slightly worse than Vanilla DD in fairness despite a modest gain in accuracy.

The post-distillation baseline MF is also inconsistent. Once the synthetic dataset has discarded subgroup-specific information, a downstream fairness objective can only rebalance the limited support that remains. It cannot reconstruct the minority structure that was not preserved during condensation. This limitation is again clear on \cmnist (\bg), where MF produces an EOD of 100.0\% under both DM and DC, and on \bffhq, where it improves fairness only at the cost of a substantial drop in accuracy.

Among the standard interventions, DF is the most competitive. Reweighting directly counteracts subgroup imbalance during optimization and therefore delivers large fairness gains on \cifars and \cmnist (\bg). However, its improvements are not uniform across settings. One DC configuration on \cmnist (\bg) is unstable, and on \bffhq the gains in EOD do not translate into the strongest fairness-accuracy balance. This suggests that correcting subgroup frequencies alone is not enough when subgroup statistics are also geometrically misaligned.

COBRA provides the most reliable overall trade-off because it changes the distillation target itself. Instead of reweighting samples or modifying only the downstream learner, it aligns the synthetic data to a subgroup-balanced barycentric target. This yields the lowest EOD on four of the six dataset-backbone pairs and remains competitive on the other two while preserving stronger accuracy. The clearest example is \bffhq. Under DM, DF attains slightly lower EOD than COBRA, with 14.10\% versus 15.73\%, but COBRA improves accuracy from 66.0\% to 69.5\%. Under DC, MF obtains the best EOD, 30.20\%, yet its accuracy drops to 57.45\%, whereas COBRA reaches a comparable EOD of 32.87\% with much higher accuracy at 65.2\%. Overall, these results indicate that fairness in dataset distillation cannot be treated as a simple add-on to the beginning or the end of the pipeline. The aggregation objective itself must be fairness-aware, which is precisely the role of COBRA.

\begin{table*}[ht]
\centering
\caption{Comparison of COBRA with standard fairness interventions applied at different stages of the distillation pipeline for IPC = 10. Entries report EOD with accuracy in parentheses. RF denotes Fair Extractor, DF denotes Loss Reweighting, and MF denotes Fair Downstream. The best EOD in each row is shown in bold. Missing values correspond to unstable optimization.}
\label{tab_pipeline_ablation}
\resizebox{0.9\textwidth}{!}{
\begin{tabular}{clccccc}
\toprule
\textbf{Dataset} & \textbf{Method} & \textbf{Vanilla DD} & \textbf{RF (Fair Extractor)} & \textbf{DF (Loss Reweighting)} & \textbf{MF (Fair Downstream)} & \textbf{COBRA} \\
\midrule

\multirow{2}{*}{\cifars}
& DM & 56.25 (37.2) & 57.23 (38.9) & 25.63 (44.9) & 50.18 (33.1) & \textbf{20.18} (44.5) \\
& DC & 47.12 (36.7) & 29.40 (33.1) & 27.18 (33.1) & 40.75 (31.7) & \textbf{17.83} (40.9) \\
\midrule

\multirow{2}{*}{\cmnist (\bg)}
& DM & 100.00 (20.3) & 100.00 (20.9) & 13.53 (94.3) & 100.00 (29.0) & \textbf{12.99} (94.5) \\
& DC & 99.67 (67.6) & 31.99 (85.9) & - & 100.00 (31.8) & \textbf{17.51} (91.3) \\
\midrule

\multirow{2}{*}{\bffhq}
& DM & 44.80 (65.1) & 43.50 (64.4) & \textbf{14.10} (66.0) & 37.90 (60.6) & 15.73 (69.5) \\
& DC & 48.80 (61.3) & 36.50 (60.6) & 39.00 (62.98) & \textbf{30.20} (57.45) & 32.87 (65.2) \\
\bottomrule
\end{tabular}
}
\end{table*}

\subsection{Isolating Bias Amplification via Semantic-Preserving Separation}
\label{sec:semantic_gap_ablation}

In our initial analysis of group separation (Appendix~\ref{app:conflict_pattern} and In Section~\ref{sec:imblanceexp}), we found that pronounced representational divergence between subgroups coincides with a sharp rise in EOD. Nonetheless, to conclusively demonstrate that this bias amplification stems directly from the distillation aggregation mechanism rather than from a mere weakening of the underlying class-discriminative signal, we need to disentangle representational distance from overall task difficulty.

To achieve this, we design a \emph{Semantic-Preserving GAP} experiment. Instead of applying destructive image-space corruptions, we artificially induce representational separation by adding a constant, group-specific orthogonal offset vector to the input space. Specifically, for a given image $x$ belonging to a minority subgroup $a$, the modified input becomes $x' = x + \gamma \cdot \mathbf{v}_a$, where $\mathbf{v}_a$ is a constant color-channel shift, and $\gamma \in \{1, 2, 3, 4\}$ acts as our separation multiplier (GAP). 

Because $\mathbf{v}_a$ is a constant shift, it does not destroy the original semantic features required for class prediction. Consequently, a model trained on the FULL dataset can easily learn to ignore this offset. Table \ref{tab:semantic_gap} presents the results of this controlled ablation on CIFAR10-S (IPC=10). This controlled setting reveals three key findings:

\begin{itemize}
    \item \textbf{FULL Baseline Stability:} As the GAP increases, the FULL baseline maintains a highly stable accuracy ($\sim$72\%) and a constant EOD ($\sim$51-52\%). This confirms that the task itself does not become inherently harder or more biased as the geometric separation increases.
    \item \textbf{Vanilla DD Amplification:} Despite the task difficulty remaining constant, Vanilla DD suffers from bias amplification. As the groups move further apart, the synthetic target is pulled increasingly toward the unshifted majority, causing Vanilla DD's EOD to degrade (climbing to 68.2\% at GAP=4) and its accuracy to drop.
    \item \textbf{COBRA Mitigation and Regularization:} COBRA successfully mitigates this geometric failure. By aligning the synthetic data to a subgroup-agnostic barycenter, COBRA not only prevents the amplification seen in Vanilla DD but acts as a strong fairness regularizer. It achieves an EOD of 13.3\% to 29.3\% while outperforming Vanilla DD in accuracy by nearly 10\% across all GAP levels.
\end{itemize}

\begin{table}[h]
\centering
\caption{Isolating the effect of representational separation (GAP) using a semantic-preserving orthogonal offset on CIFAR10-S (IPC=10). Unlike destructive corruptions, the FULL baseline performance remains stable, proving the EOD degradation in Vanilla DD is an artifact of the distillation process. COBRA suppresses this amplification and improves baseline fairness. Values are reported as EOD (Acc).}
\label{tab:semantic_gap}
\resizebox{0.5\columnwidth}{!}{
\begin{tabular}{c ccc}
\toprule
\textbf{Semantic GAP} & \textbf{FULL Baseline} & \textbf{Vanilla DM} & \textbf{DM + COBRA} \\ \midrule
1 & 50.90 (72.15) & 61.00 (37.38) & \textbf{16.23} (45.1) \\
2 & 52.10 (71.9) & 62.97 (35.0) & \textbf{13.30} (44.1) \\
3 & 52.20 (72.4) & 63.73 (33.1) & \textbf{26.10} (42.8) \\
4 & 50.60 (71.6) & 68.20 (32.9) & \textbf{29.33} (41.5) \\
\bottomrule
\end{tabular}
}
\end{table}


\section{Visualization Results}
\label{app:visaul}
We provide visualizations at \IPC = 10 on different datasets.



\begin{figure*}[h]
\centering
\begin{subfigure}[t]{0.49\textwidth}
  \centering
  \includegraphics[width=\linewidth]{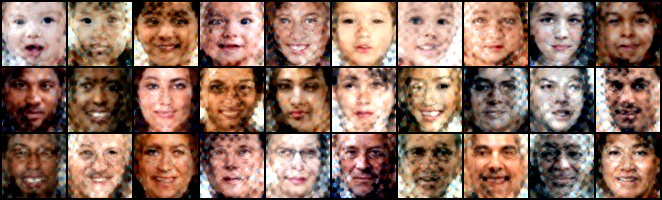}
  \caption{Vanilla DD}
\end{subfigure}\hfill
\begin{subfigure}[t]{0.49\textwidth}
  \centering
  \includegraphics[width=\linewidth]{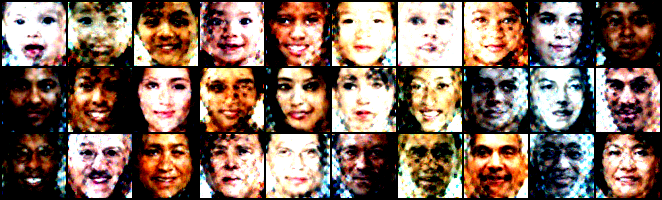}
  \caption{COBRA}
\end{subfigure}
\caption{UTKFace (DM, IPC $=10$). Panel (a) is Vanilla DD and panel (b) is COBRA.}
\label{fig:vis_dm_utkface_10ipc}
\end{figure*}

\begin{figure*}[h!]
\centering
\begin{subfigure}[t]{0.49\textwidth}
  \centering
  \includegraphics[width=\linewidth]{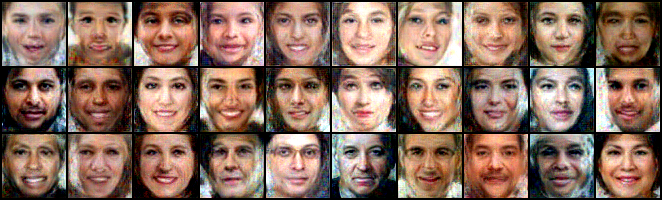}
  \caption{Vanilla DD}
\end{subfigure}\hfill
\begin{subfigure}[t]{0.49\textwidth}
  \centering
  \includegraphics[width=\linewidth]{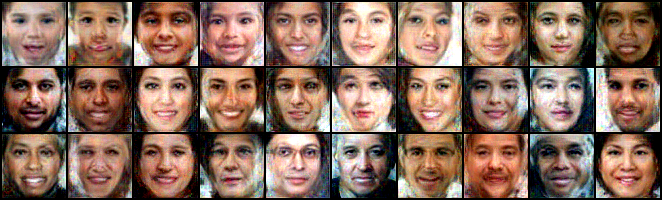}
  \caption{COBRA}
\end{subfigure}
\caption{UTKFace (DC, IPC $=10$). Panel (a) is Vanilla DD and panel (b) is COBRA.}
\label{fig:vis_dc_utkface_10ipc}
\end{figure*}

\begin{figure*}[h!]
\centering
\begin{subfigure}[t]{0.49\textwidth}
  \centering
  \includegraphics[width=\linewidth]{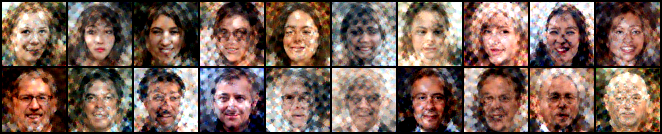}
  \caption{Vanilla DD}
\end{subfigure}\hfill
\begin{subfigure}[t]{0.49\textwidth}
  \centering
  \includegraphics[width=\linewidth]{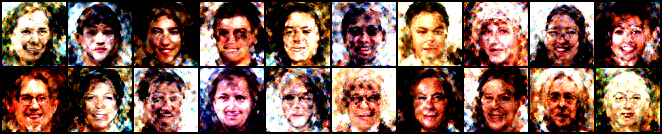}
  \caption{COBRA}
\end{subfigure}
\caption{BFFHQ (DM, IPC $=10$). Panel (a) is Vanilla DD and panel (b) is COBRA.}
\label{fig:vis_dm_bffhq_10ipc}
\end{figure*}

\begin{figure*}[b!]
\centering
\begin{subfigure}[t]{0.49\textwidth}
  \centering
  \includegraphics[width=\linewidth]{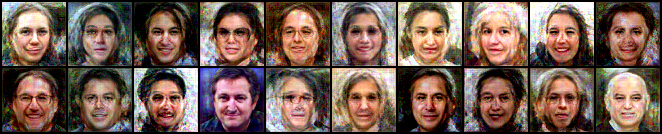}
  \caption{Vanilla DD}
\end{subfigure}\hfill
\begin{subfigure}[t]{0.49\textwidth}
  \centering
  \includegraphics[width=\linewidth]{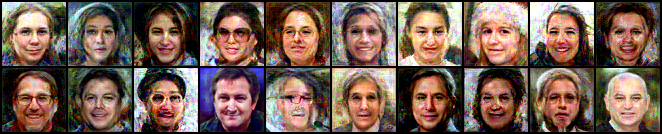}
  \caption{COBRA}
\end{subfigure}
\caption{BFFHQ (DC, IPC $=10$). Panel (a) is Vanilla DD and panel (b) is COBRA.}
\label{fig:vis_dc_bffhq_10ipc}
\end{figure*}

\begin{figure*}[t!]
\centering
\begin{subfigure}[t]{0.49\textwidth}
  \centering
  \includegraphics[width=\linewidth]{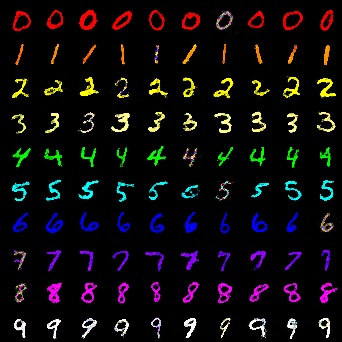}
  \caption{Vanilla DD}
\end{subfigure}\hfill
\begin{subfigure}[t]{0.49\textwidth}
  \centering
  \includegraphics[width=\linewidth]{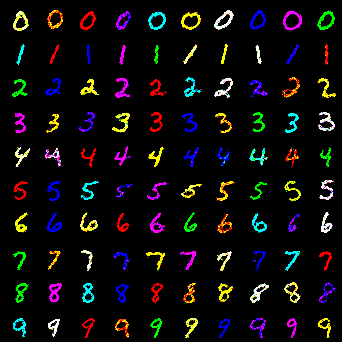}
  \caption{COBRA}
\end{subfigure}
\caption{Colored MNIST foreground (DM, IPC $=10$). Panel (a) is Vanilla DD and panel (b) is COBRA.}
\label{fig:vis_dm_cmnist_fg_10ipc}
\end{figure*}

\begin{figure*}[b!]
\centering
\begin{subfigure}[t]{0.49\textwidth}
  \centering
  \includegraphics[width=\linewidth]{visuals/vis_DM_Colored_MNIST_background_ConvNet_10ipc_nofair.png}
  \caption{Vanilla DD}
\end{subfigure}\hfill
\begin{subfigure}[t]{0.49\textwidth}
  \centering
  \includegraphics[width=\linewidth]{visuals/vis_DM_Colored_MNIST_background_ConvNet_10ipc_cobra.png}
  \caption{COBRA}
\end{subfigure}
\caption{Colored MNIST background (DM, IPC $=10$). Panel (a) is Vanilla DD and panel (b) is COBRA.}
\label{fig:vis_dm_cmnist_bg_10ipc}
\end{figure*}

\begin{figure*}[t!]
\centering
\begin{subfigure}[t]{0.49\textwidth}
  \centering
  \includegraphics[width=\linewidth]{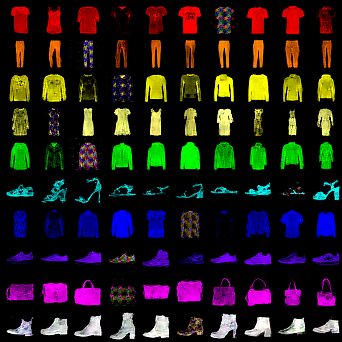}
  \caption{Vanilla DD}
\end{subfigure}\hfill
\begin{subfigure}[t]{0.49\textwidth}
  \centering
  \includegraphics[width=\linewidth]{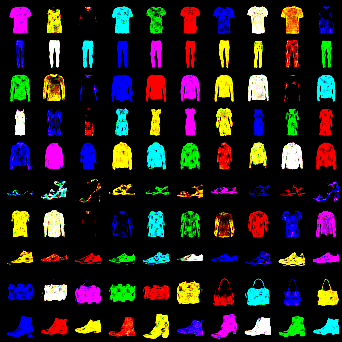}
  \caption{COBRA}
\end{subfigure}
\caption{Colored FashionMNIST foreground (DM, IPC $=10$). Panel (a) is Vanilla DD and panel (b) is COBRA.}
\label{fig:vis_dm_cfm_fq_10ipc}
\end{figure*}

\begin{figure*}[b!]
\centering
\begin{subfigure}[t]{0.49\textwidth}
  \centering
  \includegraphics[width=\linewidth]{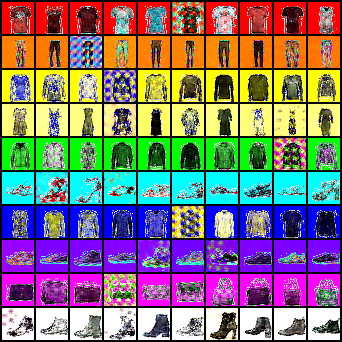}
  \caption{Vanilla DD}
\end{subfigure}\hfill
\begin{subfigure}[t]{0.49\textwidth}
  \centering
  \includegraphics[width=\linewidth]{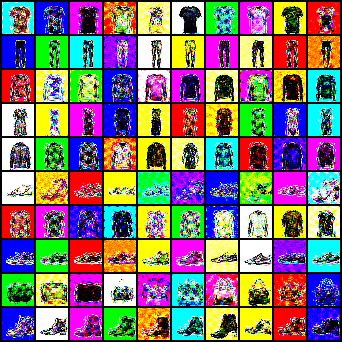}
  \caption{COBRA}
\end{subfigure}
\caption{Colored FashionMNIST background (DM, IPC $=10$). Panel (a) is Vanilla DD and panel (b) is COBRA.}
\label{fig:vis_dm_cfm_bg_10ipc}
\end{figure*}


\end{document}